\newtheorem{theorem}{Theorem}
\newtheorem{definition}{Definition}
\theoremstyle{definition}  
\newtheorem{lemma}{Lemma}
\newtheorem{conjecture}{Conjecture}
\newtheorem{corollary}{Corollary}
\newtheorem{proposition}{Proposition}
\xpatchcmd{\proof}{\itshape}{\normalfont\proofnameformat}{}{}
\newcommand{\proofnameformat}{\bfseries}
\newcommand{\pref}[1]{\prettyref{#1}}
\newcommand{\pfref}[1]{Proof of \prettyref{#1}}
\newcommand{\savehyperref}[2]{\texorpdfstring{\hyperref[#1]{#2}}{#2}}
\DeclarePairedDelimiter{\abs}{\lvert}{\rvert} %
\DeclarePairedDelimiter{\brk}{[}{]}
\DeclarePairedDelimiter{\crl}{\{}{\}}
\DeclarePairedDelimiter{\prn}{(}{)}
\DeclarePairedDelimiter{\nrm}{\|}{\|}
\DeclarePairedDelimiter{\tri}{\langle}{\rangle}
\DeclarePairedDelimiter{\ceil}{\lceil}{\rceil}
\let\Pr\undefined
\DeclareMathOperator{\En}{\mathbb{E}}
\newcommand{\Ep}{\wt{\bbE}}
\DeclareMathOperator*{\Eh}{\widehat{\mathbb{E}}}
\DeclareMathOperator{\Pr}{Pr}
\DeclareMathOperator*{\argmin}{arg\,min} 
\DeclareMathOperator*{\argmax}{arg\,max}
\newcommand{\ls}{\ell}
\newcommand{\ind}{\mathbbm{1}}    
\newcommand{\pmo}{\crl*{\pm{}1}}
\newcommand{\eps}{\epsilon}
\newcommand{\veps}{\varepsilon}
\newcommand{\ldef}{\vcentcolon=}
\newcommand{\wt}[1]{\widetilde{#1}}
\newcommand{\wh}[1]{\widehat{#1}}
\def\ddefloop#1{\ifx\ddefloop#1\else\ddef{#1}\expandafter\ddefloop\fi}
\def\ddef#1{\expandafter\def\csname bb#1\endcsname{\ensuremath{\mathbb{#1}}}}
\def\ddefloop#1{\ifx\ddefloop#1\else\ddef{#1}\expandafter\ddefloop\fi}
\def\ddef#1{\expandafter\def\csname b#1\endcsname{\ensuremath{\mathbf{#1}}}}
\def\ddef#1{\expandafter\def\csname c#1\endcsname{\ensuremath{\mathcal{#1}}}}
\def\ddef#1{\expandafter\def\csname h#1\endcsname{\ensuremath{\widehat{#1}}}}
\def\ddef#1{\expandafter\def\csname hc#1\endcsname{\ensuremath{\widehat{\mathcal{#1}}}}}
\def\ddef#1{\expandafter\def\csname t#1\endcsname{\ensuremath{\widetilde{#1}}}}
\def\ddef#1{\expandafter\def\csname tc#1\endcsname{\ensuremath{\widetilde{\mathcal{#1}}}}}
\newcommand{\Holder}{H{\"o}lder}
\let\wt\undefined
\newcommand{\wt}[1]{\widetilde{#1}}
\newcommand{\W}{\mathcal{W}}
\newcommand{\grad}{\nabla}
\newcommand{\tens}{\otimes{}}
\newcommand{\trn}{\intercal}
\newcommand{\rank}{\mathrm{rank}}
\renewcommand{\trn}{\dagger}
\newcommand{\inj}{\mathsf{inj}}
\newcommand{\op}{\mathsf{op}}
\newcommand{\nuc}{\mathsf{nuc}}
\newcommand{\injt}{\wt{\mathsf{inj}}}
\newcommand{\nuct}{\wt{\mathsf{nuc}}}
\newcommand{\vecone}{x}
\newcommand{\vectwo}{y}
\newcommand{\vecthree}{z}
\newcommand{\QTperp}{\cQ_{T^{\perp}}}
\newcommand{\QTpara}{\cQ_{T^{\para}}}
\newcommand{\Ot}{\wt{O}}
\newcommand{\approxleq}{\lesssim}
\newcommand{\entails}{\vDash}
\newcommand{\provable}{\vdash}
\newcommand{\midsem}{\mathbin{;}}
\renewcommand{\Pr}{\bbP}
\newcommand{\para}{\parallel}
\renewcommand{\dim}{\mathrm{dim}\,}
\renewcommand{\trn}{\top}
\newcommand{\flatten}{\mathord{\flat}}
\newcommand{\RTthree}{(\bbR^{d})^{\tens{}3}}
\newcommand{\dataop}{\mathscr{X}_n}
\renewcommand{\wr}{\mathrm{wr}}
\newcommand{\wo}{\mathrm{w/o}}
\newcommand{\dwr}{\cD^{n}_{\wr}}
\newcommand{\dwo}{\cD^{n}_{\wo}}
\newcommand{\sosabbrev}{SoS\xspace}
\newcommand{\sos}{\sosabbrev}
\newcommand{\soslong}{sum-of-squares\xspace}
\newcommand{\specnorm}{\mathsf{op}}
\newcommand{\frobnorm}{F}
\newcommand{\poprisk}{L_{\cD}}
\newcommand{\emprisk}{\wh{L}_{n}}
\newcommand{\term}{\wh{T}_{n}}
\newcommand{\ts}{T^{\star}}
\renewcommand{\Eh}{\wh{\En}_n}
\newcommand{\propone}{\savehyperref{thm:generic_offset}{Property 1}\xspace}
\newcommand{\proptwo}{\savehyperref{thm:generic_offset}{Property 2}\xspace}
\newcommand{\propthree}{\savehyperref{thm:generic_offset}{Property 3}\xspace}
\newcommand{\xor}{3-XOR\xspace}
\renewcommand{\paragraph}[1]{\par\textbf{#1}\hspace{5pt}}
\title{\huge Sum-of-squares meets square loss: \\ Fast rates for agnostic tensor completion}
\author{
\begin{tabular}{c}
{\Large Dylan J. Foster~~~~~~~~~~Andrej Risteski}\\
Massachusetts Institute of Technology\\
{\small\texttt{\{dylanf,risteski\}@mit.edu}}
\end{tabular}
}
\date{}
\begin{document}

\maketitle

\begin{abstract}%
We study tensor completion in the \emph{agnostic setting}. In the
classical tensor completion problem, we receive $n$ entries of an
unknown rank-$r$ tensor and wish to exactly complete the remaining entries. In agnostic tensor completion, we make \emph{no assumption} on the
rank of the unknown tensor, but attempt to \emph{predict} unknown
entries as well as the best rank-$r$ tensor.

For agnostic learning of third-order tensors with the square loss, we
give the first polynomial time algorithm that obtains a ``fast''
(i.e., $O(1/n)$-type) rate improving over the rate obtained by
reduction to matrix completion. Our prediction error rate to compete
with the best $d\times{}d\times{}d$ tensor of rank-$r$ is
$\wt{O}(r^{2}d^{3/2}/n)$. We also obtain an exact oracle inequality that trades off estimation and approximation error.

Our algorithm is based on the degree-six sum-of-squares relaxation of the tensor nuclear norm. The
key feature of our analysis is to show that a certain characterization for the
subgradient of the tensor nuclear norm can be encoded in the
sum-of-squares proof system. This unlocks the standard toolbox for
localization of empirical processes under the square loss, and allows
us to establish restricted eigenvalue-type guarantees for various tensor
regression models, with tensor completion as a special case. The new analysis of the relaxation complements
\cite{barak2016noisy}, who gave slow rates for agnostic tensor
completion, and \cite{potechin2017exact}, who gave exact recovery
guarantees for the noiseless setting. Our techniques are
user-friendly, and we anticipate that they will find use elsewhere. 

\end{abstract}


\section{Introduction}
\label{sec:introduction}

Recovering structured mathematical objects from partial measurements is a fundamental task in machine learning and statistical inference. One important example, which has been a mainstay of modern research in machine learning and high-dimensional statistics, is \emph{matrix completion}. Here, we receive $n$ entries from an unknown $d\times{}d$ matrix, and the goal is to complete the remaining entries when $n$ is as small is possible. The key structural assumption that enables recovery when $n\ll{}d^{2}$ is that the underlying matrix is \emph{low-rank}. A celebrated line of research on matrix completion \citep{srebro2005rank,candes2009exact,candes2010power,keshavan2010matrix,gross2011recovering,recht2011simpler} has culminated in the following guarantee: to \emph{exactly} recover an incoherent rank-$r$ matrix, $n=\wt{O}(rd)$ uniformly sampled entries suffice.

While low-rank matrix completion has seen successful application across many problem domains (most famously in the context of the \emph{Netflix Problem}), for many tasks it is natural to consider not just pairwise interactions but higher-order interactions, leading to the problem of \emph{tensor completion}. Tensor completion poses significant computational hurdles compared to the matrix case, but in an impressive recent work, \cite{potechin2017exact} provide an efficient algorithm based on the sum-of-squares hierarchy that exactly recovers a $d\times{}d\times{}d$ tensor of rank-$r$ with incoherent and orthogonal components from $\wt{O}(rd^{3/2})$ measurements. While this undershoots the optimal statistical rate of $\Ot(rd)$, there is evidence that this is optimal amongst polynomial time algorithms under certain average-case hardness assumptions \citep{barak2016noisy}.

In real-world applications, the assumption that the underlying tensor is truly low-rank may be too strong, and model misspecification is unavoidable. This is the main thrust of \emph{agnostic learning} \citep{haussler1992decision,kearns1994toward} which, rather than attempting to recover an unknown model in some class, attempts to \emph{predict} as well as the best model. The aim of this paper is to develop guarantees for \emph{agnostic tensor completion}: predicting as well as the best low rank tensor, even when exact recovery is impossible. 

We work in the following \emph{agnostic tensor regression} model, which captures tensor completion as a special case: we receive examples $(X_1,Y_1,),\ldots,(X_n,Y_n)$ i.i.d. from an unknown distribution $\cD$, where each instance $X_t$ is a $d\times{}d\times{}d$ tensor and $Y_t$ is a real-valued response. Letting $\tri*{\cdot,\cdot}$ denote the usual inner product, we measure predictive performance of a given $d\times{}d\times{}d$ tensor $T$ via its \emph{square loss risk} $\poprisk(T)=\En_{(X,Y)\sim{}\cD}\prn*{\tri{T,X}-Y}^{2}$. Our goal is to use the samples to produce a predictor $\term$ that enjoy low \emph{excess risk}:
\begin{equation}
\label{eq:excess_risk}
\poprisk(\term) - \inf_{T : \text{ rank-$r$}}\poprisk(T) \leq{} \veps(n,r,d),
\end{equation}
where the bound $\veps(n,r,d)$ converges to zero as $n\to\infty$. When the observations $X$ are uniformly distributed indicators (i.e. $X=e_{i}\tens{}e_j\tens{}e_k$, where $(i,j,k)$ is uniform) this recovers the usual measurement model for tensor completion. If the model is \emph{well-specified} in the sense that $Y=\tri*{X,T^{\star}}+\xi$, where $T^{\star}$ is a low-rank tensor and $\En\brk*{\xi\mid{}X}=0$, then low excess risk implies approximate recovery of $T^{\star}$. In general, the guarantee \pref{eq:excess_risk} is interesting because it implies non-trivial predictive performance even in the presence of severe model misspecification.\vspace{5pt}

In the matrix case, agnostic excess risk guarantees of the type in \pref{eq:excess_risk} were characterized by \cite{koltchinskii2011nuclear}. There it was shown that to compete with the best rank-$r$ matrix with bounded entries, empirical risk minimization with nuclear norm penalization obtains a \emph{fast rate} of the form $\veps(n,r,d) = O\prn*{rd\log{}d/n}$, and also showed that this is optimal.\footnote{Interestingly, their results also show that incoherence---usually taken as necessary in matrix completion for positive results ---is not necessary to obtain prediction error bounds; boundedness suffices.} The nomenclature fast rate is intended to contrast the $1/n$ dependency on $n$ with \emph{slow rates}, which have a $1/\sqrt{n}$ dependency on $n$, and which are typically much simpler to obtain (see \cite{gaiffas2011sharp} for slow rates in the matrix setting). \vspace{3pt}

The results of \cite{koltchinskii2011nuclear} leverage strong understanding of the (matrix) nuclear norm, namely matrix concentration and decomposability/subgradient properties. In this paper, we tackle the following questions:
\begin{itemize} 
\item Can we give similar guarantees for agnostic \emph{tensor} completion? 
\item Can we obtain fast rates while at the same time relaxing the strong statistical assumptions (low rank observations, incoherence) needed for exact recovery?
\item What are the best rates we can obtain subject to employing a polynomial-time algorithm? 
\end{itemize}

\subsection{Our contributions}
Our main result is to give the first polynomial time algorithm with a fast $O(1/n)$-type rate for agnostic completion of third order tensors that improves over the rate obtained by the natural reduction to matrix completion. Our main theorem gives excess risk bounds relative to low-rank \emph{orthogonal} tensors, i.e. tensors of the form
\begin{equation}
  \label{eq:orthogonal}
  T = \sum_{i=1}^{r}\lambda_i\cdot{}u_i\tens{}v_i\tens{}w_i,
\end{equation}
where $\nrm*{u_i}_2=\nrm*{v_i}_2=\nrm*{w_i}_2=1$ and
$\crl*{u_i}$ are orthogonal, as are $\crl*{v_i}$ and $\crl*{w_i}$. The result is as follows.
\begin{theorem}[informal]
  \label{thm:tensor_completion_informal}
There is a convex set of $d\times{}d\times{}d$ tensors $\cT$ derived from a sum-of-squares relaxation of the tensor nuclear norm, for which the empirical risk minimizer $\wh{T}_n\ldef\argmin_{T\in\cT}\frac{1}{n}\sum_{t=1}^{n}\prn*{\tri*{T,X_t}-Y_t}^{2}$ can be computed in polynomial time, and guarantees that with probability at least $1-o(1)$,
\begin{equation}
  \label{eq:tensor_completion_informal}
\poprisk(\term) - \inf_{T:\textnormal{ rank-$r$, orthogonal}}\poprisk(T) \leq{}
\wt{O}\prn*{\frac{r^{2}d^{3/2}}{n}}.
\end{equation}
The guarantee applies to both random indicator measurements (tensor completion) and gaussian measurements (tensor compressed sensing).  
\end{theorem}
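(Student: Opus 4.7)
The plan is to establish the fast rate via a localization-style analysis of the offset empirical process under the square loss, using a convex SoS relaxation $\cT$ of the tensor nuclear norm ball that is polynomial-time optimizable (via the degree-six SoS hierarchy) and admits an SoS-certifiable analogue of the standard nuclear-norm subgradient inequality. The key insight is that the only obstruction to lifting the matrix-case fast-rate argument of \citet{koltchinskii2011nuclear} to tensors is that the tensor injective norm (and its dual, the tensor nuclear norm) is NP-hard to compute; replacing both with their SoS pseudo-norm counterparts preserves the relevant chain of inequalities \emph{inside} the SoS proof system, after which the standard empirical-process toolbox applies.

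First I would define the estimator. Let $\cT$ be a sublevel set of the degree-six SoS nuclear norm, calibrated so that every rank-$r$ orthogonal tensor with bounded entries lies in $\cT$, and take $\term = \argmin_{T \in \cT} \frac{1}{n}\sum_{t=1}^n (\tri{T, X_t} - Y_t)^2$, which reduces to a polynomial-size semidefinite program. Next, convexity of $\cT$ and of the square loss give an \emph{offset basic inequality}: writing $\ts$ for the target rank-$r$ orthogonal tensor, $\Delta := \term - \ts$, and $\xi_t := Y_t - \tri{\ts, X_t}$, first-order optimality yields
\begin{align*}
\poprisk(\term) - \poprisk(\ts) \;\lesssim\; \sup_{T \in \cT} \left\{ \frac{2}{n}\sum_{t=1}^n \xi_t \tri{T - \ts, X_t} \;-\; \gamma\, \En_X \tri{T - \ts, X}^2 \right\}
\end{align*}
for a suitable constant $\gamma > 0$. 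Controlling this offset process at rate $\wt{O}(1/n)$ is what produces the fast rate.

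The heart of the argument is a restricted-eigenvalue-type bound on the right-hand side. Dualizing the linear term as $\frac{1}{n}\sum_t \xi_t \tri{\Delta, X_t} = \tri{N_n, \Delta}$ with $N_n := \frac{1}{n}\sum_t \xi_t X_t$, and invoking SoS duality, yields $|\tri{N_n, \Delta}| \leq \|N_n\|_{\text{SoS-inj}} \cdot \|\Delta\|_{\text{SoS-nuc}}$. Concentration for the SoS-injective norm of the random noise tensor, adapted from \citet{barak2016noisy} in both the Gaussian and indicator models, is the source of the $d^{3/2}$ factor. To relate $\|\Delta\|_{\text{SoS-nuc}}$ to the Frobenius norm appearing in the offset penalty I would establish an SoS-certifiable decomposability property: decomposing $\Delta$ along and transverse to the rank-$r$ tangent space of $\ts$, membership $T \in \cT$ forces the transverse component to be dominated (in SoS-nuclear norm) by the tangent component, so that $\|\Delta\|_{\text{SoS-nuc}} \lesssim r \,\|\Delta\|_F$. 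Squaring via AM--GM against $\gamma\, \En_X \tri{\Delta, X}^2$ then produces the claimed $\wt{O}(r^2 d^{3/2}/n)$ rate.

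The main obstacle is the SoS subgradient / decomposability step. In the matrix case the argument uses that $UV^\top$ is an explicit subgradient of the nuclear norm at $U\Sigma V^\top$ together with operator-norm duality; for tensors, neither the subgradient nor the duality is tractable, so they cannot be invoked as a black box. The technical heart of the proof is therefore to encode the subgradient inequality as a low-degree SoS proof in the pseudo-expectation language of the degree-six relaxation---showing that the ``pseudo-subgradient'' object plays the correct role inside the proof system---so that the matrix-case chain of inequalities goes through with only a polynomial loss in $r$. A secondary complication, specific to the indicator (tensor-completion) model, is the boundedness control needed to identify $L_2(\cD_X)$ with the Frobenius norm up to scaling, which I would handle by truncating $\term$ at the known $\ell_\infty$ scale of $\ts$.
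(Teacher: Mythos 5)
Your proposal follows essentially the same architecture as the paper: an SoS nuclear-norm sublevel set as the hypothesis class, the offset/shifted empirical process to get a fast rate, SoS duality (injective vs.\ nuclear pseudonorms) to bound the noise term, an SoS-certifiable subgradient/decomposability lemma at the orthogonal benchmark yielding $\nrm*{\Delta}_{\nuct_4}\lesssim r\nrm*{\Delta}_F$, a restricted-eigenvalue step from the Rademacher/spectral bounds of \citet{barak2016noisy} and \citet{hopkins2015tensor}, and peeling/concentration to finish. The one place where you diverge slightly is the boundedness control in the indicator model: the paper builds the $\ell_\infty$ constraint directly into $\cT$ (keeping the class convex and making the restricted-eigenvalue bound hold uniformly over $\cT-T^\star$), whereas post-hoc truncation of $\term$ as you suggest would require a separate argument that clipping preserves both the SoS-nuclear-norm cone condition and uniformity of the RE estimate; adding the $\ell_\infty$ ball as a constraint is the cleaner route.
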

The full version of the theorem is stated in \pref{sec:main_results}. The result be thought of as a generalization of the agnostic matrix completion results of \cite{koltchinskii2011nuclear} to higher-order tensors. 
We also achieve a more general \emph{exact oracle inequality} that trades off approximation and estimation error. This takes the form
\[
\poprisk(\term) \leq{}  \inf_{T:\textnormal{ orthogonal}}\crl*{\poprisk(T) + 
\wt{O}\prn*{\frac{r^{2}(T)d^{3/2}}{n}}},
\]
where $r(T)$ denotes the rank of $T$.\footnote{Here the word ``exact'' refers to the fact that the leading constant in front of the loss on the right-hand-side is $1$; such guarantees generally do not easily follow from the usual machinery used to analyze well-specified models.}

\paragraph{Algorithm.} Our algorithm is based on the sum-of-squares (\sos) hierarchy of convex relaxations \citep{shor1987approach,parrilo2000structured,lasserre2001global}, applied to the tensor nuclear norm. For $k\in\bbN$, the \soslong hierarchy defines a sequence of outer convex relaxations $\nrm*{\cdot}_{\nuct_k}$ that give an increasingly tight approximation to the (hard to approximate \citep{hillar2013most}) tensor nuclear norm $\nrm*{\cdot}_{\nuc}$ as $k$ is increased. The \sos nuclear norm was previously used in the work of \cite{barak2016noisy}, who gave $O(1/\sqrt{n})$ rates for tensor completion in the agnostic model with absolute loss. Their main contribution was to show how certain spectral bounds arising in 3-SAT refutation \citep{coja2004strong} bound the Rademacher complexity for the unit ball of the degree-six \sos nuclear norm, thereby controlling the usual empirical process via uniform convergence. Our guarantees are based on empirical risk minimization over the (scaled) ball in this norm, specifically the degree-six relaxation $\nrm*{\cdot}_{\nuct_6}$, and our analysis builds on their Rademacher complexity bounds.

\paragraph{Restricted eigenvalue and subgradient lemmas.}
While the $O(1/\sqrt{n})$-type rates provided in \cite{barak2016noisy} are optimal (in terms of $n$ dependence) for generic Lipschitz losses, it is not immediately obvious whether their result can be used to provide $O(1/n)$-type fast rates for strongly convex losses like the square loss. It has long been recognized that to obtain fast rates for prediction with strongly convex losses, more refined control of the empirical process is necessary. In particular, it is well-known that \emph{local} Rademacher complexities and related fixed point complexities characterize the rates for empirical risk minimization in a data-dependent fashion. To bound such localized complexities under convex relaxations, the typical approach is to establish a restricted eigenvalue property for the empirical design matrix \citep{negahban2012unified,bartlett2012l1,lecue2017regularization,lecue2018regularization}. Establishing such guarantees for regularizers such as the $\ls_1$ norm, nuclear norm, and so on is usually done by appealing to properties of the subgradient of the norm and proving that the norm (approximately) decomposes across certain subspaces \citep{negahban2012unified}. The main tool we establish here, which allows us to unlock the full power of localized complexities and establish rates fast rates, is a new guarantee of this type for the \sos nuclear norm. Informally, we show:
\begin{theorem}[informal]
\label{thm:norm_comparison_informal}
Let $T^{\star}$ be a fixed orthogonal tensor, and let $\cT = \{T: \nrm*{T}_{\nuct_6} \leq \nrm*{T^\star}_{\nuct_6} \}$. Then:  
\[
\nrm*{T-T^{\star}}_{\nuct_4}\leq{} O(r(T^{\star}))\cdot\nrm*{T-T^{\star}}_{\frobnorm}\quad\forall{}T\in\cT.
\]
\end{theorem}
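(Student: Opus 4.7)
Write $T^{\star}=\sum_{i=1}^{r}\lambda_i\, u_i\tens v_i\tens w_i$ with the three sets of vectors orthonormal, and let $r=r(T^{\star})$. Define the ``tangent subspace''
\[
\mathsf{T} \ldef \{A\tens v\tens w+u\tens B\tens w+u\tens v\tens C : (A,B,C)\in(\R^{d})^{3}\}\cap \mathrm{span}\text{ adapted to } T^{\star},
\]
i.e.\ the usual tangent space to the orthogonal-rank variety at $T^{\star}$ (tensors that share at least one of the three orthonormal factor systems of $T^{\star}$), together with its orthogonal projection $P_{\mathsf{T}}$ and $P_{\mathsf{T}^{\perp}}=I-P_{\mathsf{T}}$. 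Set $\Delta\ldef T-T^{\star}$. The plan is to mimic the Negahban--Wainwright decomposability proof of restricted strong convexity for the matrix nuclear norm, but executed at the level of the degree-6 \sos proof system.

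\textbf{Step 1 (subgradient lemma, the heart of the argument).} I would first prove the \sos analogue of the classical subdifferential description
$\partial\nrm{T^{\star}}_{\nuc}=\{\sum_i u_i\tens v_i\tens w_i + Z : Z\in \mathsf{T}^{\perp},\ \nrm{Z}_{\inj}\le 1\}$. Concretely, the goal is: for every $Z\in \mathsf{T}^{\perp}$ with $\nrm{Z}_{\injt_6}\le 1$, the tensor $G\ldef \sum_{i}u_i\tens v_i\tens w_i+Z$ satisfies, as a degree-6 \sos inequality in the variables defining $\nuct_6$,
\[
\nrm{T}_{\nuct_6}\ \ge\ \langle G,T\rangle\qquad\forall T,\qquad \nrm{T^{\star}}_{\nuct_6}=\langle G,T^{\star}\rangle.
\]
The first inequality is just $\nuct_6$--$\injt_6$ duality together with the triangle inequality in SoS (since $\nrm{\sum_i u_i\tens v_i\tens w_i}_{\injt_6}\le 1$ admits a trivial SoS certificate because the $u_i,v_i,w_i$ are orthonormal, and this is the one non-routine fact to verify). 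The equality at $T^{\star}$ follows because $\langle Z,T^{\star}\rangle=0$ by definition of $\mathsf{T}^{\perp}$, and $\langle\sum u_i\tens v_i\tens w_i,T^{\star}\rangle=\sum\lambda_i=\nrm{T^{\star}}_{\nuc}=\nrm{T^{\star}}_{\nuct_6}$ (the last equality again using that the orthogonal decomposition gives a tight \sos certificate).

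\textbf{Step 2 (deduce a cone constraint on $\Delta$).} Combining $T\in\cT$ with the subgradient lemma and choosing $Z\in \mathsf{T}^{\perp}$ to align with $P_{\mathsf{T}^{\perp}}\Delta$ (so that $\langle Z,\Delta\rangle = \nrm{P_{\mathsf{T}^{\perp}}\Delta}_{\nuct_6^{\,\mathsf{T}^{\perp}}}$, the $\nuct_6$-norm restricted to the orthogonal complement), I get
\[
0\ \ge\ \langle G,\Delta\rangle\ =\ \Big\langle \textstyle\sum_i u_i\tens v_i\tens w_i,\, P_{\mathsf{T}}\Delta\Big\rangle + \nrm{P_{\mathsf{T}^{\perp}}\Delta}_{\nuct_6},
\]
so $\nrm{P_{\mathsf{T}^{\perp}}\Delta}_{\nuct_6}\le \nrm{\sum u_i\tens v_i\tens w_i}_{\frobnorm}\cdot\nrm{P_{\mathsf{T}}\Delta}_{\frobnorm}= \sqrt{r}\,\nrm{P_{\mathsf{T}}\Delta}_{\frobnorm}$. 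Since $\nuct_4\le\nuct_6$ on any fixed tensor, this bounds $\nrm{P_{\mathsf{T}^{\perp}}\Delta}_{\nuct_4}$ by the same quantity.

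\textbf{Step 3 (control the tangent part and assemble).} On $\mathsf{T}$, the tensor $P_{\mathsf{T}}\Delta$ lives in a subspace of dimension $O(rd)$ whose ``slices'' can each be viewed as a low-rank matrix; in particular a direct computation gives $\nrm{P_{\mathsf{T}}\Delta}_{\nuc}\le O(r)\nrm{P_{\mathsf{T}}\Delta}_{\frobnorm}$ (by decomposing into the three matrix-like pieces, each of rank at most $r$). Since $\nuct_4\le \nuc$ always, this transfers to $\nuct_4$. Applying the triangle inequality
\[
\nrm{\Delta}_{\nuct_4}\le \nrm{P_{\mathsf{T}}\Delta}_{\nuct_4}+\nrm{P_{\mathsf{T}^{\perp}}\Delta}_{\nuct_4}\le O(r)\nrm{P_{\mathsf{T}}\Delta}_{F}+\sqrt{r}\nrm{P_{\mathsf{T}}\Delta}_{F}\le O(r)\nrm{\Delta}_{F},
\]
which is the claim.

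\textbf{Main obstacle.} Steps 2 and 3 are book-keeping once the right framework is in place. The only genuinely novel step is Step 1, i.e.\ showing that the subgradient description of $\nrm{\cdot}_{\nuc}$ at an orthogonal point lifts to the degree-6 \sos proof system. The sub-obstacles are (a) producing an explicit SoS certificate for $\nrm{\sum u_i\tens v_i\tens w_i}_{\injt_6}\le 1$ in terms of the orthonormality of the factor families, and (b) ensuring that the choice of $Z$ dual-aligned with $P_{\mathsf{T}^{\perp}}\Delta$ is itself SoS-certifiable (this is where the degree-4 vs degree-6 asymmetry is forced---the dual witness lives at degree 6, while the conclusion is a degree-4 norm bound).
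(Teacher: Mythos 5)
Your high-level plan matches the paper's route: (i) a decomposition into a ``tangent'' piece constraining at least two of the three modes to the factor subspaces---from the form $A\tens v\tens w + u\tens B\tens w + u\tens v\tens C$ this coincides with the paper's $\QTpara$ of \pref{eq:tensor_subspaces}, not the naive one-mode matrix analogue---together with its complement $\QTperp$; (ii) an SoS subgradient characterization (\pref{thm:sos_subgradient}); (iii) a cone constraint on $\Delta=T-T^{\star}$; and (iv) bounding the tangent contribution by $O(r)\nrm*{\Delta}_{\frobnorm}$ via flattening-rank estimates (\pref{lem:flattening_rank}). Your Steps 2 and 3 track the paper's proof of \pref{thm:sos_norm_comparison} up to constants, and your observation about the degree-$4$/degree-$6$ asymmetry is exactly the $k$ vs.\ $k+2$ structure of \pref{thm:sos_subgradient}.

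The genuine gap is in Step 1, where you have mislocated the difficulty. You assert that the subgradient inequality $\nrm*{T}_{\nuct_6}\ge\tri*{X^{\star}+Z,T}$ follows from $\nuct_6$--$\injt_6$ duality ``together with the triangle inequality in SoS,'' with the only non-routine fact being $\nrm*{X^{\star}}_{\injt_6}\le 1$. Duality requires $\nrm*{X^{\star}+Z}_{\injt_6}\le 1$, but the triangle inequality gives only $\nrm*{X^{\star}+Z}_{\injt_6}\le\nrm*{X^{\star}}_{\injt_6}+\nrm*{Z}_{\injt_6}\le 2$, which is useless (and shrinking $Z$ by any constant still leaves a bound strictly above $1$). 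Even the matrix prototype $\nrm*{UV^{\trn}+X}_{\specnorm}\le 1$ is not a triangle-inequality fact: it holds because $UV^{\trn}$ and $X$ act on orthogonal subspaces so the spectral norms do not add. There is no analogous direct-sum structure for tensor injective norms, and the entire content of the paper's \pref{thm:sos_subgradient} is a degree-$(k+2)$ SoS proof that $\nrm*{X^{\star}+\QTperp(X)}_{\injt_{k+2}}\le 1$ whenever $\nrm*{X}_{\injt_k}\le 1/64$. The essential ingredient, which is absent from your plan, is \pref{lem:trilinear_ps} (from \cite{potechin2017exact}): it certifies $\sum_{i\le r}x_iy_iz_i\le 1$ under unit-sphere constraints with explicit negative ``slack'' terms concentrated on the orthogonal directions ($-\frac14\sum_{i>r}(x_i^2+z_i^2)-\frac18\sum_i\sum_{j\ne i}y_i^2(\cdots)$). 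The proof of \pref{thm:sos_subgradient} then runs a multilinear case analysis, coordinate by coordinate, showing that $\tri*{\QTperp(X),x\tens y\tens z}$ can be absorbed into that slack precisely because $\QTperp$ projects at least two of the three modes onto orthogonal complements. Your sub-obstacle (a) is only a prerequisite for this (it enters via \pref{lem:nuclear_orthogonal}), not a substitute for it, and sub-obstacle (b) does not address it at all.
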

This theorem is a consequence of a more general result we prove in \pref{sec:subgradient}, which gives a characterization of the subgradient of $\nrm*{\cdot}_{\nuct_4}$ at any orthogonal tensor. The basic idea is to show that a characterization for the subgradient of the tensor nuclear norm given in \cite{yuan2016tensor} can be captured in the sum-of-squares proof system. 
The final result (\pref{thm:sos_subgradient}) is fairly user-friendly, and packages the complexity of \sos into a self-contained statement about geometric properties of the norm $\nrm*{\cdot}_{\nuct_4}$. We hope that this will find use in other applications. As one concrete example, in \pref{sec:main_results} we show how the subgradient characterization can also be used to give agnostic fast rates for the problem of \emph{low-rank tensor sensing} with gaussian measurements. Here we also obtain $\wt{O}(r^{2}d^{3/2}/n)$-type fast rates.

\paragraph{Lower bounds.} Lastly, we prove that our results can't be strengthened significantly while still obtaining computationally efficient algorithms. It is straightforward to show that there is an \emph{inefficient} predictor that obtains $O(rd/n)$ excess risk, whereas the dimension scaling in \pref{thm:tensor_completion_informal} is $O(d^{3/2})$. This scaling is shared by the other results based on the \sos nuclear norm \citep{barak2016noisy,potechin2017exact}, and \cite{barak2016noisy} show that finding relaxations for which the Rademacher complexity grows as $o(d^{3/2})$ is at least as hard as refuting random instances of \xor. In \pref{sec:lower_bounds} we give a computational lower bound for agnostic learning that shows that obtaining square loss excess risk scaling as $o(d^{3/2})$ is at least as hard as a certain distinguishing problem for random \xor, sometimes called learning sparse parities with noise.

\subsection{Related work}
Early algorithms for computationally efficient tensor completion relied on \emph{unfolding}: reshaping the tensor into a matrix and applying a matrix completion algorithm \citep{tomioka2011statistical,tomioka2013convex,romera2013new,mu2014square,jain2014provable}.
This approach yields suboptimal results for third-order tensors and other odd-order tensors. For example, for a third-order tensor in $\RTthree$, the most ``balanced'' unfolding of the tensor is a $d \times d^{2}$ matrix, and so directly reducing to an algorithm for agnostic matrix completion (e.g. \cite{koltchinskii2011nuclear}) would yield suboptimal $O(d^{2})$-type sample complexity.

Recent results of \cite{montanari2016spectral}, \cite{potechin2017exact}, and \cite{xia2017statistically} all give sub-$O(d^{2})$ type rates, but apply only to noiseless or well-specified models, and do not obviously extend to the agnostic setting. \cite{potechin2017exact} give \emph{exact completion} in the noiseless case after $\wt{O}(rd^{3/2})$ entries are observed. \cite{montanari2016spectral} show that a refined spectral approach based on unfolding can obtain sub-$O(d^{2})$ rates for prediction error in the noiseless setting, but they obtain a rate of $O(1/n^{2/3})$ that falls short of the $O(1/n)$-type rate we provide. Finally, \cite{xia2017statistically} recently showed that an algorithm based on power iteration provides $O(d/n)$-type rates once $n=\Omega(d^{3/2})$, but their result only applies to well-specified models, and it seems unlikely that this algorithmic approach succeeds in the fully agnostic setting.

Our results build on the seminal work of \cite{barak2016noisy} and \cite{potechin2017exact}, both of which use sum-of-squares to give $O(d^{3/2})$-type guarantees that improve on unfolding. The former was the first paper to study the agnostic setting, and gave slow excess risk guarantees for the absolute loss (i.e., rates growing as $\frac{1}{\sqrt{n}}$). Technically, our results build on their Rademacher complexity bounds for the \sos norms \citep{barak2016noisy}, as well as spectral bounds from \cite{hopkins2015tensor}. Obtaining fast rates, however, requires developing new technical tools and necessitates that we control the subgradient of the \sos norms. Our analysis here builds on ideas used to construct dual certificates for tensor completion in \cite{potechin2017exact}. 

Lastly, we mention that various recent works have begun to explore that power of \sos in other agnostic learning settings. Notably, \cite{klivans2018efficient} provide square loss risk bounds for \sos algorithms for robust regression. To the best of our knowledge our work is the first to provide \emph{fast rates} for \sos algorithms in any agnostic setting.

\subsection{Preliminaries}
\label{sec:preliminaries}
We let $\nrm*{\cdot}_{p}$ denote the $\ls_p$ norm, i.e. if $x\in\bbR^{d}$ is a vector then $\nrm*{x}_{p}=\prn*{\sum_{i=1}^{d}\abs*{x_i}^{p}}^{1/p}$. For a matrix $A$ we let $\nrm*{A}_{\specnorm}$ denote the operator norm/spectral norm and let $\nrm*{A}_{\nuc}$ denote the nuclear norm. For matrices or tensors we let $\nrm*{\cdot}_{F}$ denote the element-wise $\ls_2$ norm. For any norm $\nrm*{\cdot}$ we let $\nrm*{\cdot}_{\star}$ denote the dual.

We use $\wt{O}$ to suppress factors logarithmic in $d$, $r$, and $1/\delta$, where $\delta$ is the failure probability. 

\paragraph{Tensor notation.} The outer product between two vectors $u\in\bbR^{d_1}$ and $v\in\bbR^{d_2}$
is denoted $u\tens{}v$, and belongs to the space $\bbR^{d_1}\tens\bbR^{d_2}$. For a given vector $u$, we
write $u^{\tens{}k} = u\tens\cdots\tens{}u$ ($k$ times),
and likewise define $(\bbR^{d})^{\tens{}k}=\bbR^{d}\tens\cdots\tens\bbR^{d}$. This paper develops algorithms for completion of 3-tensors in $\RTthree$, which we frequently identify with elements of $\bbR^{d}\times{}\bbR^{d}\times{}\bbR^{d}$. In more detail, for a tensor $T\in(\bbR^{d})^{\tens{}3}$, we let
$T_{i,j,k}$ be such that $T = \sum_{i,j,k}T_{i,j,k}\cdot{}e_i\tens{}e_j\tens{}e_k$. For a pair of matrices $A,B$, we let $A\tens{}B$ denote the Kronecker
product, which obeys the relation $(A\tens{}B)(C\tens{}D)=(AC)\tens{}(BD)$. Given matrices $A_1,A_2,A_3$ and tensor $T=\sum_{i}u_i\tens{}v_i\tens{}w_i$, we define $(A_1\tens{}A_2\tens{}A_3)T = \sum_{i}(A_{1}u_i)\tens(A_{2}v_i)\tens{}(A_3w_i)$. Whenever $T$ is an orthogonal tensor of the form \pref{eq:orthogonal}, we let $r(T)$ denote the rank.


\section{Subgradient of the \soslong nuclear norm}
\label{sec:subgradient}
\subsection{Tensor nuclear norm and \soslong relaxation}
 In the classical results on matrix completion
 \citep{candes2009exact,candes2010power,recht2011simpler,koltchinskii2011nuclear},
 a central object is the \emph{nuclear} norm, which arises as a convex
 relaxation of the rank. A natural candidate to develop efficient
 algorithms for tensor completion is the \emph{tensor nuclear norm}
 (e.g. \cite{hillar2013most,friedland2018nuclear}), which may be defined via
\begin{equation}
\label{eq:tensor_nuclear_norm}
\nrm*{T}_{\nuc} = \inf\crl*{\sum_{i=1}^{r}\abs*{\lambda_i}:W=\sum_{i=1}^{r}\lambda_iu_i\tens{}v_i\tens{}w_i,\; \nrm*{u_i}_{2}=\nrm*{v_i}_2=\nrm*{w_i}_2=1,\; r\in\bbN},
\end{equation}
and whose dual is the \emph{injective tensor norm} $\nrm*{X}_{\inj}=\sup_{\nrm*{x}_{2}=\nrm*{y}_2=\nrm*{z}_2=1}\tri*{X,\vecone\tens\vectwo\tens\vecthree}$. Unfortunately, the optimization problem here---maximizing a degree-three polynomial over the sphere---is intractable in general. 

The approach we take, following \cite{barak2016noisy} and \cite{potechin2017exact}, is to employ the sum-of-squares hierarchy of convex relaxations \citep{shor1987approach,parrilo2000structured,lasserre2001global} which provides an increasingly tight sequence of relaxations of the optimization problem in \pref{eq:tensor_nuclear_norm}. To describe the relaxations, we require the notion of a \emph{pseudodistribution}.
\begin{definition}[Pseudodistribution (\cite{barak2016sumofsquares})]
Let $\mu:\bbR^{d}\to{}\bbR$ be a finitely supported function and let $\Ep_{\mu}f=\sum_{x\in\mathrm{supp}(\mu)}\mu(x)f(x)$. $\mu$ is said to be a degree-$k$ pseudodistribution if $\Ep_{\mu}1=1$ and $\Ep_{\mu}f^{2}\geq{}0$ for all polynomials $f$ of degree at most $k/2$.

\end{definition}
Given a degree-$s$ pseudodistribution $\mu$ and sytem of polynomial inequalities
$\cA = \crl*{f_1\geq{}0,\ldots,f_m\geq{}0}\cup\crl*{g_{1}=0,\ldots,g_{m'}=0}$, we write
$\mu\entails\cA$ if for all $S\subseteq{}\brk*{m}$ and all sum-of-squares
polynomials $h$ such that $\deg{}h +
\sum_{i\in{}S}\deg{}f_i\leq{}s$,
\begin{equation}
\Ep_{\mu}\brk*{h\prod_{i\in{}S}f_i} \geq{} 0,
\end{equation}
and $\Ep_{\mu}\brk*{g_iq}=0$ for all $i\in\brk*{m'}$ and all polynomials $q$ such that $\deg(g_iq)\leq{}\ls$.

With the pseudodistribution formalism, we define the degree-$k$ \sos injective norm as follows:\footnote{There are minor technical differences (e.g., scaling) between the \sos injective/nuclear norm definitions we use and those of \cite{barak2016noisy,potechin2017exact}.}
\begin{equation}
\label{eq:sos_injective}
  \nrm*{X}_{\injt_{k}}=\sup_{\substack{\mu\,\,\,\text{degree-$k$} \\
      \Ep_{\mu}\nrm*{x}_{2}^{2}=\Ep_{\mu}\nrm*{y}_{2}^{2}=\Ep_{\mu}\nrm*{z}_{2}^{2}=1}}\tri*{X,\Ep_{\mu}\brk*{\vecone\tens\vectwo\tens\vecthree}}.
\end{equation}
The degree-k \sos nuclear norm is simply defined as the dual:
$\nrm*{T}_{\nuct_k}\ldef{}\sup_{X:\nrm*{X}_{\injt_k}\leq{}1}\tri*{X,T}$. It
can equivalently be expressed by defining\footnote{This equivalence is proven in \pref{app:preliminaries} for completeness.}
\begin{equation}
\cK_{k} = \crl*{T\in\bbR^{d\times{}d\times{}d}\mid \exists{}\mu\, \text{degree-$k$}\,  : \Ep_{\mu}\brk*{\vecone\tens\vectwo\tens\vecthree}=T,\;\Ep_{\mu}\nrm*{x}_{2}^{2}=\Ep_{\mu}\nrm*{y}_{2}^{2}=\Ep_{\mu}\nrm*{z}_{2}^{2}=1 
},\label{eq:sos_nuclear}
\end{equation}
and then
\begin{equation}
\label{eq:sos_nuclear}
\nrm*{T}_{\nuct_{k}}=\inf\crl*{\alpha\mid{}T\in\cK_k/\alpha}.
\end{equation}
The \sos nuclear norm and injective norm can be evaluated in
$d^{O(k)}$ time \citep{grotschel1981ellipsoid,barak2016sumofsquares}. Moreover, the norms obey the ordering $\nrm*{T}_{\nuc}\geq\ldots\geq{}\nrm*{T}_{\nuct_k}\geq\ldots\geq{}\nrm*{T}_{\nuct_2}\geq\nrm*{T}_{\frobnorm}$, and likewise $\nrm*{X}_{\inj}\leq\ldots\leq{}\nrm*{X}_{\injt_k}\leq\ldots\leq{}\nrm*{X}_{\injt_2}\leq\nrm*{X}_{\frobnorm}$.

\subsection{Subgradient and norm compatibility}
\label{sec:subgradient_norm}
Our algorithms are based on empirical risk minimization with \sos
nuclear norm constraints, i.e., algorithms that minimize the empirical
loss over the set $\cT=\crl*{T\in\RTthree\mid\nrm*{T}_{\nuct_k}\leq{}\tau}$ for appropriate choice of $\tau$ and $k$. The technical challenge to analyzing this type of relaxation is that even if measurements are realized by a rank-$r$ tensor, there is nothing that guarantees a-priori that the tensor $\term$ output by the algorithm is itself low rank. 
Letting $T$ be a rank-$r$ orthogonal tensor, our main technical result here shows that if $\tau=\nrm*{T}_{\nuct_{k}}$, then for all elements of $T'\in\cT$, the error $\Delta=T'-T$ is ``approximately low rank'' in a sense that suffices to guarantee good generalization performance.
\begin{theorem}[Formal version of \pref{thm:norm_comparison_informal}]
\label{thm:sos_norm_comparison}
Let $k\geq{}4$. Let $T\in\RTthree$ be an orthogonal rank-$r$ tensor,
and let $T'\in\RTthree$ be an arbitrary tensor with $\Delta\ldef{}T'-T$. If
$\nrm*{T'}_{\nuct_{k+2}}\leq{}\nrm*{T}_{\nuct_{k+2}}$, then
\begin{equation}
  \label{eq:sos_to_frobenius}
  \nrm*{\Delta}_{\nuct_k} \leq{} 68r\cdot\nrm*{\Delta}_{\frobnorm}.
\end{equation}
\end{theorem}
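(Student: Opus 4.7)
The plan is to mimic the classical decomposability-based compatibility argument for nuclear-norm regularization (Negahban et al., Koltchinskii) in the SoS hierarchy, using the subgradient characterization provided by \pref{thm:sos_subgradient}. Fix an orthogonal decomposition $T=\sum_{i=1}^{r}\lambda_{i}u_{i}\tens v_{i}\tens w_{i}$ and let $P_{U}=\sum_{i}u_{i}u_{i}^{\top}$, with analogous definitions of $P_{V},P_{W}$ and their complements $P_{U}^{\perp}=I-P_{U}$, etc. Form the sign tensor $S=\sum_{i}\sgn(\lambda_{i})u_{i}\tens v_{i}\tens w_{i}$, for which $\|S\|_{F}=\sqrt{r}$. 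Split $\Delta=\Delta^{\perp}+\Delta^{\parallel}$ with $\Delta^{\perp}=(P_{U}^{\perp}\tens P_{V}^{\perp}\tens P_{W}^{\perp})\Delta$ and $\Delta^{\parallel}$ the Frobenius-orthogonal complement; note $\langle S,\Delta^{\perp}\rangle=0$.

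Step 1 (off-tangent bound). By \pref{thm:sos_subgradient}, for every $R$ supported in $(P_{U}^{\perp}\tens P_{V}^{\perp}\tens P_{W}^{\perp})(\bbR^{d})^{\tens 3}$ with $\|R\|_{\injt_{k+2}}\le 1$, the tensor $S+R$ lies in $\partial\|\cdot\|_{\nuct_{k+2}}(T)$. Convexity and the hypothesis $\|T'\|_{\nuct_{k+2}}\le\|T\|_{\nuct_{k+2}}$ force $\langle S+R,\Delta\rangle\le 0$, so $\langle R,\Delta^{\perp}\rangle\le-\langle S,\Delta^{\parallel}\rangle\le\sqrt{r}\,\|\Delta\|_{F}$ by Cauchy-Schwarz. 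Since $\|\cdot\|_{\injt_{k+2}}\le\|\cdot\|_{\injt_{k}}$, any $R$ in the same subspace with $\|R\|_{\injt_{k}}\le 1$ is admissible; SoS injective/nuclear duality at level $k$ then gives $\|\Delta^{\perp}\|_{\nuct_{k}}\le\sqrt{r}\,\|\Delta\|_{F}$.

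Step 2 (on-tangent bound). Decompose $\Delta^{\parallel}$ into the $7$ pairwise Frobenius-orthogonal pieces $\Delta_{Q}$ indexed by the non-empty subsets $Q\subseteq\{U,V,W\}$ of modes that are projected by $P$ (as opposed to $P^{\perp}$). For each piece, expanding in the orthonormal frames on the ``projected'' modes expresses $\Delta_{Q}$ as a sum of at most $r^{|Q|}$ sub-tensors of order $3-|Q|$. Combining the triangle inequality, the SoS-provable sub-multiplicativity identity ``substituting a fixed unit vector into one mode reduces $\|\cdot\|_{\nuct_{k}}$ to $\|\cdot\|_{\nuct_{k-2}}$ of the remaining tensor,'' and Cauchy-Schwarz across the $r^{|Q|}$ sub-tensors, one obtains $\|\Delta_{Q}\|_{\nuct_{k}}\le O(r)\,\|\Delta_{Q}\|_{F}$. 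Summing over $Q$ and using the Pythagorean identity $\sum_{Q}\|\Delta_{Q}\|_{F}^{2}=\|\Delta^{\parallel}\|_{F}^{2}\le\|\Delta\|_{F}^{2}$ yields $\|\Delta^{\parallel}\|_{\nuct_{k}}\le O(r)\,\|\Delta\|_{F}$, and a final triangle inequality with Step 1 gives the claimed $\|\Delta\|_{\nuct_{k}}\le O(r)\|\Delta\|_{F}$, tracked to $68r$.

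The main obstacle is Step 2. In the non-SoS setting, each tangent piece is a low-rank tensor (rank $\le r^{|Q|}$), so the elementary inequality $\|\cdot\|_{\nuc}\le\sqrt{\text{rank}}\cdot\|\cdot\|_{F}$ finishes the job immediately; in SoS, however, ``rank'' is replaced by properties of pseudodistributions that must be certified by hand. The key substep---formally, that substituting a fixed unit vector into one mode of a degree-$k$ pseudodistribution leaves a valid degree-$(k-2)$ pseudodistribution on the remaining modes---is where the two-degree slack $k\to k+2$ in the theorem's hypothesis is paid: preserving the required moment constraints after a mode restriction costs two degrees in the SoS hierarchy.
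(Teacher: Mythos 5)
Your decomposition is incorrect in a way that breaks the argument. You split $\Delta$ into the single all-perpendicular piece $\Delta^{\perp}=(\cP_{\bbU^\perp}\tens\cP_{\bbV^\perp}\tens\cP_{\bbW^\perp})\Delta$ and put the remaining seven pieces into $\Delta^{\parallel}$, then try to bound every one of those seven pieces by $O(r)\nrm{\cdot}_F$ via a rank/flattening argument in Step~2. But the three pieces indexed by $|Q|=1$ --- for instance $(\cP_{\bbU}\tens\cP_{\bbV^\perp}\tens\cP_{\bbW^\perp})\Delta$ --- only have multilinear rank at most $r$ along a \emph{single} mode; the other two modes are unrestricted. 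Your own outline expresses such a piece as $\sum_{i=1}^{r}u_i\tens M_i$ with $M_i\in\bbV^{\perp}\tens\bbW^{\perp}$ arbitrary matrices, so the best you get from the matrix inequality $\nrm{M_i}_{\nuc}\le\sqrt{\rank(M_i)}\nrm{M_i}_F$, Cauchy-Schwarz across the $r$ terms, and any flattening (cf.\ \pref{lem:flattening_rank}) is $\nrm{\Delta_Q}_{\nuct_k}\le O(\sqrt{rd})\nrm{\Delta_Q}_F$, not $O(r)$. This is the whole reason \pref{eq:tensor_subspaces} defines $\QTperp$ as the sum of \emph{four} projectors (the one with three complements plus the three with exactly two complements), not just the all-perpendicular one: a piece is ``approximately low rank'' only if it is projected onto an $r$-dimensional subspace along at least \emph{two} modes. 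The pieces with only one projected mode must be absorbed into the subgradient bucket.

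This links to a second problem in Step~1: \pref{thm:sos_subgradient} gives subgradients of the form $X^{\star}+\QTperp(X)$ with the full four-piece $\QTperp$, not $X^{\star}+\cQ^{0}_{T^\perp}(X)$. By restricting $R$ to live in the all-perpendicular subspace you only control $\nrm{\cQ^{0}_{T^\perp}(\Delta)}_{\nuct_k}$, leaving the $|Q|=1$ pieces uncontrolled by \emph{either} step. The fix is exactly the paper's proof: take $X$ to realize the supremum defining $\nrm{\QTperp(\Delta)}_{\nuct_k}$ (rescaled by $1/64$, since \pref{thm:sos_subgradient} requires $\nrm{X}_{\injt_k}\le 1/64$, not $\nrm{X}_{\injt_{k+2}}\le 1$ as you wrote), obtain $\nrm{\QTperp(\Delta)}_{\nuct_k}\le 64\nrm{\QTpara^{0}(\Delta)}_{\nuct_k}$ from \pref{eq:sos_subgradient}, and then apply \pref{lem:flattening_rank} only to the four $\QTpara^{i}$ pieces, each of which does have two $r$-dimensional modes. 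Your explanation that the $k\to k+2$ slack ``pays for'' a mode-substitution step is also not what happens: in the paper's argument the degree increase arises inside the proof of \pref{thm:sos_subgradient}, from multiplying a degree-$k$ \sos inequality by $y_i^{2}$ when summing the coordinate-wise bounds; no mode restriction is used in the proof of the comparison theorem itself.
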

In the analysis of nuclear norm regularization for matrix completion---and more broadly, throughout high-dimensional statistics---the key tool used to establish guarantees along the lines of \pref{eq:sos_to_frobenius} is a characterization for the \emph{subgradient} for the nuclear norm, and the related notion of \emph{decomposability} \citep{negahban2012unified,negahban2012restricted}. It is known clasically \citep{watson1992characterization} that for any matrix $W$ with singular value decomposition $W=U\Sigma{}V^{\trn}$,
\begin{equation}
\label{eq:matrix_subgradient}
\partial\nrm*{W}_{\nuc}=\crl*{UV^{\trn} + X\mid{}U^{\trn{}}X=XV=0,\nrm*{X}_{\specnorm}\leq{}1}.
\end{equation}
Our approach in the remainder of this section is to establish a similar result for the subgradient of the \sos nuclear norm $\nrm*{\cdot}_{\nuct_k}$ at any orthogonal tensor $T$. From here \pref{thm:sos_norm_comparison} will quickly follow.

As a first step, we need to define certain subspaces and projection operators associated with $T$.
\paragraph{Subspaces}
For the remainder of the section we let $T$ be a rank-$r$ orthogonal
tensor as in \pref{eq:orthogonal}. Define $\bbU=\mathrm{span}\crl*{u_i}$, $\bbV=\mathrm{span}\crl*{v_i}$, and $\bbW=\mathrm{span}\crl*{w_i}$, and note that each subspace has dimension at most $r$. Let $\cP_{\bbU}:\bbR^{d}\to\bbR^{d}$ and
$\cP_{\bbV}:\bbR^{d}\to\bbR^{d}$ and $\cP_{\bbW}:\bbR^{d}\to\bbR^{d}$
be orthogonal projections onto these subspaces and
$\cP_{\bbU^{\perp}}$, $\cP_{\bbV^{\perp}}$ and $\cP_{\bbW^{\perp}}$ be
the projections onto the respective orthogonal complements. We define
projection operators from $\RTthree$ to $\RTthree$ for all $2^{3}$ combinations of
subspaces:\footnote{Following the convention in
  \pref{sec:preliminaries}, if $X=\sum_{i}a_i\tens{}b_i\tens{}c_i$,
  then
  $(\cP_{\bbU}\tens\cP_{\bbV}\tens\cP_{\bbW})X=\sum_{i}(\cP_{\bbU}a_i)\tens{}(\cP_{\bbV}b_i)\tens{}(\cP_{\bbW}c_i)$. It is also useful to note that for any $x,y,z$ we have $\tri*{(\cP_{\bbU}\tens\cP_{\bbV}\tens\cP_{\bbW}),x\tens{}y\tens{}z}=\tri*{X,(\cP_{\bbU}x)\tens{}(\cP_{\bbV}y)\tens{}(\cP_{\bbW}z)}$.}
\begin{equation}
\label{eq:all_subspaces}
\begin{aligned}
&\cQ^{0}_{T^{\parallel}} = \cP_{\bbU}\tens\cP_{\bbV}\tens\cP_{\bbW},
&&\cQ^{0}_{T^{\perp}} = \cP_{\bbU^{\perp}}\tens\cP_{\bbV^{\perp}}\tens\cP_{\bbW^{\perp}},\\
&\cQ^{1}_{T^{\parallel}} = \cP_{\bbU^{\perp}}\tens\cP_{\bbV}\tens\cP_{\bbW},
&&\cQ^{1}_{T^{\perp}} = \cP_{\bbU}\tens\cP_{\bbV^{\perp}}\tens\cP_{\bbW^{\perp}},\\
&\cQ^{2}_{T^{\parallel}} = \cP_{\bbU}\tens\cP_{\bbV^{\perp}}\tens\cP_{\bbW},
&&\cQ^{2}_{T^{\perp}} = \cP_{\bbU^{\perp}}\tens\cP_{\bbV}\tens\cP_{\bbW^{\perp}},\\
&\cQ^{3}_{T^{\parallel}} = \cP_{\bbU}\tens\cP_{\bbV}\tens\cP_{\bbW^{\perp}},
&&\cQ^{3}_{T^{\perp}} = \cP_{\bbU^{\perp}}\tens\cP_{\bbV^{\perp}}\tens\cP_{\bbW}.
\end{aligned}
\end{equation}
Lastly, we define two subspaces that play a central role in our analysis:
\begin{equation}
\label{eq:tensor_subspaces}
\cQ_{T^{\para}} = \cQ_{T^{\para}}^{0} + \cQ_{T^{\para}}^{1}  + \cQ_{T^{\para}}^{2} + \cQ_{T^{\para}}^{3},\quad\text{and}\quad
\cQ_{T^{\perp}} = \cQ_{T^{\perp}}^{0} + \cQ_{T^{\perp}}^{1}  + \cQ_{T^{\perp}}^{2} + \cQ_{T^{\perp}}^{3}.
\end{equation}
One can verify via multilinearity that $X = \cQ_{T{\para}}(X) +
\cQ_{T^{\perp}}(X)$ for all $X\in\RTthree$, and that any tensor in the range of
$\cQ_{T^{\para}}$ spans at most $r$ dimensions along at least two
modes. We can now state our main theorem for the subgradient.
\begin{theorem}[Subgradient of \sos nuclear norm]
\label{thm:sos_subgradient}
Let $k\geq{}4$, and let $T=\sum_{i=1}^{r}\lambda_i\cdot{}u_i\tens{}v_i\tens{}w_i$ be an
orthogonal rank-$r$ tensor. Define
$X^{\star}=\sum_{i=1}^{r}u_i\tens{}v_i\tens{}w_i$. Then for all $X\in\RTthree$ with
$\nrm*{X}_{\injt_k}\leq{}1/64$, and for all $T'\in\RTthree$, it holds that
  \begin{equation}
    \label{eq:sos_subgradient}
    \nrm*{T'}_{\nuct_{k+2}} \geq{}  \nrm*{T}_{\nuct_{k+2}} + \tri*{X^{\star} + \QTperp(X),T'-T}.
  \end{equation}
\end{theorem}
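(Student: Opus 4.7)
I would prove this via the standard convex-duality recipe: to establish the subgradient inequality for $\nrm*{\cdot}_{\nuct_{k+2}}$ at $T$, it suffices to exhibit $Y \in \RTthree$ such that (i) $\nrm*{Y}_{\injt_{k+2}} \leq 1$ and (ii) $\tri*{Y, T} \geq \nrm*{T}_{\nuct_{k+2}}$. From these, the dual characterization of $\nrm*{\cdot}_{\nuct_{k+2}}$ yields $\nrm*{T'}_{\nuct_{k+2}} \geq \tri*{Y, T'} = \tri*{Y, T} + \tri*{Y, T'-T} \geq \nrm*{T}_{\nuct_{k+2}} + \tri*{Y, T'-T}$, which is exactly the claim. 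The target is $Y = X^\star + \QTperp(X)$.

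\textbf{Alignment (property (ii)).} After absorbing signs into the $u_i$ to make $\lambda_i \geq 0$, the orthogonality of $\{u_i\}, \{v_i\}, \{w_i\}$ gives $\tri*{X^\star, T} = \sum_i \lambda_i = \nrm*{T}_{\nuc}$. Since every summand of $T$ lies in the range of $\QTpara$, we have $\tri*{\QTperp(X), T} = \tri*{X, \QTperp(T)} = 0$. Thus $\tri*{Y, T} = \nrm*{T}_{\nuc} \geq \nrm*{T}_{\nuct_{k+2}}$, using the norm ordering recalled in \pref{sec:subgradient_norm}.

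\textbf{Dual-norm bound (property (i)).} By the triangle inequality $\nrm*{X^\star + \QTperp(X)}_{\injt_{k+2}} \leq \nrm*{X^\star}_{\injt_{k+2}} + \nrm*{\QTperp(X)}_{\injt_{k+2}}$; I aim for each piece to be at most $1/2$. For the first, fix any degree-$(k+2)$ pseudodistribution $\mu$ with $\Ep_\mu \nrm*{x}_2^2 = \Ep_\mu \nrm*{y}_2^2 = \Ep_\mu \nrm*{z}_2^2 = 1$; writing $\Ep_\mu\tri*{X^\star, x \tens y \tens z} = \sum_i \Ep_\mu[\tri*{u_i,x}\tri*{v_i,y}\tri*{w_i,z}]$, iterated SoS Cauchy--Schwarz combined with the SoS-provable identities $\sum_i \tri*{u_i,x}^2 \leq \nrm*{x}_2^2$ (valid because $\nrm*{x}_2^2 - \nrm*{\cP_\bbU x}_2^2 = \nrm*{\cP_{\bbU^\perp} x}_2^2$ is a sum of squares) and its analogues for $v_i, w_i$ bounds this expression by a small constant. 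For the second, expand $\tri*{\QTperp(X), x \tens y \tens z}$ into the four terms arising from \pref{eq:all_subspaces}, each of the form $\tri*{X, \cP_{S_1}x \tens \cP_{S_2}y \tens \cP_{S_3}z}$ with at least two projections landing in orthogonal-complement subspaces. To compare these to $\nrm*{X}_{\injt_k}$, starting from $\mu$ I would construct an auxiliary degree-$k$ pseudodistribution that witnesses the injective SoS norm of $X$ on the projected vectors---reweighting $\mu$ by the SoS polynomial $\nrm*{\cP_{S_1}x}_2^2 \nrm*{\cP_{S_2}y}_2^2 \nrm*{\cP_{S_3}z}_2^2$ and spending the two extra degrees to absorb the renormalization---giving a bound of $O(1) \cdot \nrm*{X}_{\injt_k} \leq O(1/64) \leq 1/2$.

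\textbf{Main obstacle.} The hardest step is the pseudodistribution reweighting for $\QTperp$: classically one would substitute $x \mapsto \cP_{S_1}x/\nrm*{\cP_{S_1}x}_2$, but division is not SoS-friendly, and the whole argument must remain inside the proof system. The reweighting-by-SoS-polynomial workaround is what forces the increment from degree $k$ to $k+2$, and verifying that the reweighted object still satisfies the pseudodistribution axioms with the correct unit-norm second moments---while keeping all constants small enough to fit within the $1/64$ budget---is the delicate part. I expect the formal argument to lean on the dual-certificate constructions developed in \cite{potechin2017exact}, repackaged geometrically so the resulting statement depends only on the projections $\cP_\bbU, \cP_\bbV, \cP_\bbW$ and the scalar $\nrm*{X}_{\injt_k}$.
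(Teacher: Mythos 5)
The central step of your plan---splitting via the triangle inequality $\nrm*{X^{\star}+\QTperp(X)}_{\injt_{k+2}}\leq\nrm*{X^{\star}}_{\injt_{k+2}}+\nrm*{\QTperp(X)}_{\injt_{k+2}}$ and aiming for each piece to be at most $1/2$---cannot work, because $\nrm*{X^{\star}}_{\injt_{k+2}}=1$ exactly. Indeed, $X^{\star}$ is itself an orthogonal tensor with all coefficients equal to $1$, so \pref{lem:nuclear_orthogonal} gives $\nrm*{X^{\star}}_{\injt_k}=\max_i|\lambda_i|=1$ for every $k\geq 4$ (and the evaluation $\tri*{X^{\star},u_1\tens v_1\tens w_1}=1$ already shows the injective norm is at least $1$). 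The triangle inequality therefore only yields $\nrm*{X^{\star}+\QTperp(X)}_{\injt_{k+2}}\leq 1+\nrm*{\QTperp(X)}_{\injt_{k+2}}$, which overshoots $1$ whenever $\QTperp(X)\neq 0$. The $1/64$ budget does control $\nrm*{\QTperp(X)}_{\injt_{k+2}}$ (via \pref{lem:projection_injective} it is at most $4/64$), but that does nothing to pull the sum below $1$.

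What the proof actually needs---and what your proposal is missing---is a \emph{joint} bound on $\tri*{X^{\star}+\QTperp(X),x\tens y\tens z}$ in which the SoS upper bound on the $X^{\star}$ term supplies strictly negative slack to absorb the $\QTperp(X)$ contribution. The relevant tool is a refined trilinear SoS inequality (\pref{lem:trilinear_ps}/\pref{cor:trilinear_ps}, from \cite{potechin2017exact}) that shows, under $\cA=\{\nrm*{x}_2^2=1,\nrm*{y}_2^2=1,\nrm*{z}_2^2=1\}$,
\[
\cA\provable_6 \sum_{i=1}^{r}x_iy_iz_i\leq 1-\frac{1}{4}\sum_{i=r+1}^{d}(x_i^2+z_i^2)-\frac{1}{8}\sum_{i=1}^d\sum_{j\neq i}y_i^2(x_j^2+y_j^2+z_j^2).
\]
The negative terms are then matched against an SoS upper bound on $\tri*{\QTperp(X),x\tens y\tens z}$ of the form $O(\alpha)\sum_{i>r}(x_i^2+z_i^2)+O(\alpha)\sum_i\sum_{j\neq i}y_i^2(x_j^2+y_j^2+z_j^2)$, obtained by multiplying by $y_i^2$ (using $\nrm*{y}_2^2=1$), exploiting the multilinear kernel structure of $\QTperp$ (\pref{prop:qperp_kernel}) to kill several terms when $i\leq r$, and invoking the Cauchy--Schwarz-type injective-norm bound \pref{prop:injective_ub} on each remaining piece. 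Cancellation then goes through precisely when $\alpha\leq 1/64$. Separately, your reweighting sketch has a degree mismatch: weighting by a degree-6 polynomial such as $\nrm*{\cP_{S_1}x}_2^2\nrm*{\cP_{S_2}y}_2^2\nrm*{\cP_{S_3}z}_2^2$ cannot be absorbed within a degree increment of only $2$; the proof's multiplier is the degree-$2$ polynomial $y_i^2$, which is what makes $k\to k+2$ the right bookkeeping.
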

In other words, \pref{thm:sos_subgradient} states that
\[
\crl*{X^{\star} + \QTperp(X)\mid{}\nrm*{X}_{\injt_k}\leq{}1/64}
\subset \partial\nrm*{T}_{\nuct_{k+2}},
\]
which we may view as a generalization of the matrix
subgradient characterization
\pref{eq:matrix_subgradient}. \cite{yuan2016tensor} proved a similar
result for the (exact) tensor nuclear norm. Our proof of
\pref{thm:sos_subgradient} 
shows that the essence of their proof can be captured by a low-degree sum-of-squares proof. It builds on the approach introduced in \cite{potechin2017exact} to provide dual certificates for exact tensor completion.

With the subgradient lemma in hand, the path to the ``approximately low rank'' result of \pref{thm:sos_norm_comparison} is clear. Suppose that $\nrm*{T'}_{\nuct_{k+2}}\leq{}\nrm*{T}_{\nuct_{k+2}}$, and let $\Delta=T'-T$. By appropriately choosing the dual tensor $X$ in \pref{eq:sos_subgradient}, we can show that
\begin{equation}
  \label{eq:sos_cone}
  \nrm*{\QTperp(\Delta)}_{\nuct_k} \leq{} 64\cdot\nrm*{\QTpara^{0}(\Delta)}_{\nuct_{k}}.
\end{equation}
which implies that $\nrm*{\Delta}_{\nuct_k}\leq{}65\nrm*{\QTpara(\Delta)}_{\nuct_k}$
The final result follows because
$\nrm*{\QTpara(\Delta)}_{\nuct_k}\approxleq{}r\nrm*{\Delta}_{\frobnorm}$,
which is a consequence of the earlier remark that all the projections
used to define $\QTpara$ in \pref{eq:tensor_subspaces} project into
$r$ dimensions along at least two modes. This full argument is in
\pref{app:subgradient}.


\section{Agnostic tensor completion}
\label{sec:main_results}

We now state our main learning results, which use the \sos nuclear norm to give efficient algorithms with fast rates for agnostic tensor completion and tensor sensing. For both results we receive observations $(X_1,Y_1),\ldots,(X_n,Y_n)$ i.i.d. according to an unknown distribution $\cD$, where $(X_t,Y_t)\in\RTthree\times{}\bbR$, and the goal is to obtain low square loss excess risk in the sense of equation \pref{eq:excess_risk}. We let $\Eh$ denote the empirical expectation,  which is uniform over the examples $\crl*{(X_t,Y_t)}_{t=1}^{n}$.

\subsection{Tensor completion}
\label{sec:tensor_completion}
In the tensor completion model we take observations $X_t$ to be of the
form $X_{t}=e_{i_t}\tens{}e_{j_t}\tens{}e_{k_t}$, where
$(i_t,j_t,k_t)\in\brk*{d}^{3}$ is selected uniformly at
random.\footnote{Note that we sample entries with replacement, whereas
  related works use without-replacement sampling
  \citep{barak2016noisy,potechin2017exact}.} In the noiseless or well-specified
setting, this corresponds to observing a single entry of an unknown
tensor, but we make no assumption on the responses $Y_t$ other than
boundedness. The main theorem is as follows. 
\begin{theorem}[Formal version of \pref{thm:tensor_completion_informal}]
  \label{thm:tensor_completion}
 Let $\tau>0$ be fixed. Suppose that $\abs*{Y}\leq{}R$
almost surely, and let $\term$ be the empirical risk minimizer over
the tensor class $\cT=\crl*{T\in\RTthree\mid{}\nrm*{T}_{\nuct_6}\leq{}\tau,\nrm*{T}_{\infty}\leq{}R}$. Then for all $n\leq{}d^{3}$, with probability at least $1-\delta$, 
\begin{equation}
  \label{eq:tensor_completion_risk}
\poprisk(\term) - \poprisk(\ts) \leq{}
O\prn*{
\frac{R^{2}r^{2}(\ts) d^{3/2}\log^{6}d}{n}+\frac{R^{2}\log(1/\delta)}{n} 
}
\end{equation}
for all orthogonal tensors $T^{\star}\in\RTthree$ with $\nrm*{T^{\star}}_{\nuct_6}=\tau$
and $\nrm*{T^{\star}}_{\infty}\leq{}R$.
\end{theorem}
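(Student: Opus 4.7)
The plan is to instantiate the generic fast-rate template (\propone--\propthree) using \pref{thm:sos_norm_comparison} as the restricted-eigenvalue input and the sum-of-squares spectral bounds of \cite{barak2016noisy,hopkins2015tensor} to close the empirical process. At a high level this is the tensor analogue of the matrix nuclear-norm fast-rate argument of \cite{koltchinskii2011nuclear}: the matrix subgradient characterization \pref{eq:matrix_subgradient} and the spectral norm of a random sparse matrix are replaced by \pref{thm:sos_subgradient}/\pref{thm:sos_norm_comparison} and by the SoS injective norm of a random sparse tensor, respectively.

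Step~1 is a standard excess-risk decomposition for square loss. Since $\ts\in\cT$ by the hypotheses $\nrm*{\ts}_{\nuct_6}=\tau$ and $\nrm*{\ts}_{\infty}\leq R$, and $\term$ is the ERM, the basic inequality $\emprisk(\term)\leq\emprisk(\ts)$ combined with $(\tri*{T,X}-Y)^2-(\tri*{\ts,X}-Y)^2 = \tri*{T-\ts,X}^2+2(\tri*{\ts,X}-Y)\tri*{T-\ts,X}$ rewrites the excess risk as an offset empirical process in $\Delta:=\term-\ts$,
\[
\poprisk(\term)-\poprisk(\ts) \;\lesssim\; \sup_{T\in\cT}\crl*{(\En-\Eh)\brk*{\ell_T-\ell_{\ts}} - c\,\En\tri*{T-\ts,X}^2} + \text{Bernstein residuals}.
\]
This is the usual offset/localized formulation that produces $O(1/n)$-type rates whenever the linearized empirical process grows linearly (rather than absolutely) in $\nrm*{\Delta}$.

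Step~2 applies \pref{thm:sos_norm_comparison} to confine $\Delta$ to the approximately low-rank cone. Because $\nrm*{\term}_{\nuct_6}\leq\tau=\nrm*{\ts}_{\nuct_6}$, the theorem yields $\nrm*{\Delta}_{\nuct_4}\leq 68\,r(\ts)\,\nrm*{\Delta}_{\frobnorm}$. Step~3 then dualizes the linearized empirical process as
\[
\frac{1}{n}\sum_{t=1}^{n}\veps_t\tri*{\Delta,X_t} = \tri*{\Delta,\tfrac{1}{n}\sum_{t=1}^{n}\veps_t X_t} \leq \nrm*{\Delta}_{\nuct_4}\cdot \nrm*{\tfrac{1}{n}\sum_{t=1}^{n}\veps_t X_t}_{\injt_4},
\]
and controls the second factor with high probability using the Barak--Moitra spectral analysis under indicator sampling (at degree $4$ via the monotonicity $\nrm*{\cdot}_{\injt_4}\leq\nrm*{\cdot}_{\injt_6}$, or by directly rerunning the 3-SAT refutation spectral argument of \cite{coja2004strong}). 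Combined with the $O(R)$ contraction for the square loss and the norm comparison, this gives a localized complexity of the shape needed for the fast-rate fixed point. Matching against the quadratic offset $c\,\En\tri*{\Delta,X}^2=c\nrm*{\Delta}_{\frobnorm}^2/d^3$ (valid for tensor completion measurements) and solving the fixed point in $\nrm*{\Delta}_{L_2(\cD)}$ produces the advertised main term $\wt{O}(R^2 r^2(\ts) d^{3/2}/n)$, while a standard Bernstein bound on the scalar cross-term $(\En-\Eh)[(\tri*{\ts,X}-Y)\tri*{\Delta,X}]$ supplies the $R^2\log(1/\delta)/n$ high-probability slack.

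The main obstacle is the restricted-eigenvalue step itself: without structural control on $\Delta$, the deviation between $\Eh\tri*{\Delta,X}^2$ and $\En\tri*{\Delta,X}^2 = \nrm*{\Delta}_{\frobnorm}^2/d^3$ can be as large as $\nrm*{\Delta}_{\infty}^2$, which is far too crude for a fast rate. \pref{thm:sos_norm_comparison} is exactly what allows this step to close, by confining $\Delta$ to a cone on which the empirical-to-population transfer is itself controlled by the same SoS Rademacher complexity bounded in Step~3; the dual-pairing/spectral argument is effectively deployed twice (once for the noise cross-term and once for the quadratic deviation), and the offset constant $c$ must be chosen large enough to absorb the second use. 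A secondary subtlety is obtaining the spectral bound uniformly in $\nrm*{\Delta}_{\frobnorm}$, which I would handle by a standard dyadic peeling argument costing at most an extra $\log d$ factor.
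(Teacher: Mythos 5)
Your proposal matches the paper's proof in structure and in all essential ingredients: the offset/localized decomposition of excess risk (the paper packages this as \pref{thm:generic_offset}), \pref{thm:sos_norm_comparison} to get $\nrm*{\Delta}_{\nuct_4}\lesssim r(\ts)\nrm*{\Delta}_\frobnorm$ on the constraint cone, the Barak--Moitra degree-4 spectral bound $\En\En_\eps\nrm*{\sum_t\eps_tX_t}_{\injt_4}$ to control both the noise cross-term and the empirical-to-population quadratic deviation, Talagrand/Bernstein concentration supplying the $R^2\log(1/\delta)/n$ slack, and a dyadic peeling argument to make the restricted eigenvalue bound uniform. You even correctly flagged that the SoS Rademacher bound is deployed twice and that the offset constant must absorb the second use; this is exactly what the paper's generic template is designed to do. One small slip: the monotonicity of the SoS injective norms goes the other way, $\nrm*{\cdot}_{\injt_4}\geq\nrm*{\cdot}_{\injt_6}$ (lower degree is a looser relaxation), so one cannot infer a degree-4 bound from a degree-6 one — but this is immaterial since the Barak--Moitra spectral argument already operates at degree 4, which is what the $\nrm*{\cdot}_{\nuct_4}$ pairing requires.
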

Let us spend a moment interpreting the theorem. First, let $T^{\star}=\argmin_{T:\textnormal{rank-$r$}}\poprisk(T)$. Then, by setting $\tau=\nrm*{T^{\star}}_{\nuct_6}$, we are guaranteed that with probability at least $1-\delta$,
\begin{equation}
\label{eq:tensor_completion_excess_risk}
\poprisk(\term) - \inf_{T:\textnormal{rank-$r$}}\poprisk(T) \leq{}
O\prn*{
\frac{R^{2}r^{2} d^{3/2}\log^{6}d}{n}+\frac{R^{2}\log(1/\delta)}{n} 
}.
\end{equation}
More generally, \pref{eq:tensor_completion_risk} implies an \emph{exact oracle inequality} \citep{koltchinskii2011nuclear,gaiffas2011sharp}: With probability at least $1-\delta$, we have
\[
\poprisk(\term) \leq{} \inf_{T:\nrm*{T}_{\nuct_6}=\tau} \crl*{\poprisk(T) + 
O\prn*{
\frac{R^{2}r^{2}(T) d^{3/2}\log^{6}d}{n}+\frac{R^{2}\log(1/\delta)}{n} 
}
}.
\]

Let us compare the result in detail with \cite{barak2016noisy}, which is the only other polynomial time agnostic tensor completion result with sub-$O(d^{2})$ sample complexity. For general noise distributions, their analysis gives an excess risk bound that scales as $\wt{O}\prn*{\sqrt{\frac{r^{2}d^{3/2}}{n}}
}$. The bound in \pref{eq:tensor_completion_risk} matches this dependence on all the parameters, but is squared, and is thus always tighter. The result has excess risk against arbitrary tensors, however, while our bound requires orthogonality of the benchmark. We do not know whether this restriction can be removed.\vspace{3pt}

Interestingly, while \cite{barak2016noisy} give excess risk bounds against incoherent tensors, we do not require incoherence. This is because we control the complexity of the benchmark through the $\ls_{\infty}$ norm of the entries rather than through the Frobenius norm; this parallels the situation in the matrix setting \citep{koltchinskii2011nuclear,gaiffas2011sharp}. Applying the spectral bounds of \cite{barak2016noisy} without incoherence requires slightly tightening the analysis.\vspace{3pt}

Lastly, we remark that the guarantee \pref{eq:tensor_completion_excess_risk} requires setting the parameter $\tau$ based on the norm of the unknown benchmark $T^{\star}$. It is likely that this can be relaxed by appealing to penalized empirical risk minimization rather than empirical risk minimization as in \cite{koltchinskii2011nuclear}, but we leave this for future work.

\subsection{Tensor sensing}
In this setting we give agnostic learning guarantees for a setting we
call \emph{tensor sensing}, which generalizes the matrix compressed
sensing setup studied in \cite{negahban2011estimation}. We assume that
observations $X\in\RTthree$ have independent entries from
$\cN(0\midsem{}1)$ and---as in the tensor completion setting---allow
$Y\in\bbR$ to be arbitrary. For each tensor $T$, define $R(T)$ to be
the smallest almost-sure bound on $\abs*{\tri*{T,X}-Y}$. As in the tensor
completion setup, the main result is a fast rate with $\wt{O}\prn*{\frac{r^{2}d^{3/2}}{n}}$-type scaling.
\begin{theorem}
  \label{thm:tensor_sensing}
Let $\tau>0$ be fixed, and let $\term$ be the empirical risk minimizer over
the tensor class $\cT=\crl*{T\in\RTthree\mid{}\nrm*{T}_{\nuct_6}\leq{}\tau}$.
Then with probability at least $1-\delta$,
\begin{equation}
  \label{eq:tensor_sensing_risk}
\poprisk(\term) - \poprisk(\ts) \leq{} O\prn*{
\frac{R^{2}(T^{\star})r^{2}(\ts) d^{3/2}\log^{3}(d/\delta)}{n}
}.
\end{equation}
for all orthogonal tensors $T^{\star}\in\RTthree$ for which
$n=\Omega(r^{2}(\ts) d^{3/2}\log^{1/2}d+\log(1/\delta))$ and $\nrm*{T^{\star}}_{\nuct_6}=\tau$.
\end{theorem}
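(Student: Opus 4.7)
The plan is to run the standard restricted-eigenvalue analysis from high-dimensional regression, using \pref{thm:sos_norm_comparison} to place the ERM error in a low-complexity cone defined by the degree-four sum-of-squares nuclear norm, and then exploiting Gaussianity of $X$ to make the remaining geometric estimates routine. First I would write the basic inequality $\wh L_n(\term)\le\wh L_n(T^\star)$, which after expanding the squares and denoting $\Delta\ldef\term-T^\star$ and $\xi_t\ldef Y_t-\tri*{T^\star,X_t}$ rearranges to
\[
\tfrac{1}{n}\sum_{t=1}^n \tri*{\Delta,X_t}^2 \;\le\; 2\cdot\tfrac{1}{n}\sum_{t=1}^n \xi_t\tri*{\Delta,X_t}.
\]
Both $\term$ and $T^\star$ are feasible, so $\nrm*{\term}_{\nuct_6}\le\nrm*{T^\star}_{\nuct_6}$, and \pref{thm:sos_norm_comparison} (with $k=4$) gives $\nrm*{\Delta}_{\nuct_4}\le 68\,r(T^\star)\nrm*{\Delta}_{\frobnorm}$; equivalently, $\Delta$ lies in the cone $\cC\ldef\{V\in\RTthree : \nrm*{V}_{\nuct_4}\le 68\,r(T^\star)\nrm*{V}_{\frobnorm}\}$, the sum-of-squares analogue of the usual low-rank cone.

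Next I would establish restricted strong convexity of the form $\tfrac{1}{n}\sum_{t}\tri*{V,X_t}^2\ge \tfrac{1}{2}\nrm*{V}_{\frobnorm}^2$ uniformly over $V\in\cC$. Because the $X_t$ have iid $\cN(0,1)$ entries, $\En\tri*{V,X}^2=\nrm*{V}_{\frobnorm}^2$, and uniform lower deviations on $\cC\cap\{\nrm*{V}_{\frobnorm}=1\}$ follow from Gordon's escape theorem (or Mendelson's small-ball method) as soon as $\sqrt n$ dominates the Gaussian width of this intersection. Duality and the cone definition bound that width by $68\,r(T^\star)\cdot\En\nrm*{G}_{\injt_4}$ for a standard Gaussian tensor $G$, and the spectral estimates from \cite{hopkins2015tensor} that underpin the Rademacher-complexity bound of \cite{barak2016noisy} give $\En\nrm*{G}_{\injt_4}\lesssim d^{3/4}\mathrm{polylog}(d)$. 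Squaring this reproduces the sample complexity $n=\Omega(r^2(T^\star) d^{3/2}\log^{1/2}d)$ in the hypothesis.

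For the stochastic term, duality between $\nrm*{\cdot}_{\nuct_4}$ and $\nrm*{\cdot}_{\injt_4}$ together with the cone inclusion yields
\[
\tfrac{1}{n}\sum_{t}\xi_t\tri*{\Delta,X_t} \;\le\; \Big\|\tfrac{1}{n}\textstyle\sum_t\xi_t X_t\Big\|_{\injt_4}\cdot\nrm*{\Delta}_{\nuct_4} \;\le\; 68\,r(T^\star)\Big\|\tfrac{1}{n}\textstyle\sum_t\xi_t X_t\Big\|_{\injt_4}\nrm*{\Delta}_{\frobnorm}.
\]
Since $|\xi_t|\le R(T^\star)$ a.s.\ and each $X_t$ is Gaussian, $\tfrac{1}{n}\sum_t\xi_t X_t$ is sub-Gaussian at entrywise scale $R(T^\star)/\sqrt{n}$, so reusing the $\injt_4$ spectral estimate controls it by $O\prn*{R(T^\star)d^{3/4}\sqrt{\log(d/\delta)/n}}$ with probability $1-\delta$. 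Chaining with the RSC bound gives $\nrm*{\Delta}_{\frobnorm}^2\lesssim R^2(T^\star)\,r^2(T^\star)\,d^{3/2}\log(d/\delta)/n$, and Gaussianity of $X$ ($\En\tri*{\Delta,X}^2=\nrm*{\Delta}_{\frobnorm}^2$) converts this Frobenius bound into the advertised $L_\cD$ excess-risk bound; upgrading the estimate into the \emph{exact} oracle inequality \pref{eq:tensor_sensing_risk} (leading constant $1$ in front of $L_\cD(T^\star)$) requires a standard localization step in the spirit of \cite{koltchinskii2011nuclear}, whose fixed-point complexity coincides with the quantity computed above. I expect the main obstacle to be the noise-tensor step: because $\xi_t$ depends on $X_t$, the tensor $\tfrac{1}{n}\sum_t\xi_t X_t$ is not literally Gaussian, so one must either symmetrize and contract out $\xi_t$ (which is bounded) or peel over the fixed $T^\star$ before applying Gaussian $\injt_4$--concentration—both standard once the subgradient lemma is in hand.
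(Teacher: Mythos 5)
Your overall skeleton matches the paper's: you use \pref{thm:sos_norm_comparison} to place $\Delta=\term-T^\star$ in the cone $\{\nrm{V}_{\nuct_4}\le 68\,r(T^\star)\nrm{V}_F\}$, you establish a restricted-eigenvalue/restricted-strong-convexity estimate via Gordon's inequality and the Gaussian width bound from the $\injt_4$ spectral estimate of \cite{hopkins2015tensor}, and you control the stochastic term by duality with $\nrm{\cdot}_{\injt_4}$. These are exactly the ingredients used in the paper (via \pref{thm:tensor_sensing_re}, \pref{lem:sos_gaussian}, and \pref{thm:generic_offset}).

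However, there is a genuine gap in the noise-tensor step, and it is not the one you flag. The problem is not that $\tfrac{1}{n}\sum_t\xi_tX_t$ fails to be Gaussian; it is that in the agnostic setting this tensor is \emph{not centered}. Since $T^\star$ is not the Bayes predictor, $\En[\xi X]$ is a fixed nonzero tensor with entries of order $R(T^\star)$, so $\nrm{\tfrac{1}{n}\sum_t\xi_tX_t}_{\injt_4}$ does not decay as $n\to\infty$ and certainly is not $O(R\,d^{3/4}\sqrt{\log(d/\delta)/n})$. The quantity that concentrates is the centered tensor $\Eh[\xi X]-\En[\xi X]$; this is precisely what the paper's \proptwo bounds (via \pref{lem:talagrand}, which requires mean-zero summands, plus a truncation of $\nrm{X}_F$ that is responsible for the extra $\log$ factor in the final rate). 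Once you center, the cross term $\En[\xi\tri{\Delta,X}]$ reappears in the excess risk $L_\cD(\term)-L_\cD(T^\star)=\En\tri{\Delta,X}^2-2\En[\xi\tri{\Delta,X}]$, and it cannot simply be dropped: the theorem is asserted for \emph{all} orthogonal $T^\star$ with $\nrm{T^\star}_{\nuct_6}=\tau$, not only the $L_\cD$-minimizer over $\cT$, so you have no first-order optimality to give $\En[\xi\tri{\Delta,X}]\le 0$.

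The paper's \pref{thm:generic_offset} is exactly the fix for both issues, not ``a standard localization step'' bolted on at the end. By starting from the strengthened basic inequality $\wh L_n(T^\star)-\wh L_n(\term)\ge\Eh\tri{\Delta,X}^2$ (first-order optimality plus strong convexity of the empirical loss, not just $\wh L_n(\term)\le\wh L_n(T^\star)$) and adding this nonnegative quantity to the excess risk, one obtains
\[
L_\cD(\term)-L_\cD(T^\star)\le\sup_{\Delta\in\cT-T^\star}\Bigl\{-2\tri*{\Eh[\xi X]-\En[\xi X],\Delta}+\En\tri*{\Delta,X}^2-2\Eh\tri*{\Delta,X}^2\Bigr\},
\]
which only involves the centered noise tensor, and the offset term $-2\Eh\tri{\Delta,X}^2$ (combined with RE) absorbs the cross-term contribution while still yielding an exact oracle inequality with leading constant $1$. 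Your Frobenius bound on $\nrm{\Delta}_F^2$ does not ``convert'' into the excess-risk bound because of the cross term; you need the offset-process decomposition (or an equivalent argument) to complete the proof. The remainder of your plan — RE via Gordon and the cone, Rademacher contraction to remove $\xi_t$ — is sound and is essentially what the paper does.
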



\label{sec:techniques}
\paragraph{Overview of analysis}
We now sketch how the subgradient theorem can be combined with empirical process arguments to prove \pref{thm:tensor_completion} and \pref{thm:tensor_sensing}. We follow a generic recipe given in \pref{app:generalization}---specifically, \pref{thm:generic_offset}---which shows that to control the generalization error of empirical risk minimization, it suffices to bound a certain ``offset'' or ``shifted'' empirical process. For any fixed benchmark $T^{\star}$, the excess risk relative to $T^{\star}$ is bounded as
\begin{equation}
\label{eq:offset_process}
\poprisk(\term) - \poprisk(T^{\star})
\leq{} \sup_{\Delta\in\cT-T^{\star}}\crl*{ (\En-\Eh)\brk*{2\tri*{\Delta,X}(\tri*{\ts,X}-Y)} +
  \En\tri*{\Delta,X}^{2} - 2\Eh\tri*{\Delta,X}^{2}}.
\end{equation}

The offset process on the right-hand side was used to obtain high-probability fast rates for misspecified models by \cite{liang2015learning}, and its analysis is closely related to that of \cite{lecue2013learning,mendelson2014learning}. To bound the process, it suffices to establish a type of lower isometry/restricted eigenvalue property, which we state here for the case of tensor regression: let $\dataop:\RTthree\to\bbR^{n}$ be the data operator, which maps any tensor $T$ to the sequence $\tri*{T,X_1},\ldots,\tri*{T,X_n}$, and let $\Sigma=\En_{X}\brk*{XX^{\trn}}\in\bbR^{d^{3}\times{}d^{3}}$ be the covariance matrix for the vectorized measurements. Then it suffices to show that with high probability, the following \emph{restricted eigenvalue} bound holds:
\[
\frac{1}{\sqrt{n}}\nrm*{\dataop(\Delta)}_{2} \geq{} c\nrm{\Sigma^{1/2}\Delta}_{\frobnorm}\quad\forall{}\Delta\in\cT-T^{\star},
\]
where $c>1/\sqrt{2}$ is a sufficiently large constant.

Our starting point to establish the guarantee is to borrow a bound from \cite{hopkins2015tensor}, which states that $\En\nrm*{X}_{\injt_4}=O(d^{3/4}\log^{1/4}d)$ under gaussian measurements, and suffices to bound the Rademacher complexity of our tensor class. Using this bound in conjunction with standard gaussian concentration arguments and the ``peeling'' method (e.g. \citep{negahban2011estimation}), we prove \pref{thm:tensor_sensing_re}, which states that with high probability,
\[
\frac{1}{\sqrt{n}}\nrm*{\dataop(\Delta)}_{2} \geq{}  0.79\cdot\nrm*{\Delta}_{\frobnorm}-\frac{Cd^{3/4}\log^{1/4}d}{\sqrt{n}}\cdot\nrm*{\Delta}_{\nuct_{4}}\quad\quad\forall{}\Delta\in\RTthree.
\]
Combined with \pref{thm:sos_norm_comparison}, which asserts that all $\Delta\in\cT-T^{\star}$ have $\nrm*{\Delta}_{\nuct_4}\leq{}O(r(T^{\star}))\cdot\nrm*{\Delta}_{2}$, we have the following consequence: once $n=\Omega(r^{2}(T^{\star})d^{3/2}\log^{1/2}d)$, with high probability,
\[
\frac{1}{\sqrt{n}}\nrm*{\dataop(\Delta)}_{2} \geq{}  (0.79-o(1))\cdot\nrm*{\Delta}_{\frobnorm}\quad\quad\forall{}\Delta\in\cT-T^{\star}.
\]

To establish the analogous bound in the tensor completion model we use the \sos Rademacher bound from \cite{barak2016noisy}, but utilize somewhat different concentration arguments. Indeed, due to the sparse nature of the measurement distribution one cannot hope to exactly establish the restricted eigenvalue property for $\dataop$, and must instead show that it holds up to a small additive error.


\section{Computational lower bounds}
\label{sec:lower_bounds}

In the rank-one case, the excess risk bound of \pref{thm:tensor_completion} scales as $\wt{O}(d^{3/2}/n)$, while the excess risk attained by the natural inefficient algorithm scales as $\wt{O}(d/n)$. It is natural to ask whether this $O(d^{1/2})$ gap can be improved or whether it poses a fundamental barrier. In the slow rate regime, \cite{barak2016noisy} gave a computational lower bound showing that finding efficiently computable classes of tensors for which the Rademacher complexity grows as $o(\sqrt{d^{3/2}/n})$ is at least as hard as refuting random instances of \xor with $o(d^{3/2})$ clauses. In this section we show that this computational hardness is also present in the fast rate regime: Under plausible average-case hardness assumptions, no polynomial time algorithm can obtain a fast rate for square loss scaling as $O(d^{3/2-\veps}/n)$ for any $\veps>0$.

Our improper learning lower bound applies to \emph{any algorithm} that obtains low excess risk in the sense of \pref{eq:tensor_completion_risk}, and states that under conjectured hardness of a certain distinguishing problem for \xor it is not possible to improve the $O(d^{3/2})$ dependence on dimension.

We reduce from the \xor problem over variables $x\in\pmo^{d}$. A \xor instance consists of a sequence of $m$ clauses of the form 
\[
x_i\cdot{}x_j\cdot{}x_k = z_{ijk},
\]
where $z_{ijk}\in\pmo$ is a target. We consider two families of instances:
\begin{itemize}
\item \emph{Planted.} Fix an arbitrary assignment $a\in\pmo^{d}$. Select $m$ triples $(i,j,k)$ uniformly at random with replacement.\footnote{We work in the slightly non-standard with-replacement model to simplify the mapping onto the with-replacement tensor completion model in \pref{thm:tensor_completion}.} For each such triple $(i,j,k)$, include a clause
\[
x_i\cdot{}x_j\cdot{}x_k = z_{ijk}\ldef\left\{\begin{array}{ll}
a_i\cdot{}a_j\cdot{}a_k,\quad&\text{with probability $1-\eta$}\\
-a_i\cdot{}a_j\cdot{}a_k,\quad&\text{with probability $\eta$.}
\end{array}\right.
\]
Note, we sample the value $z_{i,j,k}$ for a triple $(i,j,k)$ only once: if the triple is sampled multiple times, the value $z_{ijk}$ will be the same. 

\item \emph{Random.} Select $m$ triples $(i,j,k)$ uniformly at random with replacement, and take each clause to be $x_i\cdot{}x_j\cdot{}x_k=z_{ijk}$, where $z_{ijk}$ is drawn from $\pmo$ uniformly at random. Again, we sample the value $z_{i,j,k}$ for a triple $(i,j,k)$ only once. 

\end{itemize}
An algorithm for the distinguishing problem takes $m$ clauses as input and outputs either ``Planted'' or ``Random''. The algorithm is said to succeed if it outputs ``Planted'' for planted instances and ``Random'' for random instances with probability at least $1-o(1)$ over the draw of the instance. Note that the problem becomes easier as $\eta$ gets smaller, and in particular when $\eta=0$ the problem can be solved in polynomial time using Gaussian elimination. 

\begin{conjecture}
\label{conj:3xor}
There is some constant $\eta<1/4$ such that no algorithm that succeeds for the \xor distinguishing problem with $m=o(d^{3/2})$ runs in polynomial time.
\end{conjecture}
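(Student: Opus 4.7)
The target \pref{conj:3xor} is an average-case hardness conjecture for the planted-vs.-random \xor distinguishing problem at clause density $m=o(d^{3/2})$ with constant noise rate $\eta<1/4$. Because it is itself a hardness conjecture, no unconditional proof is within reach; the plan is therefore to justify it by (i) ruling out the most powerful known polynomial-time algorithmic families below the threshold, and (ii) reducing from a more widely accepted baseline hardness assumption.

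First, I would argue that $m=\Theta(d^{3/2})$ is exactly the \emph{spectral threshold} for the problem: when $m$ is much below $d^{3/2}$, the natural spectral attack, which unfolds the empirical third-moment tensor formed from the clauses into a $d\times d^{2}$ matrix and examines its top singular pair, provably fails to separate the planted and random cases by standard random-matrix concentration for the unfolded matrix. More powerfully, I would invoke the Grigoriev--Schoenebeck lower bounds, which show that the entire \sos hierarchy fails to refute random \xor with $o(d^{3/2})$ clauses even at degree $\Omega(d)$. The planted noisy version inherits this hardness because, for $\eta$ bounded away from $0$, the planted distribution is statistically close to the uniform distribution at the level of low-degree moments, ruling out all algorithms that can be captured in the low-degree-polynomial / \sos framework.

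Second, I would complete the justification with a reduction from sparse learning parity with noise (sparse LPN) at constant noise and arity $3$: a planted \xor instance with secret $a\in\pmo^{d}$ is literally a sequence of noisy parity queries on $a$, so any distinguisher between planted and random \xor at density $m$ and noise $\eta$ yields a decisional solver for the corresponding sparse LPN problem. Standard search-to-decision arguments then convert this into a search algorithm, contradicting the widely believed sparse LPN conjecture. Because we use the with-replacement sampling model, this reduction is essentially syntactic once the sampling conventions are aligned.

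The hardest step is not the algorithmic design of the reduction---which is routine---but rather pinning down the right baseline conjecture so that the implication is both tight in $\eta$ and avoids circularity, since the sparse LPN conjecture and the \xor refutation conjecture are themselves variants of the same underlying belief about random CSPs. The honest conclusion is that \pref{conj:3xor} is best regarded as a robust working assumption rather than a theorem; the role of the subsequent reduction in \pref{sec:lower_bounds} is to transfer whatever hardness one is willing to grant for \xor into a matching hardness for fast-rate agnostic tensor completion.
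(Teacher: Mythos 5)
You correctly recognize that this statement is a \emph{conjecture}, not a theorem: the paper offers no proof, only a brief plausibility argument citing that all known polynomial-time algorithms need $m=\Omega(d^{3/2})$ clauses and that closely related hardness of random \xor refutation has been used as a base assumption in prior work. Your discussion (spectral threshold at $d^{3/2}$, Grigoriev--Schoenebeck \sos lower bounds, syntactic equivalence with sparse LPN) is consistent with and more detailed than the paper's one-sentence justification, and your honest conclusion that this is a working assumption rather than a provable claim matches the paper's treatment exactly.
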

All known polynomial time algorithms for distinguishing require $m=\Omega(d^{3/2})$ clauses, and conjectured hardness of the closely related problem of strong refutation for random \xor for with $o(d^{3/2})$ clauses has been used as a basis to establish hardness of other learning problems \citep{daniely2016complexity,raghavendra2017strongly,kothari2017sum,feldman2018complexity}. 

\begin{theorem}
\label{thm:lower_bound}
Let $\veps>0$ be fixed. Assuming the \xor distinguishing conjecture, there is no polynomial time algorithm for agnostic tensor completion that guarantees that for any distribution $\cD$, with probability at least $1-o(1)$,
\begin{equation}
\label{eq:lower_bound_risk}
\poprisk(\term) - \inf_{\ts:\;\textnormal{rank-$1$}}\poprisk(\ts) = O\prn*{\frac{d^{3/2-\veps}}{n}}.
\end{equation}
\end{theorem}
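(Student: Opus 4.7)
The plan is to reduce from the \xor distinguishing problem. Assuming a polynomial time agnostic tensor completion algorithm $A$ achieves the guarantee \pref{eq:lower_bound_risk}, I construct a polynomial time procedure that distinguishes planted \xor from random \xor using $o(d^{3/2})$ clauses, contradicting \pref{conj:3xor}. Given $m=2n$ clauses $\crl*{(i_t,j_t,k_t,z_t)}_{t=1}^{m}$ with $n=\Theta(d^{3/2-\veps/2})$, the reduction forms examples $X_t=e_{i_t}\tens{}e_{j_t}\tens{}e_{k_t}$, $Y_t=z_t$; splits them into a training half of size $n$ and a held-out half of equal size; runs $A$ on the training half and clips the output entrywise to $[-1,1]$ to produce $\wh{T}_n$; and outputs ``planted'' iff the held-out empirical square loss $\wh{L}(\wh{T}_n)$ falls below the threshold $\theta=\tfrac{1}{2}(L_P+1)$, where $L_P\ldef 4\eta(1-\eta)$.

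The key risk computations are as follows. For a planted instance with assignment $a\in\pmo^{d}$, the scaled rank-one orthogonal tensor $T_a\ldef(1-2\eta)\cdot{}a^{\tens{}3}$ satisfies $\nrm*{T_a}_{\infty}\leq{}1$, and direct expectation over $X=e_i\tens{}e_j\tens{}e_k$ gives $\poprisk(T_a)=(1-\eta)(2\eta)^{2}+\eta(2-2\eta)^{2}=4\eta(1-\eta)=L_P$, so the best rank-one risk is at most $L_P$. For a random instance, $Y$ is independent of $X$ and uniform on $\pmo$, hence $\En\brk*{Y\mid{}X}=0$ and $\poprisk(T)=\En\tri*{T,X}^{2}+1\geq{}1$ for every $T$. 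The gap $1-L_P=(1-2\eta)^{2}$ is therefore a strictly positive constant for any $\eta<1/4$.

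Correctness of the distinguisher follows from three ingredients. First, the assumed rate gives $\poprisk(\wh{T}_n)\leq{}L_P+O(d^{3/2-\veps}/n)=L_P+o(1)$ in the planted case, while $\poprisk(\wh{T}_n)\geq{}1$ deterministically in the random case. Second, clipping together with $\abs*{Y}\leq{}1$ forces the held-out squared losses into $[0,4]$, so Hoeffding's inequality yields $\abs*{\wh{L}(\wh{T}_n)-\poprisk(\wh{T}_n)}=O(1/\sqrt{n})=o(1)$ with high probability, and the threshold $\theta$ separates the two cases with probability $1-o(1)$. Third, clipping does not increase $\poprisk$ since $\abs*{Y}\leq{}1$, so we may freely replace the raw output of $A$ by its clipped version without breaking the rate hypothesis.

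The only delicate step---and what I expect to be the main obstacle---is reconciling the \xor sampling model with the \iid measurement assumption of \pref{thm:tensor_completion}: in \xor each value $z_{ijk}$ is frozen after its triple is first drawn, whereas the tensor completion model draws $(X_t,Y_t)$ freshly \iid. A birthday-style argument handles this: for $m=o(d^{3/2})$, the probability that any triple repeats among the $m$ draws is at most $O(m^{2}/d^{3})=o(1)$, and conditional on the no-collision event the joint law of the \xor sample is identical to that of $m$ \iid draws from the corresponding tensor completion distribution $\cD_P$ or $\cD_R$. Coupling through this event inflates the total failure probability by only $o(1)$, preserving the reduction and contradicting \pref{conj:3xor}.
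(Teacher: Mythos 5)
Your proposal is correct and follows the same overall reduction scheme as the paper: map \xor clauses to completion examples, split into a training half and a hold-out half, run the assumed algorithm (with output clipped to $[-1,1]$), and threshold the hold-out empirical loss. Both arguments hinge on the same two risk computations (small best-rank-one risk in the planted case, risk $\geq 1$ in the random case) and on the observation that the hold-out entries are, after discarding collisions, statistically fresh relative to $\wh{T}$.

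There are two differences worth noting, both of which make your version slightly cleaner but neither of which changes the substance. First, you use the Bayes-optimal benchmark $T_a = (1-2\eta)a^{\tens 3}$ with $\poprisk_{\cD_P'}(T_a) = 4\eta(1-\eta)$, whereas the paper uses the unscaled $a^{\tens 3}$, for which $\En_Z L_{\cD_Z}(a^{\tens 3}) = 4\eta$; yours makes the gap constant $(1-2\eta)^2$ immediate. Second, and more importantly, you work with the \emph{marginal} distributions $\cD_P'$, $\cD_R'$ (in which $Y$ is drawn freshly for each example) after a no-collision coupling with $O(m^2/d^3) = o(1)$ TV error, whereas the paper works with $L_{\cD_Z}$ for a \emph{fixed} $Z$ and therefore needs a separate concentration-over-$Z$ step, plus the removal of overlapping triples from $S_2$. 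Your route avoids the $Z$-concentration step at the cost of the coupling, which you correctly identify as the delicate point. One caveat on phrasing: the claim ``$\poprisk(\wh{T}_n)\geq 1$ deterministically in the random case'' is only true once $\poprisk$ is read as $\poprisk_{\cD_R'}$; for the natural conditional risk $L_{\cD_Z}$ it is false (one could have $\wh{T}=Z$). This is exactly why the coupling is load-bearing, so it should be stated more explicitly that the hold-out concentration is taken conditional on $S_1$ under the coupled $\cD_R'$ sample, where the triples in $S_2$ are fresh and their labels independent of $\wh{T}_{S_1}$. With that clarification, the argument is complete.
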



\section{Conclusion}
Our results demonstrate the power of the
sum-of-squares hierarchy for agnostic statistical learning, and show that \soslong algorithms can obtain fast rates for prediction with the square loss. We hope our work will serve as a
starting point for applying sum-of-squares to obtain polynomial time
algorithms with fast rates in statistical learning for broader classes of models.

A few immediate technical questions emerge. Can the
dependence on rank in our results be improved? Can the
subgradient results be extended to the general undercomplete or even
overcomplete case? Can similar agnostic learning results be obtained
with a more practical algorithm that does not rely on solving the full sum-of-squares SDP?
\paragraph{Acknowledgements} We thank Sasha Rakhlin and Ankur Moitra for
helpful discussions and thank Matthew J. Telgarsky for being a
constant source of inspiration.

\bibliography{../refs}

\begin{thebibliography}{53}
\providecommand{\natexlab}[1]{#1}
\providecommand{\url}[1]{\texttt{#1}}
\expandafter\ifx\csname urlstyle\endcsname\relax
  \providecommand{\doi}[1]{doi: #1}\else
  \providecommand{\doi}{doi: \begingroup \urlstyle{rm}\Url}\fi

\bibitem[Barak and Moitra(2016)]{barak2016noisy}
Boaz Barak and Ankur Moitra.
\newblock Noisy tensor completion via the sum-of-squares hierarchy.
\newblock In \emph{Conference on Learning Theory}, pages 417--445, 2016.

\bibitem[Barak and Steurer(2016)]{barak2016sumofsquares}
Boaz Barak and David Steurer.
\newblock Sum-of-squares:proofs, beliefs, and algorithms.
\newblock 2016.
\newblock URL
  \url{https://www.sumofsquares.org/public/lec-definitions-general.html}.

\bibitem[Bardenet et~al.(2015)Bardenet, Maillard,
  et~al.]{bardenet2015concentration}
R{\'e}mi Bardenet, Odalric-Ambrym Maillard, et~al.
\newblock Concentration inequalities for sampling without replacement.
\newblock \emph{Bernoulli}, 21\penalty0 (3):\penalty0 1361--1385, 2015.

\bibitem[Bartlett et~al.(2005)Bartlett, Bousquet, and
  Mendelson]{bartlett2005local}
Peter~L Bartlett, Olivier Bousquet, and Shahar Mendelson.
\newblock Local rademacher complexities.
\newblock \emph{The Annals of Statistics}, 33\penalty0 (4):\penalty0
  1497--1537, 2005.

\bibitem[Bartlett et~al.(2012)Bartlett, Mendelson, and Neeman]{bartlett2012l1}
Peter~L Bartlett, Shahar Mendelson, and Joseph Neeman.
\newblock $\ell_1$-regularized linear regression: persistence and oracle
  inequalities.
\newblock \emph{Probability theory and related fields}, 154\penalty0
  (1-2):\penalty0 193--224, 2012.

\bibitem[Cand{\`e}s and Recht(2009)]{candes2009exact}
Emmanuel~J Cand{\`e}s and Benjamin Recht.
\newblock Exact matrix completion via convex optimization.
\newblock \emph{Foundations of Computational mathematics}, 9\penalty0
  (6):\penalty0 717, 2009.

\bibitem[Cand{\`e}s and Tao(2010)]{candes2010power}
Emmanuel~J Cand{\`e}s and Terence Tao.
\newblock The power of convex relaxation: Near-optimal matrix completion.
\newblock \emph{IEEE Transactions on Information Theory}, 56\penalty0
  (5):\penalty0 2053--2080, 2010.

\bibitem[Coja-Oghlan et~al.(2004)Coja-Oghlan, Goerdt, and
  Lanka]{coja2004strong}
Amin Coja-Oghlan, Andreas Goerdt, and Andr{\'e} Lanka.
\newblock Strong refutation heuristics for random k-sat.
\newblock In \emph{Approximation, Randomization, and Combinatorial
  Optimization. Algorithms and Techniques}, pages 310--321. Springer, 2004.

\bibitem[Daniely(2016)]{daniely2016complexity}
Amit Daniely.
\newblock Complexity theoretic limitations on learning halfspaces.
\newblock In \emph{Proceedings of the forty-eighth annual ACM symposium on
  Theory of Computing}, pages 105--117. ACM, 2016.

\bibitem[Davidson and Szarek(2001)]{davidson2001local}
Kenneth~R Davidson and Stanislaw~J Szarek.
\newblock Local operator theory, random matrices and banach spaces.
\newblock \emph{Handbook of the geometry of Banach spaces}, 1\penalty0
  (317-366):\penalty0 131, 2001.

\bibitem[Feldman et~al.(2018)Feldman, Perkins, and
  Vempala]{feldman2018complexity}
Vitaly Feldman, Will Perkins, and Santosh Vempala.
\newblock On the complexity of random satisfiability problems with planted
  solutions.
\newblock \emph{SIAM Journal on Computing}, 47\penalty0 (4):\penalty0
  1294--1338, 2018.

\bibitem[Friedland and Lim(2018)]{friedland2018nuclear}
Shmuel Friedland and Lek-Heng Lim.
\newblock Nuclear norm of higher-order tensors.
\newblock \emph{Mathematics of Computation}, 87\penalty0 (311):\penalty0
  1255--1281, 2018.

\bibitem[Gaiffas and Lecu{\'e}(2011)]{gaiffas2011sharp}
St{\'e}phane Gaiffas and Guillaume Lecu{\'e}.
\newblock Sharp oracle inequalities for high-dimensional matrix prediction.
\newblock \emph{IEEE Transactions on Information Theory}, 57\penalty0
  (10):\penalty0 6942--6957, 2011.

\bibitem[Gross(2011)]{gross2011recovering}
David Gross.
\newblock Recovering low-rank matrices from few coefficients in any basis.
\newblock \emph{IEEE Transactions on Information Theory}, 57\penalty0
  (3):\penalty0 1548--1566, 2011.

\bibitem[Gr{\"o}tschel et~al.(1981)Gr{\"o}tschel, Lov{\'a}sz, and
  Schrijver]{grotschel1981ellipsoid}
Martin Gr{\"o}tschel, L{\'a}szl{\'o} Lov{\'a}sz, and Alexander Schrijver.
\newblock The ellipsoid method and its consequences in combinatorial
  optimization.
\newblock \emph{Combinatorica}, 1\penalty0 (2):\penalty0 169--197, 1981.

\bibitem[Haussler(1992)]{haussler1992decision}
David Haussler.
\newblock Decision theoretic generalizations of the pac model for neural net
  and other learning applications.
\newblock \emph{Information and Computation}, 100\penalty0 (1):\penalty0
  78--150, 1992.

\bibitem[Hillar and Lim(2013)]{hillar2013most}
Christopher~J Hillar and Lek-Heng Lim.
\newblock Most tensor problems are np-hard.
\newblock \emph{Journal of the ACM (JACM)}, 60\penalty0 (6):\penalty0 45, 2013.

\bibitem[Hopkins et~al.(2015)Hopkins, Shi, and Steurer]{hopkins2015tensor}
Samuel~B Hopkins, Jonathan Shi, and David Steurer.
\newblock Tensor principal component analysis via sum-of-square proofs.
\newblock In \emph{Conference on Learning Theory}, pages 956--1006, 2015.

\bibitem[Jain and Oh(2014)]{jain2014provable}
Prateek Jain and Sewoong Oh.
\newblock Provable tensor factorization with missing data.
\newblock In \emph{Advances in Neural Information Processing Systems}, pages
  1431--1439, 2014.

\bibitem[Kearns et~al.(1994)Kearns, Schapire, and Sellie]{kearns1994toward}
Michael~J Kearns, Robert~E Schapire, and Linda~M Sellie.
\newblock Toward efficient agnostic learning.
\newblock \emph{Machine Learning}, 17\penalty0 (2-3):\penalty0 115--141, 1994.

\bibitem[Keshavan et~al.(2010)Keshavan, Montanari, and Oh]{keshavan2010matrix}
Raghunandan~H Keshavan, Andrea Montanari, and Sewoong Oh.
\newblock Matrix completion from noisy entries.
\newblock \emph{Journal of Machine Learning Research}, 11\penalty0
  (Jul):\penalty0 2057--2078, 2010.

\bibitem[Klivans et~al.(2018)Klivans, Kothari, and Meka]{klivans2018efficient}
Adam Klivans, Pravesh~K Kothari, and Raghu Meka.
\newblock Efficient algorithms for outlier-robust regression.
\newblock \emph{Proceedings of The 31st Conference on Learning Theory}, 2018.

\bibitem[Kolda and Bader(2009)]{kolda2009tensor}
Tamara~G Kolda and Brett~W Bader.
\newblock Tensor decompositions and applications.
\newblock \emph{SIAM review}, 51\penalty0 (3):\penalty0 455--500, 2009.

\bibitem[Koltchinskii et~al.(2011)Koltchinskii, Lounici, and
  Tsybakov]{koltchinskii2011nuclear}
Vladimir Koltchinskii, Karim Lounici, and Alexandre~B Tsybakov.
\newblock Nuclear-norm penalization and optimal rates for noisy low-rank matrix
  completion.
\newblock \emph{The Annals of Statistics}, 39\penalty0 (5):\penalty0
  2302--2329, 2011.

\bibitem[Kothari et~al.(2017)Kothari, Mori, O'Donnell, and
  Witmer]{kothari2017sum}
Pravesh~K Kothari, Ryuhei Mori, Ryan O'Donnell, and David Witmer.
\newblock Sum of squares lower bounds for refuting any csp.
\newblock In \emph{Proceedings of the 49th Annual ACM SIGACT Symposium on
  Theory of Computing}, pages 132--145. ACM, 2017.

\bibitem[Lasserre(2001)]{lasserre2001global}
Jean~B Lasserre.
\newblock Global optimization with polynomials and the problem of moments.
\newblock \emph{SIAM Journal on Optimization}, 11\penalty0 (3):\penalty0
  796--817, 2001.

\bibitem[Lecu{\'e} and Mendelson(2013)]{lecue2013learning}
Guillaume Lecu{\'e} and Shahar Mendelson.
\newblock Learning subgaussian classes: Upper and minimax bounds.
\newblock \emph{arXiv preprint arXiv:1305.4825}, 2013.

\bibitem[Lecu{\'e} and Mendelson(2017)]{lecue2017regularization}
Guillaume Lecu{\'e} and Shahar Mendelson.
\newblock Regularization and the small-ball method ii: complexity dependent
  error rates.
\newblock \emph{The Journal of Machine Learning Research}, 18\penalty0
  (1):\penalty0 5356--5403, 2017.

\bibitem[Lecu{\'e} and Mendelson(2018)]{lecue2018regularization}
Guillaume Lecu{\'e} and Shahar Mendelson.
\newblock Regularization and the small-ball method i: sparse recovery.
\newblock \emph{The Annals of Statistics}, 46\penalty0 (2):\penalty0 611--641,
  2018.

\bibitem[Ledoux and Talagrand(1991)]{ledoux1991probability}
Michel Ledoux and Michel Talagrand.
\newblock \emph{Probability in Banach Spaces}.
\newblock Springer-Verlag, New York, 1991.

\bibitem[Liang et~al.(2015)Liang, Rakhlin, and Sridharan]{liang2015learning}
Tengyuan Liang, Alexander Rakhlin, and Karthik Sridharan.
\newblock Learning with square loss: Localization through offset rademacher
  complexity.
\newblock In \emph{Proceedings of The 28th Conference on Learning Theory},
  pages 1260--1285, 2015.

\bibitem[Mendelson(2014)]{mendelson2014learning}
Shahar Mendelson.
\newblock Learning without concentration.
\newblock In \emph{Proceedings of The 27th Conference on Learning Theory},
  pages 25--39, 2014.

\bibitem[Milman and Schechtman(1986)]{milman1986asymptotic}
Vitali~D Milman and Gideon Schechtman.
\newblock \emph{Asymptotic theory of finite dimensional normed spaces}.
\newblock Springer-Verlag New York, Inc., 1986.

\bibitem[Montanari and Sun(2016)]{montanari2016spectral}
Andrea Montanari and Nike Sun.
\newblock Spectral algorithms for tensor completion.
\newblock \emph{Communications on Pure and Applied Mathematics}, 2016.

\bibitem[Mu et~al.(2014)Mu, Huang, Wright, and Goldfarb]{mu2014square}
Cun Mu, Bo~Huang, John Wright, and Donald Goldfarb.
\newblock Square deal: Lower bounds and improved relaxations for tensor
  recovery.
\newblock In \emph{International Conference on Machine Learning}, pages 73--81,
  2014.

\bibitem[Negahban and Wainwright(2011)]{negahban2011estimation}
Sahand Negahban and Martin~J Wainwright.
\newblock Estimation of (near) low-rank matrices with noise and
  high-dimensional scaling.
\newblock \emph{The Annals of Statistics}, pages 1069--1097, 2011.

\bibitem[Negahban and Wainwright(2012)]{negahban2012restricted}
Sahand Negahban and Martin~J Wainwright.
\newblock Restricted strong convexity and weighted matrix completion: Optimal
  bounds with noise.
\newblock \emph{Journal of Machine Learning Research}, 13\penalty0
  (May):\penalty0 1665--1697, 2012.

\bibitem[Negahban et~al.(2012)Negahban, Ravikumar, Wainwright, and
  Yu]{negahban2012unified}
Sahand Negahban, Pradeep Ravikumar, Martin~J Wainwright, and Bin Yu.
\newblock A unified framework for high-dimensional analysis of $ m $-estimators
  with decomposable regularizers.
\newblock \emph{Statistical Science}, 27\penalty0 (4):\penalty0 538--557, 2012.

\bibitem[O'Donnell and Zhou(2013)]{odonnell2013approximability}
Ryan O'Donnell and Yuan Zhou.
\newblock Approximability and proof complexity.
\newblock In \emph{Proceedings of the twenty-fourth annual ACM-SIAM symposium
  on Discrete algorithms}, pages 1537--1556. Society for Industrial and Applied
  Mathematics, 2013.

\bibitem[Parrilo(2000)]{parrilo2000structured}
Pablo~A Parrilo.
\newblock \emph{Structured semidefinite programs and semialgebraic geometry
  methods in robustness and optimization}.
\newblock PhD thesis, California Institute of Technology, 2000.

\bibitem[Potechin and Steurer(2017)]{potechin2017exact}
Aaron Potechin and David Steurer.
\newblock Exact tensor completion with sum-of-squares.
\newblock \emph{Proceedings of the 30th Annual Conference on Learning Theory},
  2017.

\bibitem[Raghavendra et~al.(2017)Raghavendra, Rao, and
  Schramm]{raghavendra2017strongly}
Prasad Raghavendra, Satish Rao, and Tselil Schramm.
\newblock Strongly refuting random csps below the spectral threshold.
\newblock In \emph{Proceedings of the 49th Annual ACM SIGACT Symposium on
  Theory of Computing}, pages 121--131. ACM, 2017.

\bibitem[Recht(2011)]{recht2011simpler}
Benjamin Recht.
\newblock A simpler approach to matrix completion.
\newblock \emph{Journal of Machine Learning Research}, 12\penalty0
  (Dec):\penalty0 3413--3430, 2011.

\bibitem[Rockafellar(1970)]{rockafellar1970convex}
R~Tyrrell Rockafellar.
\newblock \emph{Convex Analysis}, volume~28.
\newblock Princeton University Press, 1970.

\bibitem[Romera-Paredes and Pontil(2013)]{romera2013new}
Bernardino Romera-Paredes and Massimiliano Pontil.
\newblock A new convex relaxation for tensor completion.
\newblock In \emph{Advances in Neural Information Processing Systems}, pages
  2967--2975, 2013.

\bibitem[Shor(1987)]{shor1987approach}
Naum~Zuselevich Shor.
\newblock An approach to obtaining global extremums in polynomial mathematical
  programming problems.
\newblock \emph{Cybernetics}, 23\penalty0 (5):\penalty0 695--700, 1987.

\bibitem[Srebro and Shraibman(2005)]{srebro2005rank}
Nathan Srebro and Adi Shraibman.
\newblock Rank, trace-norm and max-norm.
\newblock In \emph{International Conference on Computational Learning Theory},
  pages 545--560. Springer, 2005.

\bibitem[Tomioka and Suzuki(2013)]{tomioka2013convex}
Ryota Tomioka and Taiji Suzuki.
\newblock Convex tensor decomposition via structured schatten norm
  regularization.
\newblock In \emph{Advances in neural information processing systems}, pages
  1331--1339, 2013.

\bibitem[Tomioka et~al.(2011)Tomioka, Suzuki, Hayashi, and
  Kashima]{tomioka2011statistical}
Ryota Tomioka, Taiji Suzuki, Kohei Hayashi, and Hisashi Kashima.
\newblock Statistical performance of convex tensor decomposition.
\newblock In \emph{Advances in Neural Information Processing Systems}, pages
  972--980, 2011.

\bibitem[Wainwright(2019)]{wainwright2019high}
Martin~J. Wainwright.
\newblock \emph{High-Dimensional Statistics: A Non-Asymptotic Viewpoint}.
\newblock Cambridge Series in Statistical and Probabilistic Mathematics.
  Cambridge University Press, 2019.

\bibitem[Watson(1992)]{watson1992characterization}
G~Alistair Watson.
\newblock Characterization of the subdifferential of some matrix norms.
\newblock \emph{Linear algebra and its applications}, 170:\penalty0 33--45,
  1992.

\bibitem[Xia et~al.(2017)Xia, Yuan, and Zhang]{xia2017statistically}
Dong Xia, Ming Yuan, and Cun-Hui Zhang.
\newblock Statistically optimal and computationally efficient low rank tensor
  completion from noisy entries.
\newblock \emph{arXiv preprint arXiv:1711.04934}, 2017.

\bibitem[Yuan and Zhang(2016)]{yuan2016tensor}
Ming Yuan and Cun-Hui Zhang.
\newblock On tensor completion via nuclear norm minimization.
\newblock \emph{Foundations of Computational Mathematics}, 16\penalty0
  (4):\penalty0 1031--1068, 2016.

\end{thebibliography}

\appendix

\section{Preliminaries}
\label{app:preliminaries}

\subsection{Sum-of-squares proof system}
Let $\cA=\crl*{f_1\geq{}0,\ldots,f_m\geq{}0}\cup\crl*{g_{1}=0,\ldots,g_{m'}=0}$
be a system of polynomial constraints. A \emph{degree-$\ls$ sum-of-squares proof} that 
$\cA$ implies a constraint $\crl*{h\geq{}0}$ is a set of polynomials
$(q_i)_{i\in\brk*{m'}}$ and sum-of-squares
polynomials $(p_{S})_{S\subseteq\brk*{m}}$ such that
\[
h = \sum_{S\subseteq\brk*{m}}p_S\prod_{i\in{}S}f_i + \sum_{i\in\brk*{m'}}q_ig_i,
\]
and where and $\deg(p_S\prod_{i\in{}S}f_i)\leq{}\ls$ for all $S$ and
$\deg(q_ig_i)\leq{}\ls$ for all $i$.\footnote{We use the convention
  $\prod_{i\in\emptyset}f_i=1$, so that if $\provable_{\ls}h$ then $h$ itself is a degree-$\ls$ sum of squares.} We write
$\cA\provable_{\ls}\crl*{h\geq{}0}$
whenever such a proof exists.
Our proofs going forward utilize the well-known duality of \sos proofs
and pseudodistributions. See \cite{odonnell2013approximability} and
\cite{barak2016sumofsquares} for further discussion, as well as inference rules for the SoS proof
system.

We note the following well-known, but useful lemma: 
\begin{lemma}[Pseudo-Cauchy Schwarz]
  \label{lem:pseudo_cs}
Let $f$ and $g$ be polynomials and let
$\ls=2\max\crl*{\deg{}f,\deg{}g}$. Then for any $\eta>0$,
\begin{equation}
\provable_{\ls}\crl*{fg \leq{} \frac{\eta}{2}f^{2} +
  \frac{1}{2\eta}g^{2}}.\label{eq:cs_poly}
\end{equation}
As a consequence, if $\mu$ is a degree-$s$ pseudodistribution with $s\geq{}\ls$, then

\begin{equation}
\Ep_{\mu}\brk*{fg}\leq{}\sqrt{\Ep_{\mu}\brk*{f^{2}}\cdot\Ep_{\mu}\brk*{g^{2}}}.\label{eq:cs_pseudo}
\end{equation}
\end{lemma}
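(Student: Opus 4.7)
The plan is to handle the two claims separately, first establishing the SoS proof \eqref{eq:cs_poly}, then deducing the pseudoexpectation inequality \eqref{eq:cs_pseudo} by applying $\Ep_\mu$ to it and optimizing over $\eta$.

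For \eqref{eq:cs_poly}, I would start from the trivial sum-of-squares identity
\[
\Bigl(\sqrt{\eta}\,f - \tfrac{1}{\sqrt{\eta}}\,g\Bigr)^{2} \;=\; \eta f^{2} - 2fg + \tfrac{1}{\eta}g^{2}.
\]
Rearranging gives
\[
\tfrac{\eta}{2}f^{2} + \tfrac{1}{2\eta}g^{2} - fg \;=\; \tfrac{1}{2}\Bigl(\sqrt{\eta}\,f - \tfrac{1}{\sqrt{\eta}}\,g\Bigr)^{2},
\]
which exhibits the difference explicitly as a single square. Since each of the three monomials $f^{2}$, $g^{2}$, and $fg$ has degree at most $2\max\{\deg f,\deg g\}=\ls$, this is a valid degree-$\ls$ SoS proof of $\{fg \leq \tfrac{\eta}{2}f^{2}+\tfrac{1}{2\eta}g^{2}\}$ in the sense defined at the top of \pref{app:preliminaries} (taking the index set $S=\emptyset$ and a single SoS multiplier).

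For \eqref{eq:cs_pseudo}, I would invoke the duality between SoS proofs and pseudodistributions: any degree-$s$ pseudodistribution $\mu$ with $s\geq\ls$ satisfies $\Ep_\mu[h]\geq 0$ whenever $\provable_\ls \{h\geq 0\}$. Applied to the inequality above this yields, for every $\eta>0$,
\[
\Ep_\mu[fg] \;\leq\; \tfrac{\eta}{2}\,\Ep_\mu[f^{2}] + \tfrac{1}{2\eta}\,\Ep_\mu[g^{2}].
\]
If $\Ep_\mu[f^{2}]$ and $\Ep_\mu[g^{2}]$ are both strictly positive, I optimize by taking $\eta=\sqrt{\Ep_\mu[g^{2}]/\Ep_\mu[f^{2}]}$, which yields exactly $\Ep_\mu[fg]\leq \sqrt{\Ep_\mu[f^{2}]\cdot\Ep_\mu[g^{2}]}$.

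The only subtlety is the boundary case where one of the pseudoexpectations, say $\Ep_\mu[f^{2}]$, equals zero. Here I cannot substitute the optimal $\eta$ directly, but taking $\eta\to\infty$ in the bound above gives $\Ep_\mu[fg]\leq 0$, and then applying the same argument to $-f$ (which has the same $f^{2}$) gives $\Ep_\mu[fg]\geq 0$, so $\Ep_\mu[fg]=0$ and the desired bound holds with equality. This case check is the only mildly nontrivial point; the rest is essentially the classical AM--GM argument transposed into the SoS setting, so I do not expect any serious obstacle.
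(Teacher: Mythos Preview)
Your proof is correct and is the standard argument. The paper itself does not supply a proof of this lemma; it introduces it as a ``well-known, but useful lemma'' and states it without proof, so there is nothing further to compare against.
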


\subsection{Basic technical results}

\subsubsection{Sum-of-squares norms}

We state a few lemmas capturing useful properties of the sum-of-squares norms. 

\begin{proposition}
\label{prop:sos_dual}
The SoS nuclear norm and SoS injective norm are dual: $\nrm*{\cdot}_{\injt_r^{\star}}=\nrm*{\cdot}_{\nuct_r}$ and $\nrm*{\cdot}_{\nuct_r^{\star}}=\nrm*{\cdot}_{\injt_r}$.
\end{proposition}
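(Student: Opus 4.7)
My plan is to treat the two stated dualities asymmetrically. The first, $\nrm*{\cdot}_{\nuct_r}=\nrm*{\cdot}_{\injt_r^{\star}}$, is essentially tautological given the definition in the main text: $\nrm*{T}_{\nuct_r}$ was defined as $\sup_{X:\nrm*{X}_{\injt_r}\leq{}1}\tri*{X,T}$, which is precisely the dual norm of $\nrm*{\cdot}_{\injt_r}$. The nontrivial content is that $\nrm*{\cdot}_{\injt_r}$ deserves to be called a norm in the first place; once this is checked, the second duality $\nrm*{\cdot}_{\nuct_r^{\star}}=\nrm*{\cdot}_{\injt_r}$ will follow from the standard bipolar theorem on a finite-dimensional vector space.

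The first step is to verify that $\nrm*{\cdot}_{\injt_r}$ is a finite, positive-definite, positively homogeneous, subadditive function on $\RTthree$. Finiteness and continuity follow from writing $\nrm*{X}_{\injt_r}$ as a supremum of the linear functional $X\mapsto\tri*{X,\Ep_{\mu}\brk*{\vecone\tens\vectwo\tens\vecthree}}$ over the set of feasible pseudodistributions $\mu$, which is a compact convex subset of the finite-dimensional moment space (cut out by the PSD constraint on the moment matrix together with the linear normalizations $\Ep_{\mu}1=\Ep_{\mu}\nrm*{x}_2^2=\Ep_{\mu}\nrm*{y}_2^2=\Ep_{\mu}\nrm*{z}_2^2=1$). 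Positive homogeneity and subadditivity are immediate, since the objective is linear in $X$ and the feasible set of $\mu$ is closed under the sign flip $x\mapsto -x$ (which negates $\Ep_{\mu}\brk*{\vecone\tens\vectwo\tens\vecthree}$ while preserving all the normalizations and PSD-ness). Positive definiteness will use the ordering $\nrm*{X}_{\injt_r}\geq{}\nrm*{X}_{\inj}$ already recorded in the main text, together with the fact that the classical injective norm vanishes only at $X=0$.

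Next I will establish the second duality via the bipolar (Fenchel--Moreau) theorem: any closed norm on a finite-dimensional vector space equals its own double dual. Applying this to $\nrm*{\cdot}_{\injt_r}$, which the previous step shows is a continuous norm, gives $(\nrm*{\cdot}_{\injt_r}^{\star})^{\star}=\nrm*{\cdot}_{\injt_r}$; combined with the first duality this yields $\nrm*{\cdot}_{\nuct_r^{\star}}=\nrm*{\cdot}_{\injt_r}$. As a by-product I will also recover the equivalence of the two definitions of $\nrm*{\cdot}_{\nuct_r}$ promised in the footnote in the main text: the Minkowski gauge of $\cK_r$ equals the support function of its polar, and its polar is exactly the unit ball of $\nrm*{\cdot}_{\injt_r}$ by construction. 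Verifying that $\cK_r$ has the properties required for the Minkowski gauge to be a norm (convex, closed, symmetric, containing a neighborhood of zero) is straightforward: convexity follows from convexity of the set of feasible pseudodistributions and linearity of the map $\mu\mapsto\Ep_{\mu}\brk*{\vecone\tens\vectwo\tens\vecthree}$; closedness follows because this map is the image of a compact set under a continuous map; symmetry uses the same sign-flip argument as above.

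The main obstacle I anticipate is not any single hard inequality, but rather the bookkeeping required to justify closedness and positive definiteness of $\nrm*{\cdot}_{\injt_r}$ cleanly, since the underlying pseudodistributions are technically linear functionals rather than ordinary probability measures. I would handle this by working with the finite-dimensional moment representation throughout: a degree-$r$ pseudodistribution is determined by its moments of degree at most $r$; the SoS nonnegativity condition $\Ep_{\mu}f^{2}\geq{}0$ becomes a PSD constraint on the corresponding moment matrix; and all compactness, closedness, and continuity arguments then reduce to elementary facts about SDP-representable convex sets in finite dimension.
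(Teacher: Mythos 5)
Your proposal is correct and follows essentially the same route as the paper's proof: one of the two dualities is an unfolding of definitions, and the other follows from the finite-dimensional bipolar theorem (the paper cites Rockafellar Thm.\ 15.4). The only cosmetic difference is which half is taken as ``definitional'': you treat $\nrm*{\cdot}_{\injt_r^{\star}}=\nrm*{\cdot}_{\nuct_r}$ as the tautology (which matches the text's primary definition of $\nrm*{\cdot}_{\nuct_r}$ as the dual of $\nrm*{\cdot}_{\injt_r}$) and deduce $\nrm*{\cdot}_{\nuct_r^{\star}}=\nrm*{\cdot}_{\injt_r}$ from biduality, whereas the paper computes $\nrm*{\cdot}_{\nuct_r^{\star}}$ directly as the support function of $\cK_r$ and invokes biduality for the reverse; your version is arguably the more honest reading of the definitions, and your explicit verification that $\nrm*{\cdot}_{\injt_r}$ is a genuine norm (compactness of the moment set for finiteness, sign-flip symmetry, positive definiteness via $\nrm*{\cdot}_{\injt_r}\geq\nrm*{\cdot}_{\inj}$) spells out hypotheses the paper leaves implicit.
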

\begin{proof}[\pfref{prop:sos_dual}]
It is immediate from the norm definitions that
\[
\nrm*{X}_{\nuct_r^{\star}}=\sup_{W\in\cK_r}\tri*{W,X} = \sup_{\substack{\mu\,\,\,\text{degree-$r$} \\ \Ep_{\mu}\nrm*{x}_{2}^{2}=\Ep_ {\mu}\nrm*{y}_{2}^{2}=\Ep _{\mu}\nrm*{z}_{2}^{2}=1\\W=\Ep_ {\mu}\brk*{x\tens{}y\tens{}z}}}\tri*{W,X} = \nrm*{X}_{\injt_r}.
\]
The other direction is a consequence of the standard
duality theory for finite-dimensional Banach spaces. See, e.g.,
Theorem 15.4 in \cite{rockafellar1970convex}.
\end{proof}

\begin{lemma}
\label{lem:nuclear_orthogonal}
Let $T = \sum_{i=1}^{r}\lambda_i\cdot{}u_i\tens{}v_i\tens{}w_i$ be an
orthogonal rank-$r$ tensor. Then for all $k\geq{}4$.
\begin{equation}
\nrm*{T}_{\nuct_k}=\nrm*{T}_{\nuc}=\sum_{i=1}^{r}\abs*{\lambda_i},\quad\text{and}\quad \nrm*{T}_{\injt_k}=\nrm*{T}_{\inj}=\max_{i\leq{}r}\abs*{\lambda_i}.
\end{equation}
\end{lemma}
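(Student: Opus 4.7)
The plan is to reduce the four claimed equalities to two ``relaxation is tight'' inequalities, then handle one of them (the SoS injective norm upper bound) by an explicit sum-of-squares computation and the other (the SoS nuclear norm lower bound) by dualizing against a natural witness. The easy halves are immediate: $\nrm*{T}_{\nuc}\leq\sum_i|\lambda_i|$ because the given decomposition is a feasible point of the infimum defining the nuclear norm, and $\nrm*{T}_{\inj}\geq\max_i|\lambda_i|$ by plugging $(x,y,z)=(u_{i^\star},\sgn(\lambda_{i^\star})v_{i^\star},w_{i^\star})$ with $i^\star=\argmax_i|\lambda_i|$. Combined with the norm chains $\nrm*{T}_{\inj}\leq\nrm*{T}_{\injt_k}$ and $\nrm*{T}_{\nuct_k}\leq\nrm*{T}_{\nuc}$ quoted in the excerpt, it suffices to prove, for all $k\geq 4$: (i)~$\nrm*{T}_{\injt_k}\leq\max_i|\lambda_i|$ and (ii)~$\nrm*{T}_{\nuct_k}\geq\sum_i|\lambda_i|$. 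Throughout we read the sphere constraints ``$\Ep_\mu\|x\|^2=1$'' strongly, i.e.\ as $\mu\entails\{\|x\|^2=1\}$, so that constraint propagation against low-degree test polynomials is available.

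For (i), fix any feasible degree-$k$ pseudodistribution $\mu$ and expand $\Ep_\mu\tri*{T,x\tens{}y\tens{}z}=\sum_{i=1}^r\lambda_i\,\Ep_\mu[\tri*{x,u_i}\cdot\tri*{y,v_i}\tri*{z,w_i}]$. Apply pseudo--Cauchy--Schwarz term by term with $f_i=\tri*{x,u_i}$ and $g_i=\tri*{y,v_i}\tri*{z,w_i}$---admissible since $2\max(\deg f_i,\deg g_i)=4\leq k$---followed by classical Cauchy--Schwarz over $i\in[r]$ to obtain
\[
\abs*{\Ep_\mu\tri*{T,x\tens{}y\tens{}z}}\leq\max_i|\lambda_i|\cdot\sqrt{\Ep_\mu\brk*{\sum_i\tri*{x,u_i}^2}}\cdot\sqrt{\Ep_\mu\brk*{\sum_i\tri*{y,v_i}^2\tri*{z,w_i}^2}}.
\]
The first radicand is at most $\Ep_\mu\|x\|^2=1$ via the SoS identity $\|x\|^2-\sum_i\tri*{x,u_i}^2=\|\cP_{\bbU^\perp}x\|^2$. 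For the second, the degree-$4$ identity
\[
\|y\|^2\|z\|^2-\sum_i\tri*{y,v_i}^2\tri*{z,w_i}^2=\|z\|^2\|\cP_{\bbV^\perp}y\|^2+\sum_i\tri*{y,v_i}^2\bigl(\|z\|^2-\tri*{z,w_i}^2\bigr)
\]
is a sum of products of sums of squares, and the constraint $\mu\entails\|z\|^2=1$ applied to the test polynomial $\|y\|^2$ (total degree $4\leq k$) collapses $\Ep_\mu[\|y\|^2\|z\|^2]$ to $\Ep_\mu\|y\|^2=1$. Taking the supremum over $\mu$ establishes (i).

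For (ii), use the duality $\nrm*{T}_{\nuct_k}=\sup_{\nrm*{X}_{\injt_k}\leq 1}\tri*{T,X}$ from \pref{prop:sos_dual} with the explicit witness $X^\star\ldef\sum_{i=1}^r\sgn(\lambda_i)\cdot u_i\tens{}v_i\tens{}w_i$ (with the convention $\sgn(0)=0$). Since $X^\star$ is itself orthogonal with nonzero coefficients $\pm 1$, part (i) applied to $X^\star$ yields $\nrm*{X^\star}_{\injt_k}\leq 1$; orthogonality of $\{u_i\},\{v_i\},\{w_i\}$ gives $\tri*{T,X^\star}=\sum_i|\lambda_i|$, proving (ii). The main obstacle is making every SoS manipulation fit in degree $k=4$: the naive AM--GM route to controlling $\sum_i\tri*{y,v_i}^2\tri*{z,w_i}^2$ would need the degree-$6$ identity $x^6+y^6+z^6\geq 3x^2y^2z^2$, so one must spot the degree-$4$ decomposition above and, relatedly, exploit the strong pseudo-moment interpretation of the sphere constraints in order to absorb the factor $\Ep_\mu[\|y\|^2\|z\|^2]$ down to $1$.
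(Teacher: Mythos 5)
Your proof is correct and takes essentially the same route as the paper. The paper also reduces to (i) bounding $\nrm{T}_{\injt_k}$ by a degree-4 SoS argument and (ii) lower-bounding $\nrm{T}_{\nuct_k}$ via the dual witness $X^\star=\sum_i\sgn(\lambda_i)\,u_i\tens v_i\tens w_i$; for (i), after the WLOG change of coordinates $u_i=v_i=w_i=e_i$ it applies the degree-4 AM--GM $x_iy_iz_i\leq\frac12 x_i^2+\frac12 y_i^2z_i^2$ (pairing $x_i$ with $y_iz_i$, not the degree-6 three-way AM--GM you were worried about), then the same degree-4 SoS fact $\sum_i y_i^2z_i^2\leq\|y\|^2\|z\|^2$ and the same constraint-propagation step $\cA\provable_4\|y\|^2\|z\|^2=1$. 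Your pseudo--Cauchy--Schwarz plus classical Cauchy--Schwarz is just a coordinate-free repackaging of that AM--GM step, and the degree-4 identity you display is the coordinate-free form of the paper's $\sum_i y_i^2z_i^2\leq\|y\|^2\|z\|^2$; the strong reading of the sphere constraints that you flag is likewise implicit in the paper's argument.
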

\pref{lem:nuclear_orthogonal} states that the SoS relaxations of the
nuclear norm and injective norm are essentially ``integral'' for
orthogonal tensors. Note, this should not be a huge surprise
since it is well-known that polynomial time methods such as power
iteration succeed at decomposing orthogonal tensors
\citep{kolda2009tensor}. We should mention it doesn't seem
possible to directly apply such results to give agnostic
learning guarantees along the lines of our main theorem. While
our benchmark is an orthogonal tensor, the data itself may have no
orthogonal structure, and thus there is no clear object to which one might apply such a decomposition.

\begin{proof}[\pfref{lem:nuclear_orthogonal}]
We may assume $\lambda_i\geq{}0$ without loss of generality. We first prove
equality for the injective norms. Let
$i^{\star}=\argmax_{i\leq{}r}\lambda_i$. As a starting point,
for any $k$ we have
$\nrm*{T}_{\injt_k}\geq{}\nrm*{T}_{\inj}\geq{}\lambda_{i^{\star}}$
by exhibiting $u_{i^{\star}}\tens{}v_{i^{\star}}\tens{}w_{i^{\star}}$
as a feasible solution to the supremum in $\nrm*{T}_{\inj}=\sup_{\nrm*{x}_2=\nrm*{y}_2=\nrm*{z}_2=1}\tri*{T,x\tens{}y\tens{}z}$.

For the upper bound, let $x,y,z$ be indeterminates and---exploiting orthogonality---let us change
coordinates such that $u_i=v_i=w_i=e_i$. Then we have
\[
\tri*{T,x\tens{}y\tens{}z} = \sum_{i=1}^{r}\lambda_ix_i{}y_iz_i
\]
From equation \pref{eq:cs_poly}, we have
\[
\provable_{4} x_iy_iz_i \leq{} \frac{1}{2}x_i^{2} +
\frac{1}{2}y_i^{2}z_{i}^{2}.
\]
We also have
$\provable_{4}\sum_{i=1}^{r}y_i^{2}z_i^{2}\leq{}(\sum_{i=1}^{r}y_i^{2})(\sum_{i=1}^{r}z_i^{2})$. 
By the additivity of SoS proofs, and since we have assumed
$\lambda_i\geq{}0$, this implies
\[
\provable_{4}
\tri*{T,x\tens{}y\tens{}z} \leq{} \max_{i\leq{}r}\lambda_i\cdot\prn*{\frac{1}{2}\nrm*{x}_{2}^{2} +
  \frac{1}{2}\nrm*{y}_{2}^{2}\nrm*{z}_{2}^{2}}.
\]
Now let $\cA=\crl*{\nrm*{x}_{2}^{2}=1, \nrm*{y}_{2}^{2}=1,
  \nrm*{z}_{2}^{2}=1}$. We claim
$\cA\provable_{4}\nrm*{y}_{2}^{2}\nrm*{z}_{2}^{2}=1$. To see this,
write $1-\nrm*{y}_{2}^{2}\nrm*{z}_{2}^{2} =
(1-\nrm*{y}_{2}^{2})(1+\nrm*{z}_{2}^{2})  + (1-\nrm*{z}_{2}^{2}) +
(\nrm*{y}_{2}^{2}-1)$ and use that $\deg((1-\nrm*{y}_{2}^{2})(1+\nrm*{z}_{2}^{2}))=4$.

Putting everything together, we see that
$\cA\provable_{4}\tri*{T,x\tens{}y\tens{}z}\leq{}\max_{i\leq{}r}\lambda_i$. Thus, since the $\ls_2$ norm is
preserved under change of basis, it follows that if $\mu$ is any feasible degree-4 pseudodistribution for
the maximization problem \pref{eq:sos_injective}, we must have $\Ep_{\mu}
\tri*{T,x\tens{}y\tens{}z}\leq{}\max_{i}\lambda_i$, and so
$\nrm*{T}_{\injt_k}\leq\max_{i}\lambda_i$.

We now establish equality for the nuclear norms. We trivially have
$\nrm*{T}_{\nuct_k}\leq{}\nrm*{T}_{\nuc}\leq{}\sum_{i=1}^{r}\lambda_i$
by exhibiting the decomposition $T =
\sum_{i=1}^{r}\lambda_i\cdot{}u_i\tens{}v_i\tens{}w_i$ as a feasible
solution to the minimization problem in \pref{eq:tensor_nuclear_norm}. For the other direction, define
$X^{\star}=\sum_{i=1}^{r}u_i\tens{}v_i\tens{}w_i$, and observe that
the equality we just established for the injective norm implies
$\nrm*{X^{\star}}_{\injt_k}\leq{}1$. Thus, using the duality of the SoS
nuclear norm and injective norm from \pref{prop:sos_dual}, we have
\[
\nrm*{T}_{\nuct_k}
=\sup_{X\in\RTthree:\nrm*{X}_{\injt_k}\leq{}1}\tri*{X,T}\geq{}\tri*{X^{\star},T}
= \sum_{i=1}^{r}\lambda_i,
\]
where the last equality uses that $\crl*{u_i}$, $\crl*{v_i}$, and
$\crl*{w_i}$ are all orthogonal.
\end{proof}

\begin{proposition}
  \label{prop:injective_ub}
Let $k\geq{}4$. For any degree-$k$ pseudodistribution $\mu$,
    \begin{equation}
      \label{eq:injective_ub_pseudo}
      \tri*{T,\Ep_\mu\brk*{x\tens{}y\tens{}z}}\leq\nrm*{T}_{\injt_k}\cdot\sqrt{\Ep_\mu\nrm*{x}^{2}_{2}\cdot \Ep_\mu\nrm*{y}^{2}_{2}\cdot \Ep_\mu\nrm*{z}^{2}_{2}}.
    \end{equation}
    Furthermore, the following statements hold:
    \begin{equation}
      \label{eq:injective_ub_sos}
      \begin{aligned}
        \crl*{\nrm*{z}_{2}^{2}\leq{}1} \provable_{k}
        \tri*{T,x\tens{}y\tens{}z} \leq{} \nrm*{T}_{\injt_k}\cdot\prn*{\frac{1}{2}\nrm*{x}_{2}^{2}
        + \frac{1}{2}\nrm*{y}_{2}^{2}},\\
        \crl*{\nrm*{y}_{2}^{2}\leq{}1} \provable_{k}
        \tri*{T,x\tens{}y\tens{}z} \leq{}
        \nrm*{T}_{\injt_k}\cdot\prn*{\frac{1}{2}\nrm*{x}_{2}^{2}
        + \frac{1}{2}\nrm*{z}_{2}^{2}},\\
        \crl*{\nrm*{x}_{2}^{2}\leq{}1} \provable_{k}
        \tri*{T,x\tens{}y\tens{}z} \leq{}
        \nrm*{T}_{\injt_k}\cdot\prn*{\frac{1}{2}\nrm*{y}_{2}^{2}
        + \frac{1}{2}\nrm*{z}_{2}^{2}}.
      \end{aligned}
    \end{equation}
\end{proposition}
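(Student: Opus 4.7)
The plan is to prove the pseudodistribution bound \pref{eq:injective_ub_pseudo} by a rescaling argument, then derive the three SoS proofs in \pref{eq:injective_ub_sos} by combining it with SoS duality and scalar AM-GM.

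For \pref{eq:injective_ub_pseudo}, the key idea is to rescale $\mu$ so that the three relevant second moments each equal $1$, at which point the definition of $\nrm{T}_{\injt_k}$ applies directly. Set $\alpha = \sqrt{\Ep_\mu\nrm{x}_2^2}$, $\beta = \sqrt{\Ep_\mu\nrm{y}_2^2}$, $\gamma = \sqrt{\Ep_\mu\nrm{z}_2^2}$, and first suppose $\alpha,\beta,\gamma > 0$. Define $\mu'$ by $\Ep_{\mu'}f(x,y,z) \ldef \Ep_\mu f(x/\alpha, y/\beta, z/\gamma)$. Since $p(x/\alpha,y/\beta,z/\gamma)$ has the same degree as $p$, one verifies that $\mu'$ is a valid degree-$k$ pseudodistribution, and by construction $\Ep_{\mu'}\nrm{x}_2^2 = \Ep_{\mu'}\nrm{y}_2^2 = \Ep_{\mu'}\nrm{z}_2^2 = 1$. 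Applying the definition \pref{eq:sos_injective} of the SoS injective norm to $\mu'$ then yields $\tfrac{1}{\alpha\beta\gamma}\tri{T,\Ep_\mu[x\tens y\tens z]} = \tri{T,\Ep_{\mu'}[x\tens y\tens z]} \leq \nrm{T}_{\injt_k}$, which rearranges to \pref{eq:injective_ub_pseudo}. For the edge case $\alpha = 0$ (symmetrically $\beta = 0$ or $\gamma = 0$), pseudo-Cauchy-Schwarz \pref{eq:cs_pseudo} applied to $f = x_i$ and $g = y_j z_k$ (in both sign conventions) gives $|\Ep_\mu x_i y_j z_k| \leq \sqrt{\Ep_\mu x_i^2 \cdot \Ep_\mu (y_j z_k)^2} = 0$ for every $i,j,k$, so $\tri{T, \Ep_\mu[x\tens y\tens z]} = 0$ and the bound holds trivially.

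For \pref{eq:injective_ub_sos}, by symmetry it suffices to prove the first inequality. By the LP duality between degree-$k$ SoS proofs and degree-$k$ pseudodistributions \cite{barak2016sumofsquares}, it is enough to show that every degree-$k$ pseudodistribution $\mu$ satisfying $\mu\entails\crl{\nrm{z}_2^2 \leq 1}$ obeys
\[
\Ep_\mu\tri{T, x\tens y\tens z} \leq \nrm{T}_{\injt_k}\cdot\tfrac{1}{2}\prn{\Ep_\mu\nrm{x}_2^2 + \Ep_\mu\nrm{y}_2^2}.
\]
Taking the trivial SoS multiplier $h = 1$ in the definition of $\mu\entails\crl{\nrm{z}_2^2 \leq 1}$ gives $\Ep_\mu\nrm{z}_2^2 \leq 1$. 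Combining this with \pref{eq:injective_ub_pseudo} yields $\Ep_\mu\tri{T, x\tens y\tens z} \leq \sqrt{\Ep_\mu\nrm{x}_2^2 \cdot \Ep_\mu\nrm{y}_2^2}\cdot\nrm{T}_{\injt_k}$, and scalar AM-GM $\sqrt{ab} \leq \tfrac{1}{2}(a+b)$ applied to $a = \Ep_\mu\nrm{x}_2^2$, $b = \Ep_\mu\nrm{y}_2^2$ completes the bound.

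The main subtlety I anticipate lies in the SoS duality step: one must confirm that a degree-$k$ pseudodistribution-side inequality yields an SoS certificate of degree exactly $k$ (not merely $O(k)$). This is standard for the constraint $\crl{\nrm{z}_2^2 \leq 1}$ once one tracks that SoS multipliers of degree $\leq k-2$ combined with the polynomial $\nrm{z}_2^2 - 1$ (degree $2$) stay within degree $k$. The rescaling argument for Part 1 is otherwise routine once the edge case $\alpha\beta\gamma = 0$ is pinned down via pseudo-Cauchy-Schwarz.
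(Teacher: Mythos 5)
Your proof is correct and matches the paper's own argument essentially verbatim: part (1) by rescaling the pseudodistribution so it becomes feasible for the supremum defining $\nrm{\cdot}_{\injt_k}$, and part (2) by establishing the pseudodistribution-side inequality via part (1) and scalar AM-GM, then invoking SoS/pseudodistribution duality. You add two small details the paper leaves implicit — the degenerate case $\alpha\beta\gamma=0$, which you correctly dispatch via pseudo-Cauchy--Schwarz (admissible since $k\ge 4$), and the degree bookkeeping in the duality step — both of which are sensible to flag. The one caveat worth noting is that the passage from a pseudodistribution-side inequality to a degree-$k$ SoS certificate is an instance of strong duality, which in general needs a mild compactness/Archimedean hypothesis; but the paper relies on exactly the same step without further comment, so this is not a gap in your argument relative to theirs.
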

\begin{proof}[\pfref{prop:injective_ub}]
Equation \pref{eq:injective_ub_pseudo} follows by rescaling a given
pseudodistribution $\mu$ by using
$x'=x/\sqrt{\Ep_{\mu}\nrm*{x}_{2}^{2}}$ and so forth, so that the
pseudodistribution is feasible for the maximization problem \pref{eq:sos_injective}.

For \pref{eq:injective_ub_sos}, let $\mu$ be a degree-$k$
pseudodistribution with $\mu\entails\crl*{\nrm*{z}_{2}^{2}\leq{}1}$. Then,
using \pref{eq:injective_ub_pseudo} and the AM-GM inequality we get
$\tri*{T,\Ep_{\mu}\brk*{x\tens{}y\tens{}z}}\leq\frac{\nrm*{T}_{\injt_k}}{2}\prn*{\Ep_{\mu}\nrm*{x}^{2}_{2}
  + \Ep_{\mu}\nrm*{y}^{2}_{2}}$. Using linearity of the pseudoexpectation
operator we have $\Ep_{\mu}\brk*{\frac{\nrm*{T}_{\injt_k}}{2}\prn*{\nrm*{x}^{2}_{2}
    + \nrm*{y}^{2}_{2}}-\tri*{T,x\tens{}y\tens{}z}}\geq{}0$, and so \pref{eq:injective_ub_sos} follows from the
duality of pseudoexpectations and sum-of-squares proofs. The remaining
statements follow by symmetry.
\end{proof}

\subsubsection{Projections}
Here we state some basic results regarding the projection operators defined in \pref{sec:subgradient_norm}.
\begin{proposition}
\label{prop:qperp_kernel}
Let $x,y,z\in\bbR^{d}$ be given. If  at least two of the follow
conditions hold:
\[1)\,\, x\in\bbU,\quad 2)\,\, y\in\bbV,\quad 3)\,\, z\in\bbW,\] then
$\cQ_{T^{\perp}}(x\tens{}y\tens{}z)=0$.
\end{proposition}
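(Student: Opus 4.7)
The plan is to simply unpack the definition of $\cQ_{T^{\perp}}$ from \pref{eq:tensor_subspaces} as the sum of the four operators $\cQ_{T^{\perp}}^{0},\cQ_{T^{\perp}}^{1},\cQ_{T^{\perp}}^{2},\cQ_{T^{\perp}}^{3}$ defined in \pref{eq:all_subspaces}, and observe that every one of these four operators contains at least two orthogonal-complement projections (in distinct modes). Under the hypothesis that at least two of $x\in\bbU$, $y\in\bbV$, $z\in\bbW$ hold, we will show that in each summand, one of the orthogonal-complement projections annihilates its input, forcing the whole rank-one term to be zero.

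Concretely, I would argue by cases on which two of the three conditions hold. By symmetry it suffices to treat the case $x\in\bbU$ and $y\in\bbV$. Using the identity $(A_1\tens A_2\tens A_3)(x\tens y\tens z)=(A_1 x)\tens(A_2 y)\tens(A_3 z)$ from \pref{sec:preliminaries}, along with $\cP_{\bbU^{\perp}}x=0$ and $\cP_{\bbV^{\perp}}y=0$, I would check each of the four summands:
\begin{itemize}
\item $\cQ_{T^{\perp}}^{0}(x\tens y\tens z)=(\cP_{\bbU^{\perp}}x)\tens(\cP_{\bbV^{\perp}}y)\tens(\cP_{\bbW^{\perp}}z)=0$, since $\cP_{\bbU^{\perp}}x=0$.
\item $\cQ_{T^{\perp}}^{1}(x\tens y\tens z)=(\cP_{\bbU}x)\tens(\cP_{\bbV^{\perp}}y)\tens(\cP_{\bbW^{\perp}}z)=0$, since $\cP_{\bbV^{\perp}}y=0$.
\item $\cQ_{T^{\perp}}^{2}(x\tens y\tens z)=(\cP_{\bbU^{\perp}}x)\tens(\cP_{\bbV}y)\tens(\cP_{\bbW^{\perp}}z)=0$, since $\cP_{\bbU^{\perp}}x=0$.
\item $\cQ_{T^{\perp}}^{3}(x\tens y\tens z)=(\cP_{\bbU^{\perp}}x)\tens(\cP_{\bbV^{\perp}}y)\tens(\cP_{\bbW}z)=0$ (either projection suffices).
\end{itemize}
Summing gives $\cQ_{T^{\perp}}(x\tens y\tens z)=0$. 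The other two cases ($x\in\bbU,\,z\in\bbW$ and $y\in\bbV,\,z\in\bbW$) are handled identically.

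There is no real obstacle here; the statement is essentially a combinatorial bookkeeping observation about the definition of $\cQ_{T^{\perp}}$, namely that each of its four summands has at least two perpendicular projections, so any input whose rank-one structure lies ``inside'' at least two of $\bbU,\bbV,\bbW$ is killed by every summand. The only thing to be careful about is matching the indices $0,1,2,3$ in \pref{eq:all_subspaces} to the correct mode pattern when doing the case analysis, which is purely mechanical.
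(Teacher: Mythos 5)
Your proof is correct and is essentially identical to the paper's: the paper also fixes the case $x\in\bbU,\,y\in\bbV$, notes that $\cP_{\bbU^{\perp}}x=0$ kills $\cQ_{T^{\perp}}^{0},\cQ_{T^{\perp}}^{2},\cQ_{T^{\perp}}^{3}$ while $\cP_{\bbV^{\perp}}y=0$ kills $\cQ_{T^{\perp}}^{1}$, and appeals to symmetry for the remaining cases. You have simply written out each summand explicitly, which is a harmless elaboration.
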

\begin{proof}[\pfref{prop:qperp_kernel}]
Suppose that $x\in\bbU$ and $y\in\bbV$. Then
$\cP_{\bbU^{\perp}}(x)=0$, and so
$\cQ_{T^{\perp}}^{0}(x\tens{}y\tens{}z)=\cQ_{T^{\perp}}^{2}(x\tens{}y\tens{}z)=\cQ_{T^{\perp}}^{3}(x\tens{}y\tens{}z)=0$. We
also have $\cP_{\bbV^{\perp}}(y)=0$, and so
$\cQ_{T^{\perp}}^{1}(x\tens{}y\tens{}z)=0$. The remaining cases follow by symmetry.
\end{proof}

\begin{lemma}
\label{lem:projection_injective}
Let $k\geq{}4$. For any tensor $X\in\RTthree$, and any subspaces $\bbU,\bbV,\bbW$ we have
\[
\nrm*{(\cP_{\bbU}\tens{}\cP_{\bbV}\tens{}\cP_{\bbW})X}_{\injt_k}\leq{}
\nrm*{X}_{\injt_k},
\]
and in particular $\nrm*{\QTpara(X)}_{\injt_k} \leq{}
4\nrm*{X}_{\injt_k}$ and $\nrm*{\QTperp(X)}_{\injt_k} \leq{} 4\nrm*{X}_{\injt_k}$.
\end{lemma}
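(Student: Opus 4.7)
The plan is to exploit self-adjointness of the projections $\cP_{\bbU}, \cP_{\bbV}, \cP_{\bbW}$ and push the map onto the ``test variables'' at the level of the pseudodistribution. Concretely, for any indeterminates $x, y, z$,
\[
\tri*{(\cP_{\bbU}\tens\cP_{\bbV}\tens\cP_{\bbW})X,\,x\tens y\tens z}=\tri*{X,\,(\cP_{\bbU}x)\tens(\cP_{\bbV}y)\tens(\cP_{\bbW}z)},
\]
so given any degree-$k$ pseudodistribution $\mu$ with $\Ep_{\mu}\nrm*{x}_2^2=\Ep_{\mu}\nrm*{y}_2^2=\Ep_{\mu}\nrm*{z}_2^2=1$ witnessing the SoS injective norm of $(\cP_{\bbU}\tens\cP_{\bbV}\tens\cP_{\bbW})X$, I would define a new linear functional $\mu'$ on polynomials of degree at most $k$ by $\Ep_{\mu'}p(x,y,z) \ldef \Ep_{\mu}p(\cP_{\bbU}x,\cP_{\bbV}y,\cP_{\bbW}z)$.

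Next I would verify that $\mu'$ is itself a valid degree-$k$ pseudodistribution. Normalization is immediate since $\Ep_{\mu'}1 = \Ep_{\mu}1 = 1$. For positivity, I would observe that for any polynomial $f$ with $\deg f \leq k/2$, the substituted polynomial $g(x,y,z) \ldef f(\cP_{\bbU}x,\cP_{\bbV}y,\cP_{\bbW}z)$ has degree at most $k/2$ (linear substitution preserves degree), so $\Ep_{\mu'}f^{2} = \Ep_{\mu}g^{2} \geq 0$. Moreover, since orthogonal projections are contractive, $\Ep_{\mu'}\nrm*{x}_{2}^{2} = \Ep_{\mu}\nrm*{\cP_{\bbU}x}_{2}^{2} \leq \Ep_{\mu}\nrm*{x}_{2}^{2} = 1$, and similarly for $y$ and $z$.

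With $\mu'$ in hand, the value of the original objective at $\mu$ equals $\tri*{X, \Ep_{\mu'}[x\tens y\tens z]}$, and \pref{eq:injective_ub_pseudo} of \pref{prop:injective_ub} bounds this above by
\[
\nrm*{X}_{\injt_k}\cdot\sqrt{\Ep_{\mu'}\nrm*{x}_2^2 \cdot \Ep_{\mu'}\nrm*{y}_2^2 \cdot \Ep_{\mu'}\nrm*{z}_2^2} \leq \nrm*{X}_{\injt_k};
\]
taking the supremum over $\mu$ yields the first claim. The ``in particular'' statements follow from the triangle inequality applied to \pref{eq:tensor_subspaces}: each of $\QTpara$ and $\QTperp$ is a sum of four operators of the form $\cP_A\tens\cP_B\tens\cP_C$, so applying the first claim term-by-term gives the factor of $4$. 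I do not anticipate a substantive obstacle; the only subtlety is that $\mu'$ satisfies $\Ep_{\mu'}\nrm*{x}_2^2 \leq 1$ rather than equality, which is exactly why invoking the pre-proved \pref{prop:injective_ub} (rather than plugging $\mu'$ directly into the supremum definition of $\nrm*{\cdot}_{\injt_k}$) is required.
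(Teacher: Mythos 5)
Your proof is correct and follows the same route as the paper's: move the projections onto the test variables via self-adjointness, treat this as a pushforward of the pseudodistribution, invoke \pref{prop:injective_ub} (\pref{eq:injective_ub_pseudo}), and finish with contractivity of orthogonal projections and the triangle inequality over the four terms of $\QTpara$ and $\QTperp$. The one difference is cosmetic: the paper applies \pref{prop:injective_ub} directly to $\mu$ viewed with the projected arguments, leaving the construction of the pushforward pseudodistribution $\mu'$ implicit, whereas you spell out that $\mu'$ is finitely supported, normalized, and nonnegative on squares (since linear substitution preserves degree) — which is a worthwhile clarification, since $\mu'$ only satisfies $\Ep_{\mu'}\nrm{x}_2^2 \leq 1$ rather than equality, exactly as you note.
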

\begin{proof}[\pfref{lem:projection_injective}]
For any degree-$4$ pseudodistribution $\mu$ over indeterminates
$x,y,z$ we have
\begin{align*}
\Ep_{\mu}\tri*{(\cP_{\bbU}\tens{}\cP_{\bbV}\tens{}\cP_{\bbW})X,x\tens{}y\tens{}z}
&=
\Ep_{\mu}\tri*{X,(\cP_{\bbU}x)\tens{}(\cP_{\bbV}y)\tens{}(\cP_{\bbW}z)} \\
&\leq{}
\sqrt{
\Ep_{\mu}\nrm*{\cP_{\bbU}x}_{2}^{2}\cdot\Ep_{\mu}\nrm*{\cP_{\bbV}y}_{2}^{2}\cdot\Ep_{\mu}\nrm*{\cP_{\bbW}z}_{2}^{2}
} \\
&\leq{}
\sqrt{
\Ep_{\mu}\nrm*{x}_{2}^{2}\cdot\Ep_{\mu}\nrm*{y}_{2}^{2}\cdot\Ep_{\mu}\nrm*{z}_{2}^{2}
},
\end{align*}
where the first inequality uses \pref{prop:injective_ub} and the
second uses that
$\provable_{2}\nrm*{\cP_{\bbX}x}_{2}^{2}\leq{}\nrm*{x}_{2}^{2}$ for
any subspace $\bbX$. This establishes the first result.

Now observe from \pref{eq:all_subspaces} that $\nrm*{\QTpara(T)}_{\injt_k} \leq{}
\sum_{i=1}^{4}\nrm{\cQ_{T^{\para}}^{i}(T)}_{\injt_k}$ and
$\nrm*{\QTperp(T)}_{\injt_k} \leq{}
\sum_{i=1}^{4}\nrm{\cQ_{T^{\perp}}^{i}(T)}_{\injt_k}$. We thus obtain the
second result by applying the first to each of the summands.
\end{proof}

\subsubsection{Flattenings}
The multilinear rank of a tensor $T\in\RTthree$ is the triple
$(r_1,r_2,r_3)$, where
\begin{equation}
\label{eq:multilinear}
r_1(T) = \dim \mathrm{span}\crl*{T_{\cdot,j,k}
\mid{}j,k\in\brk*{d}},
\end{equation}
is the dimension of the space spanned by the mode-1 fibers and
$r_2(T)$ and $r_3(T)$ are defined likewise for the second and third mode.

We define the $i$th flattening map
$\flatten_{i}:\RTthree\to\bbR^{d\times{}d^{2}}$ via
\begin{equation}
\flatten_{1}(T)_{i,(j,k)} = \flatten_{2}(T)_{j,(i,k)} =
\flatten_{3}(T)_{k,(i,j)} = T_{i,j,k}.\label{eq:flatten}
\end{equation}
A standard result is that
$\rank(\flat_i(T))  = r_i(T)$ \citep{friedland2018nuclear}. We also
have the following comparison between the nuclear norm of the tensor
and its flattenings.
\begin{lemma}[\cite{friedland2018nuclear}, Theorem 9.4]
\label{lem:flattening_rank}
For any tensor $T\in\RTthree$,
\[
\nrm*{T}_{\nuc} \leq{}
\min\crl*{
  \sqrt{\min\crl*{r_2,r_3}}\nrm*{\flatten_1(T)}_{\nuc},
  \sqrt{\min\crl*{r_1,r_3}}\nrm*{\flatten_2(T)}_{\nuc},
\sqrt{\min\crl*{r_1,r_2}}\nrm*{\flatten_3(T)}_{\nuc}
}.
\]
\end{lemma}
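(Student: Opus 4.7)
The plan is to leverage the SVD of the flattening together with the classical matrix inequality $\nrm*{M}_{\nuc} \le \sqrt{\rank(M)}\nrm*{M}_{F}$ to lift a matrix nuclear norm bound to a tensor nuclear norm bound. Without loss of generality I will prove the inequality for $\flatten_1$; the other two follow by permuting the roles of the modes.

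First, I would take the SVD $\flatten_1(T)=\sum_{i=1}^{r_1}\sigma_i a_i b_i^{\top}$ with $a_i\in\bbR^{d}$, $b_i\in\bbR^{d^{2}}$ unit vectors, and reshape each $b_i$ into a $d\times d$ matrix $B_i$ with $\nrm*{B_i}_{F}=1$. By construction $T=\sum_{i=1}^{r_1}\sigma_i\, a_i\tens{}B_i$ (viewing $B_i$ as an element of $\bbR^{d}\tens{}\bbR^{d}$). Next I would take the SVD of each matrix $B_i=\sum_{j}\mu_{ij}\,p_{ij}q_{ij}^{\top}$, substitute, and read off a rank-one decomposition $T=\sum_{i,j}\sigma_i\mu_{ij}\cdot a_i\tens{}p_{ij}\tens{}q_{ij}$, which by the definition of the tensor nuclear norm immediately yields
\[
\nrm*{T}_{\nuc}\;\le\;\sum_{i,j}\abs*{\sigma_i\mu_{ij}}\;=\;\sum_{i}\sigma_i\,\nrm*{B_i}_{\nuc}.
\]

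The heart of the argument is a rank bound for $B_i$. Each $b_i$ lies in the row space of $\flatten_1(T)$, so $B_i$ lies in the linear span of the matrices obtained by reshaping the rows of $\flatten_1(T)$, which are precisely the slices $T_{i',\cdot,\cdot}$ for $i'\in\brk*{d}$. The column space of any such slice is contained in $\mathrm{span}\crl*{T_{\cdot',j,k}:j,k}$ viewed after permuting modes, i.e., in the span of the mode-$2$ fibers (dimension $r_2$), and the row space of any slice is contained in the span of the mode-$3$ fibers (dimension $r_3$). Both inclusions are preserved under linear combinations, so $\rank(B_i)\le\min\{r_2,r_3\}$. Applying the matrix inequality $\nrm*{B_i}_{\nuc}\le\sqrt{\rank(B_i)}\nrm*{B_i}_{F}=\sqrt{\min\{r_2,r_3\}}$ and combining with the bound from the previous paragraph gives
\[
\nrm*{T}_{\nuc}\;\le\;\sqrt{\min\{r_2,r_3\}}\sum_i\sigma_i\;=\;\sqrt{\min\{r_2,r_3\}}\,\nrm*{\flatten_1(T)}_{\nuc},
\]
and the other two branches of the minimum follow by applying the same argument to $\flatten_2$ and $\flatten_3$.

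The main obstacle is the rank bound for $B_i$ in the middle step: it requires correctly identifying the right singular vectors of $\flatten_1(T)$ with elements of the span of reshaped slices and then tracking the column/row-space inclusions through linear combinations. Everything else is bookkeeping (reshaping vectors as matrices and applying the standard matrix-level $\nuc\le\sqrt{\rank}\cdot F$ inequality).
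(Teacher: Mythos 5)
The paper does not prove this lemma; it is imported verbatim from \cite{friedland2018nuclear} (Theorem 9.4), so there is no in-paper proof to compare against. Your argument is a correct and essentially standard derivation of that result. The key step — that each reshaped right singular vector $B_i$ of $\flatten_1(T)$ is a linear combination of the slices $T_{i',\cdot,\cdot}$, whose columns are mode-$2$ fibers and whose rows are mode-$3$ fibers, hence $\rank(B_i)\le\min\{r_2,r_3\}$ — is exactly right, and the rest (the rank-one decomposition $T=\sum_{i,j}\sigma_i\mu_{ij}\,a_i\tens p_{ij}\tens q_{ij}$, the triangle inequality for $\nrm*{\cdot}_{\nuc}$, the bound $\nrm*{B_i}_{\nuc}\le\sqrt{\rank(B_i)}\nrm*{B_i}_F$ with $\nrm*{B_i}_F=\nrm*{b_i}_2=1$, and symmetry across the three flattenings) all check out.
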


\subsubsection{Concentration}
\begin{lemma}[Talagrand-type concentration for supremum of empirical process]
\label{lem:talagrand}
Let $\cF$ be a class of functions of the form $f:\cZ\to\bbR$. Let
$z_{1},\ldots,z_n$ be sampled i.i.d. from a distribution $\cD$ over
$\cZ$ that satisfies $\En\brk*{f(z)}=0$ and has $\abs*{f(z)}\leq{}c$
almost surely. Let $\sigma^{2}=\sup_{f\in\cF}\En{}f^{2}(z)$. Then for
any $\delta>0$, with probability at least $1-2\delta$ over the i.i.d. draw of $z_1,\ldots,z_n$,
\[
\sup_{f\in\cF}\abs*{\frac{1}{n}\sum_{t=1}^{n}f(z_t)}
\leq{} 4\En_{z_{1:n}}\En_{\eps}\sup_{f\in\cF}\frac{1}{n}\sum_{t=1}^{n}\eps_{t}f(z_t)
+ \sqrt{
\frac{2\sigma^{2}\log(1/\delta)}{n}
}
+ \frac{2c\log(1/\delta)}{n}.
\]
\end{lemma}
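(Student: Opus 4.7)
The plan is to combine Bousquet's sharpened version of Talagrand's concentration inequality for the supremum of an empirical process with the standard symmetrization inequality, and then clean up the cross term that Bousquet's bound introduces via a simple AM--GM step. This is a standard toolbox argument: there is no genuine novelty to introduce; the work is in bookkeeping the constants so that the Rademacher complexity term appears with coefficient $4$ rather than $2$.

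First, define
\[
Z_n \ldef \sup_{f\in\cF}\abs*{\frac{1}{n}\sum_{t=1}^{n}f(z_t)}.
\]
Since the class is centered with $\abs*{f(z)}\leq c$, the random variable $Z_n$ has bounded differences of order $2c/n$ when any single $z_t$ is replaced, and $\sup_{f\in\cF}\En f^{2}(z)\leq \sigma^{2}$. Bousquet's inequality then yields, with probability at least $1-\delta$,
\[
Z_n \leq \En Z_n + \sqrt{\frac{2\prn*{\sigma^{2}+2c\En Z_n}\log(1/\delta)}{n}} + \frac{c\log(1/\delta)}{3n}.
\]
(Strictly speaking, if one wants this bound to hold simultaneously for $Z_n$ and a matching lower deviation, or if one prefers to separately apply Talagrand to $\sup_f \hat{\En}_n f$ and $\sup_f (-\hat{\En}_n f)$ to handle the absolute value, a union bound produces the $1-2\delta$ probability in the statement.)

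Next, apply the classical symmetrization inequality
\[
\En Z_n \leq 2\,\En_{z_{1:n}}\En_{\eps}\sup_{f\in\cF}\frac{1}{n}\sum_{t=1}^{n}\eps_t f(z_t) \rdef 2\Rad_n(\cF),
\]
which follows by introducing an independent ghost sample and inserting Rademacher signs. Substituting this into the Bousquet bound, and using $\sqrt{a+b}\leq \sqrt{a}+\sqrt{b}$ together with the AM--GM inequality $\sqrt{4c\,\Rad_n(\cF)\log(1/\delta)/n}\leq 2\Rad_n(\cF) + c\log(1/\delta)/n$, the cross term containing $\En Z_n$ gets absorbed: the contribution to the Rademacher term becomes $2\Rad_n(\cF) + 2\Rad_n(\cF)=4\Rad_n(\cF)$, and the residual $c\log(1/\delta)/n$ combines with Bousquet's $c\log(1/\delta)/(3n)$ to give the stated $2c\log(1/\delta)/n$ (with slack). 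The variance term $\sqrt{2\sigma^{2}\log(1/\delta)/n}$ passes through untouched.

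The only ``hard'' step is the constant management in the last paragraph: choosing the splitting in AM--GM to end up with coefficient exactly $4$ on $\Rad_n(\cF)$ and coefficient exactly $2$ on $c\log(1/\delta)/n$ after collecting all residual terms. There is no conceptual obstacle, since Bousquet's inequality and the symmetrization lemma are both off-the-shelf; one simply has to choose the AM--GM parameter so that the remaining lower-order pieces fit under the quoted bound. If a tighter constant is desired, Bousquet's inequality with its sharper $\sigma_\star^{2} = \sigma^{2}+2c\En Z_n$ formulation could be tracked more carefully, but the stated form is already sufficient for all downstream applications in the body of the paper.
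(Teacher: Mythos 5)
Your proposal is correct and follows essentially the same route as the paper: the paper's one-line proof cites Theorem A.1 of Bartlett, Bousquet, and Mendelson (2005)---which is precisely a Bousquet-type Talagrand concentration bound---applied to $\cF$ and $-\cF$, followed by in-expectation symmetrization, exactly the steps you unpack. There is a minor $\sqrt{2}$ slip in your cross-term bookkeeping (after substituting $\En Z_n \leq 2\Rad_n(\cF)$ the cross term should be $\sqrt{8c\,\Rad_n(\cF)\log(1/\delta)/n}$, not $\sqrt{4c\,\Rad_n(\cF)\log(1/\delta)/n}$), but applying AM--GM as $\sqrt{4\Rad_n(\cF)\cdot 2c\log(1/\delta)/n}\leq 2\Rad_n(\cF)+c\log(1/\delta)/n$ still lands you at $4\Rad_n(\cF)+\sqrt{2\sigma^2\log(1/\delta)/n}+\tfrac{4}{3}c\log(1/\delta)/n$, which is within the stated bound.
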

\begin{proof}[\pfref{lem:talagrand}]
Follows from Theorem A.1 of \cite{bartlett2005local} applied to the classes $\cF$ and
$-\cF$ separately, along with the standard in-expectation
symmetrization lemma for uniform convergence.
\end{proof}


\section{Proofs from \pref{sec:subgradient}}
\label{app:subgradient}

The main result in this section is to prove
\pref{thm:sos_subgradient}, then use this result to prove
\pref{thm:sos_norm_comparison}. Before proceeding to the main proofs we state an intermediate result.
\begin{lemma}[\cite{potechin2017exact}]
\label{lem:trilinear_ps}
Let $x,y,z\in\bbR^{d}$ be indeterminates. Let
$\cA=\crl*{\nrm*{y}_{2}^{2}=1}$. Then for any $r\in\brk*{d}$,
\[
\cA\provable_{6} \sum_{i=1}^{r}x_iy_iz_i \leq{}
\frac{1}{2}\nrm*{x}_{2}^{2} + \frac{1}{2}\nrm*{z}_{2}^{2}
-\frac{1}{4}\sum_{i=r+1}^{d}x_i^{2} + z_i^{2}
-\frac{1}{8}\sum_{i=1}^{d}\sum_{j\neq{}i}y_i^{2}\prn*{x_j^{2} +
z_{j}^{2} + y_j^{2}(\nrm*{x}_{2}^{2} + \nrm*{z}_{2}^{2})}.
\]
\end{lemma}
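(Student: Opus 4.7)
My plan is to build an explicit degree-six sum-of-squares witness for the claim by combining a simple base identity with two successive applications of the constraint $\nrm*{y}_2^2 = 1$.

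The starting point is the elementary identity
\[
4 x_i y_i z_i \;=\; (x_i^2 + z_i^2)(1 + y_i^2) \;-\; (x_i - y_i z_i)^2 \;-\; (x_i y_i - z_i)^2,
\]
valid for every coordinate $i$. Summing over $i \in \{1,\ldots,r\}$ and retaining the two squared quantities as SoS slack already gives a degree-four proof that $4 \sum_{i=1}^r x_i y_i z_i \le \sum_{i=1}^r (x_i^2 + z_i^2)(1 + y_i^2)$. The two reserved squares will be reused at the end.

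Next, I would use $\nrm*{y}_2^2 = 1$ to redistribute each of $\sum_{i=1}^r(x_i^2 + z_i^2)$ and $\sum_{i=1}^r y_i^2(x_i^2 + z_i^2)$ across all $d$ coordinates: write $\sum_{i=1}^r = \sum_{i=1}^d - \sum_{i=r+1}^d$, and multiply the first of these by $\nrm*{y}_2^2 = \sum_j y_j^2$. Splitting each product into diagonal ($j=i$) and off-diagonal ($j\neq i$) pieces yields the leading term $\tfrac{1}{2}(\nrm*{x}_2^2 + \nrm*{z}_2^2)$, the truncation slack $-\tfrac{1}{4}\sum_{i>r}(x_i^2 + z_i^2)$, and a first-order double-sum $\sum_i\sum_{j\ne i} y_i^2(x_j^2 + z_j^2)$ with natural coefficient $\tfrac{1}{4}$. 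So far everything is degree-four modulo $1 - \nrm*{y}_2^2$.

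To produce the final degree-six slack $-\tfrac{1}{8}\sum_i\sum_{j\ne i} y_i^2 y_j^2(\nrm*{x}_2^2 + \nrm*{z}_2^2)$, I would apply $\nrm*{y}_2^2 = \sum_\ell y_\ell^2$ a second time, acting on half of the first-order double sum. Concretely, the coefficient $\tfrac{1}{4}$ is split as $\tfrac{1}{8} + \tfrac{1}{8}$: one half stays in place to match the target, while the other half is multiplied by $\nrm*{y}_2^2$, producing a mixed $y_\ell^2 y_i^2$ contribution---exactly of the form $y_i^2 y_j^2(\nrm*{x}_2^2 + \nrm*{z}_2^2)$---plus a residual in which the $\ell$-index coincides with one of $i,j$. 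This residual together with the contributions from coordinates $i > r$ inside the reserved squares $(x_i - y_i z_i)^2 + (x_i y_i - z_i)^2$ and additional squares of the form $(y_i x_j - y_j x_i)^2$ and $(y_i z_j - y_j z_i)^2$ are used to absorb the remaining negative terms.

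The main obstacle is the coefficient accounting at the final degree-six stage: the $\tfrac{1}{4}\to\tfrac{1}{8}+\tfrac{1}{8}$ split, the asymmetric treatment of $i\le r$ versus $i>r$ in the reserved squares $(x_i - y_i z_i)^2 + (x_i y_i - z_i)^2$, and the need to balance the $x$- and $z$-contributions simultaneously must all be verified to land on exactly the stated constants. The required bookkeeping---identifying the precise mix of auxiliary cross-squares that certifies the bound---is the technical heart of the lemma and parallels the dual certificate construction used in \cite{potechin2017exact}.
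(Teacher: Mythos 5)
The paper does not prove this lemma---it cites it from \cite{potechin2017exact}---so there is no in-paper proof to compare against, and I will instead evaluate your proposal on its own terms.

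Your opening identity
\[
4x_iy_iz_i = (x_i^2+z_i^2)(1+y_i^2) - (x_i-y_iz_i)^2 - (x_iy_i-z_i)^2
\]
is correct and is a natural starting point, and the idea of redistributing $\sum_{i\le r}(x_i^2+z_i^2)$ across all $d$ coordinates and then multiplying by $\nrm*{y}_2^2=1$ a second time to generate the degree-six $y_i^2y_j^2$ terms is the right general shape. But what you have written is a plan, not a proof: you say yourself that the $\tfrac14\to\tfrac18+\tfrac18$ split, the $i\le r$ vs.\ $i>r$ asymmetry, and the choice of auxiliary cross-squares ``must all be verified to land on exactly the stated constants'' and that this bookkeeping ``is the technical heart of the lemma.'' For an SoS lemma of this type, the entire content \emph{is} the bookkeeping: one has to actually exhibit the SoS polynomial $p_0$ and the degree-$4$ multiplier of $(1-\nrm*{y}_2^2)$ and check that they combine to the stated right-hand side, and this you have not done. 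No reviewer can certify the step that you explicitly leave unverified.

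I would flag one further concrete reason for caution before attempting the bookkeeping: the target itself looks off by a factor of two. Taking $r=1$, $d=2$, $x=z=e_1$ (so $\nrm*{x}_2=\nrm*{z}_2=1$), and $y=(\sqrt{4/5},\sqrt{1/5})$ (so $\nrm*{y}_2^2=1$) gives $\sum_{i\le r}x_iy_iz_i = 2/\sqrt5\approx 0.894$, while the stated right-hand side evaluates to $1-\tfrac18\bigl(0.8\cdot 0.4 + 0.2\cdot 3.6\bigr)=1-0.13=0.87$, so the inequality as written fails. This is consistent with the internal evidence in the paper: after substituting $\nrm*{x}_2^2=\nrm*{z}_2^2=1$ the stated lemma would produce a coefficient of $\tfrac14$ on $\sum_{i\ne j}y_i^2y_j^2$, yet Corollary~1 records $\tfrac18$; the two statements match only if the coefficient on $y_j^2(\nrm*{x}_2^2+\nrm*{z}_2^2)$ in the lemma is $\tfrac{1}{16}$ rather than $\tfrac18$. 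So any attempt that tries to ``land on exactly the stated constants'' is chasing a moving target; you would first need to pin down the correct constants (the ones that feed Corollary~1, which are what the subgradient theorem actually uses) before the bookkeeping can be checked.
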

\begin{corollary}
\label{cor:trilinear_ps}
Let $\cA=\crl*{\nrm*{x}_{2}^{2}=1, \nrm*{z}_{2}^{2}=1}$. Then for any $r\in\brk*{d}$,
\[
\cA\provable_{6} \sum_{i=1}^{r}x_iy_iz_i \leq{}
1
-\frac{1}{4}\sum_{i=r+1}^{d}x_i^{2} + z_i^{2}
-\frac{1}{8}\sum_{i=1}^{d}\sum_{j\neq{}i}y_i^{2}\prn*{x_j^{2} +
y_j^{2} + z_{j}^{2}}.
\]
\end{corollary}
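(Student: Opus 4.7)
The plan is to derive \pref{cor:trilinear_ps} as a direct specialization of \pref{lem:trilinear_ps}, with the additional equality constraints $\nrm*{x}_2^2 = 1$ and $\nrm*{z}_2^2 = 1$ from $\cA$ used to simplify the right-hand side. Note that the Corollary's bound is a mild weakening of what one obtains by plugging these equalities into \pref{lem:trilinear_ps}, and this weakening will turn out to be a gratuitous sum-of-squares term, so the whole derivation stays within degree $6$.

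First I would invoke \pref{lem:trilinear_ps} to produce the degree-$6$ SoS bound
\[
\sum_{i=1}^{r} x_i y_i z_i \leq \tfrac{1}{2}\nrm*{x}_2^2 + \tfrac{1}{2}\nrm*{z}_2^2 - \tfrac{1}{4}\sum_{i=r+1}^{d}(x_i^2 + z_i^2) - \tfrac{1}{8}\sum_{i=1}^{d}\sum_{j \neq i} y_i^2\bigl(x_j^2 + z_j^2 + y_j^2(\nrm*{x}_2^2 + \nrm*{z}_2^2)\bigr).
\]
Next I would substitute $\nrm*{x}_2^2 = \nrm*{z}_2^2 = 1$ everywhere using the equality constraints in $\cA$ (formally, multiplying $\nrm*{x}_2^2 - 1$ and $\nrm*{z}_2^2 - 1$ by polynomial multipliers of degree at most $4$). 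The leading $\tfrac{1}{2}\nrm*{x}_2^2 + \tfrac{1}{2}\nrm*{z}_2^2$ collapses to $1$, and $y_j^2(\nrm*{x}_2^2 + \nrm*{z}_2^2)$ becomes $2 y_j^2$, yielding
\[
\sum_{i=1}^{r} x_i y_i z_i \leq 1 - \tfrac{1}{4}\sum_{i=r+1}^{d}(x_i^2 + z_i^2) - \tfrac{1}{8}\sum_{i=1}^{d}\sum_{j \neq i} y_i^2(x_j^2 + z_j^2 + 2 y_j^2).
\]
Finally, to match the exact form of \pref{cor:trilinear_ps}, I would add the manifestly degree-$4$ sum of squares
\[
\tfrac{1}{8}\sum_{i=1}^{d}\sum_{j \neq i} (y_i y_j)^2 \;\geq\; 0
\]
to the right-hand side, which is precisely the operation of replacing the $2 y_j^2$ inside the parenthesis by $y_j^2$.

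The main thing to verify is that every step stays within a degree-$6$ SoS proof: the base bound from \pref{lem:trilinear_ps} is already degree $6$, the substitution uses polynomial multipliers times the degree-$2$ equalities $\nrm*{x}_2^2 - 1$ and $\nrm*{z}_2^2 - 1$, and the padding term is degree $4$. There is essentially no new mathematical content beyond \pref{lem:trilinear_ps}; the Corollary is a cosmetic rewriting in the form needed for the subgradient construction of \pref{thm:sos_subgradient}, so the bulk of the work really lies in the earlier lemma.
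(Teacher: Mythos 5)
Your proposal is correct and takes the same approach as the paper's (somewhat terser) proof: substitute the equality constraints $\nrm*{x}_2^2 = 1$ and $\nrm*{z}_2^2 = 1$ into \pref{lem:trilinear_ps}, collapsing the leading term to $1$ and each $y_j^2(\nrm*{x}_2^2 + \nrm*{z}_2^2)$ to $2y_j^2$, then absorb the surplus $\tfrac{1}{8}\sum_{i}\sum_{j\neq i} y_i^2 y_j^2$ as a gratuitous degree-$4$ sum of squares; you are in fact slightly more explicit than the paper, which spells out only the substitution identity $\sum_{i}\sum_{j\neq i} y_i^2 y_j^2(\nrm*{x}_2^2-1)=0$ and leaves the padding implicit. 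One caveat you share with the paper: \pref{lem:trilinear_ps} as stated carries the hypothesis $\nrm*{y}_2^2 = 1$, which is absent from the Corollary's $\cA$, so the Corollary's axiom set should presumably (and in its actual uses in the proof of \pref{thm:sos_subgradient} does) include the $y$-normalization as well.
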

\begin{proof}[\pfref{cor:trilinear_ps}]
We will show that 
$\cA\provable_{6}-\sum_{i=1}^{d}\sum_{j\neq{}i}y_{i}^{2}y_{j}^{2}\nrm*{x}_{2}^{2}=-\sum_{i=1}^{d}\sum_{j\neq{}i}y_{i}^{2}y_{j}^{2}$;
the term involving $\nrm*{z}_{2}^{2}$ follows from the same
reasoning. The desired inequality is equivalent to
$\sum_{i=1}^{d}\sum_{j\neq{}i}y_{i}^{2}y_{j}^{2}(\nrm*{x}_{2}^{2}-1)=0$,
which is clearly the product of a degree-$4$ polynomial and the
equality constraint $\crl*{\nrm*{x}_{2}^{2}-1=0}$.
\end{proof}

\begin{proof}[\pfref{thm:sos_subgradient}]
\emph{Preliminaries.} We first claim that the following equalities hold:
\begin{itemize}
\item $\QTpara^{0}(X^{\star}) = X^{\star}$.
\item
  $\nrm*{X^{\star}}_{\injt_k}=\nrm*{X^{\star}}_{\injt_{k+2}}=\nrm*{X^{\star}}_{\inj}=1$.
\item
  $\tri*{X^{\star},T}=\nrm*{T}_{\nuct_k}=\nrm*{T}_{\nuct_{k+2}}=\nrm*{T}_{\nuc}$.
\end{itemize}
Indeed, it is immediate from the definition of $X^{\star}$ that $\QTpara^{0}(X^{\star})=X^{\star}$, and
it follows from \pref{lem:nuclear_orthogonal} that $\nrm*{X^{\star}}_{\injt_k}=1$ and
$\tri*{X^{\star},T}=\nrm*{T}_{\nuct_k}$ for all
$k\geq{}4$. 

\noindent\emph{Bounding dual norm is sufficient.}
To establish the inequality \pref{eq:sos_subgradient}, we reduce to a simpler problem. The claim is as follows: Fix a constant $\alpha>0$. If for all
$X\in\RTthree$ with $\nrm*{X}_{\injt_k}\leq{}\alpha$, we
have 
\begin{equation}
  \label{eq:subgradient_dual}
  \nrm*{X^{\star} + \QTperp(X)}_{\injt_{k+2}}\leq{}1,
\end{equation}
then \pref{eq:sos_subgradient} holds for all $X$ with $\nrm*{X}_{\injt_{k}}\leq{}\alpha$.
To see that this is the case, observe that for any $T'$ and all such $X$ we have
\begin{align*}
  \tri*{X^{\star} + \QTperp(X),T'-T} &\leq{} \nrm*{X^{\star} +
                                       \QTperp(X)}_{\injt_{k+2}}\nrm*{T'}_{\nuct_{k+2} }
                                       - \tri*{X^{\star} + \QTperp(X),T} \\
                                     &\leq{} \nrm*{T'}_{\nuct_{k+2}}
                                       - \tri*{X^{\star} + \QTperp(X),T} \\
                                     &= \nrm*{T'}_{\nuct_{k+2}}
                                       - \nrm*{T}_{\nuct_{k+2}},
\end{align*}
where the first inequality uses \pref{prop:sos_dual}, the second
inequality uses \pref{eq:subgradient_dual}, and the final equality
uses the definition of $X^{\star}$ and that
$\tri*{\QTperp(X),T}=\tri*{X,\QTperp(T)}=0$. Rearranging the
inequality yields \pref{eq:sos_subgradient}.

\noindent\emph{Bounding the dual norm.} The remainder of the proof
establishes that \pref{eq:subgradient_dual} holds for $\alpha=1/64$.

Let $x,y,z\in\bbR^{d}$ be indeterminates. We will provide a degree-$(k+2)$
SoS upper bound on the polynomial
$\tri*{X^{\star}+\QTperp(X),x\tens{}y\tens{}z}$, which will
suffice to establish \pref{eq:subgradient_dual}.

Let $\crl*{u_i}_{i=1}^{r}$,
$\crl*{v_i}_{i=1}^{r}$, $\crl*{w_i}_{i=1}^{r}$ be as in
\pref{eq:orthogonal}. Then, let $\bbU=\mathrm{span}(\crl*{u_i}_{i=1}^{r})$, $\bbV=\mathrm{span}(\crl*{v_i}_{i=1}^{r})$, and 
$\bbW=\mathrm{span}(\crl*{w_i}_{i=1}^{r})$, so that $\crl*{u_i}_{i=1}^{r}$ is a basis for $\bbU$, and
likewise for the other modes. Let $\crl*{u_i}_{i=r+1}^{d}$ be an arbitrary
orthonormal basis for $\bbU^{\perp}$ and likewise with
$\crl*{v_i}_{i=r+1}^{d}$ for $\bbV^{\perp}$, and
$\crl*{w_i}_{i=r+1}^{d}$ for $\bbW^{\perp}$.

We perform a change of basis and let $u_i=v_i=w_i=e_i$, where
$e_i$ is the $i$th standard basis vector. Then with $x,y,z$ expressed in the new basis we can write
\begin{equation}
\tri*{X^{\star}+\QTperp(X),x\tens{}y\tens{}z} =
\sum_{i=1}^{r}x_iy_iz_i + 
\tri*{\QTperp(X),x\tens{}y\tens{}z}.\label{eq:dual_split}
\end{equation}
Let $\cA=\crl*{\nrm*{x}_{2}^{2}=1, \nrm*{y}_{2}^{2}=1,
   \nrm*{z}_{2}^{2}=1}$. From \pref{cor:trilinear_ps}, we have
\[
\cA\provable_{6} \sum_{i=1}^{r}x_iy_iz_i \leq{}
\frac{1}{2}\nrm*{x}_{2}^{2} + \frac{1}{2}\nrm*{z}_{2}^{2}
-\frac{1}{4}\sum_{i=r+1}^{d}x_i^{2} + z_i^{2}
-\frac{1}{8}\sum_{i=1}^{d}\sum_{j\neq{}i}y_i^{2}\prn*{x_j^{2} +
z_{j}^{2} + y_j^{2}}.
\]
We now handle the second term in \pref{eq:dual_split}. We will
establish that
\begin{equation}
\label{eq:subspace_ps}
\cA\provable_{k+2}\tri*{\QTperp(X),x\tens{}y\tens{}z} \leq{}
+O(\alpha)\cdot\sum_{i=r+1}^{d}x_i^{2} + z_i^{2}
+O(\alpha)\cdot\sum_{i=1}^{d}\sum_{j\neq{}i}y_i^{2}\prn*{x_j^{2} +
z_{j}^{2} + y_j^{2}}.
\end{equation}
under the assumption that $\nrm*{X}_{\injt_k}\leq{}\alpha$. To do this it
suffices show that for each $i$ individually, 
\[
\cA\provable_{6}y_{i}^{2}\tri*{\QTperp(X),x\tens{}y\tens{}z} \leq{}
O(\alpha)\cdot{}y_{i}^{2}\sum_{j\neq{}i}\prn*{x_j^{2} +
z_{j}^{2} + y_j^{2}},
\]
with an extra additive factor of
$O(\alpha)\cdot{}y_{i}^{2}(x_{i}^{2}+z_{i}^{2})$ when $i>r$.

Let $1\leq{}i\leq{}d$ be fixed and let $x'=x-x_ie_i$,
$y'=y-y_ie_i$, and $z'=z-z_ie_i$. We write

\begin{equation}
\label{eq:qt_multilinear}
\tri*{\QTperp(X),x\tens{}y\tens{}z}
= \tri*{X,\QTperp((x_ie_i +
  x')\tens{}(y_ie_i+y')\tens{}(z_ie_i+z'))}.
\end{equation}

\emph{Case: $i\leq{}r$.}
Observe that with our change of basis we have $e_i\in\bbU$ along the
first mode, $e_i\in\bbV$ along the second mode, and $e_i\in\bbW$ along
the third mode. In view of \pref{prop:qperp_kernel}, this means we
have
\begin{align*}
0&=\QTperp((x_ie_i)\tens{}(y_ie_i)\tens{}(z_ie_i))\\
&= 
\QTperp((x_ie_i)\tens{}(y_ie_i)\tens{}z') \\
&= \QTperp(x'\tens{}(y_ie_i)\tens{}(z_ie_i))\\
&= \QTperp((x_ie_i)\tens{}y'\tens{}(z_ie_i)).
\end{align*}
Consequently, using multilinearity we can write
\begin{equation}
\begin{aligned}
\tri*{\QTperp(X),x\tens{}y\tens{}z}
&= \tri*{\QTperp(X),x'\tens{}y'\tens{}z'}
+ \tri*{\QTperp(X),x'\tens{}y'\tens{}(z_ie_i)}\\
&~~~~+ \tri*{\QTperp(X),(x_ie_i)\tens{}y'\tens{}z'}
+ \tri*{\QTperp(X),x'\tens{}(y_ie_i)\tens{}z'}.
\end{aligned}\label{eq:qt_zeros}
\end{equation}

In what follows we will repeatedly invoke that 
$\cA\provable_{2}\nrm*{x_ie_i}_{2}^{2}\leq{}\nrm*{x}_{2}^{2}\leq{}1$,
$\cA\provable_{2}\nrm*{x'}_{2}^{2}\leq{}\nrm*{x}_{2}^{2}\leq{}1$, and
so forth. We will also use that if $\nrm*{X}_{\injt_k}\leq\alpha$ then---via \pref{prop:injective_ub} and
\pref{lem:projection_injective}---for any indeterminates $a,b,c$,
\begin{align*}
&\crl*{\nrm*{a}_{2}^{2}\leq{}1}
  \provable_{k}\tri*{\cQ_{T^{\perp}}(X),a\tens{}b\tens{}c}\leq{}
  2\alpha\prn*{\nrm*{b}_{2}^{2}+\nrm*{c}_{2}^{2}},\\
&\crl*{\nrm*{b}_{2}^{2}\leq{}1}
  \provable_{k}\tri*{\cQ_{T^{\perp}}(X),a\tens{}b\tens{}c}\leq{}
  2\alpha\prn*{\nrm*{a}_{2}^{2}+\nrm*{c}_{2}^{2}},\\
&\crl*{\nrm*{c}_{2}^{2}\leq{}1}
  \provable_{k}\tri*{\cQ_{T^{\perp}}(X),a\tens{}b\tens{}c}\leq{} 2\alpha\prn*{\nrm*{b}_{2}^{2}+\nrm*{b}_{2}^{2}}.
\end{align*}
These inequalities allow us to bound the terms in \pref{eq:qt_zeros} as follows:
\begin{align*}
  &\crl*{\nrm*{y}_{2}^{2}\leq1}\provable_{k}
  \tri*{\QTperp(X),x'\tens{}y'\tens{}z'}\leq{}
    2\alpha\prn*{\nrm*{x'}_{2}^{2}+\nrm*{z'}_{2}^{2}}.\\
  &\crl*{\nrm*{z}_{2}^{2}\leq1}\provable_{k}
  \tri*{\QTperp(X),x'\tens{}y'\tens{}(z_ie_i)}\leq{}
    2\alpha\prn*{\nrm*{x'}_{2}^{2}+\nrm*{y'}_{2}^{2}}.\\
  &\crl*{\nrm*{x}_{2}^{2}\leq1}\provable_{k}
  \tri*{\QTperp(X),(x_ie_i)\tens{}y'\tens{}z'}\leq{}
    2\alpha\prn*{\nrm*{y'}_{2}^{2}+\nrm*{z'}_{2}^{2}}.\\
  &\crl*{\nrm*{y}_{2}^{2}\leq
1}\provable_{k}
  \tri*{\QTperp(X),x'\tens{}(y_ie_i)\tens{}z'}\leq{} 2\alpha\prn*{\nrm*{x'}_{2}^{2}+\nrm*{z'}_{2}^{2}}.
\end{align*}
Adding these inequalities, we get
\[
\cA\provable_{k}\tri*{\QTperp(X),x\tens{}y\tens{}z} \leq{} 6\alpha\sum_{j\neq{}i}(x_{j}^{2}+y_{j}^{2}+z_{j}^{2}),
\]
and by the multiplication rule for SoS proofs,
\[
\cA\provable_{k+2}y_{i}^{2}\tri*{\QTperp(X),x\tens{}y\tens{}z}\leq 6\alpha\sum_{j\neq{}i}y_{i}^{2}(x_{j}^{2}+y_{j}^{2}+z_{j}^{2}).
\]
\emph{Case: $i>r$.} As in the previous case, we split the expression $\tri*{\QTperp(X),(x_ie_i +
  x')\tens{}(y_ie_i+y')\tens{}(z_ie_i+z')}$ in \pref{eq:qt_multilinear} using multilinearity, however
we can no longer argue that four of the eight terms vanish. As a
starting point, for the four terms that appeared in the $i\leq{}r$ case we can adopt the same upper bound to get
\begin{align*}
  \cA\provable_k\tri*{\QTperp(X),x\tens{}y\tens{}z}
  &\leq \QTperp((x_ie_i)\tens{}(y_ie_i)\tens{}(z_ie_i)) + \QTperp((x_ie_i)\tens{}y'\tens{}(z_ie_i)) \\
  &~~~~ +  \QTperp((x_ie_i)\tens{}(y_ie_i)\tens{}z') 
+ \QTperp(x'\tens{}(y_ie_i)\tens{}(z_ie_i))\\
&~~~~+ 6\alpha\sum_{j\neq{}i}(x_{j}^{2}+y_{j}^{2}+z_{j}^{2}).
\end{align*}
We bound the four remaining terms as follows:
\begin{align*}
  &\crl*{\nrm*{y}_{2}^{2}\leq{}1}\provable_{k}\QTperp((x_ie_i)\tens{}(y_ie_i)\tens{}(z_ie_i))
\leq{} 2\alpha(x_{i}^{2} + z_{i}^{2}), \\
  &\crl*{\nrm*{y}_{2}^{2}\leq{}1}\provable_{k}\QTperp((x_ie_i)\tens{}(y_ie_i)\tens{}z')\leq{}2\alpha(x_{i}^{2}
  + \nrm*{z'}_{2}^{2}),\\
&\crl*{\nrm*{y}_{2}^{2}\leq{}1}\provable_{k}\QTperp(x'\tens{}(y_ie_i)\tens{}(z_ie_i))\leq{}2\alpha(\nrm*{x'}_{2}^{2}
  + z_{i}^{2}),\\
&\crl*{\nrm*{y}_{2}^{2}\leq{}1}\provable_{k}\QTperp((x_ie_i)\tens{}y'\tens{}(z_ie_i))\leq{}2\alpha(x_{i}^{2}+z_{i}^{2}).
\end{align*}
Adding together all of these inequalities, we get
\[
  \cA\provable_k\tri*{\QTperp(X),x\tens{}y\tens{}z}
  \leq 6\alpha(x_{i}^{2}+z_{i}^{2}) + 8\alpha\sum_{j\neq{}i}(x_{j}^{2}+y_{j}^{2}+z_{j}^{2}),
\]
and 
\[
  \cA\provable_{k+2} y_{i}^{2}\tri*{\QTperp(X),x\tens{}y\tens{}z}
  \leq 6\alpha{}y_{i}^{2}(x_{i}^{2}+z_{i}^{2}) +
  8\alpha\sum_{j\neq{}i}y_{i}^{2}(x_{j}^{2}+y_{j}^{2}+z_{j}^{2}).
\]

\emph{Putting everything together.}
Taking the inequalities we proved for the individual coordinates $i$
and summing them up, we have
\[
\cA\provable_{k+2}\nrm*{y}_{2}^{2}\cdot\tri*{\QTperp(X),x\tens{}y\tens{}z}
\leq{} 6\alpha\sum_{i=r+1}^{d}y_{i}^{2}(x_{i}^{2}+z_{i}^{2}) + 
8\alpha\sum_{i=1}^{d}\sum_{j\neq{}i}y_{i}^{2} (x_{j}^{2}+y_{j}^{2}+z_{j}^{2}).
\]
Since $\crl*{\nrm*{y}_{2}^{2}=1}\subset\cA$, we get 
\[
\cA\provable_{k+2}\tri*{\QTperp(X),x\tens{}y\tens{}z}
\leq{} 6\alpha\sum_{i=r+1}^{d}(x_{i}^{2}+z_{i}^{2}) + 
8\alpha\sum_{i=1}^{d}\sum_{j\neq{}i}y_{i}^{2} (x_{j}^{2}+y_{j}^{2}+z_{j}^{2}).
\]
Returning to \pref{eq:dual_split}, this inequality plus the earlier
bound from \pref{cor:trilinear_ps} imply
\begin{align*}
\cA\provable_{k+2}\,\,&\tri*{X^{\star}+\QTperp(X),x\tens{}y\tens{}z} \\
&\leq{} 
1 + 
(6\alpha-1/4)\sum_{i=r+1}^{d}\prn*{x_i^{2} + z_i^{2}}
+(8\alpha-1/8)\sum_{i=1}^{d}\sum_{j\neq{}i}y_i^{2}\prn*{x_j^{2} +
z_{j}^{2} + y_j^{2}}.\\
&\leq{} 1,\quad\text{for $\alpha\leq{}1/64$.}
\end{align*}
By the duality of SoS proofs and pseudodistributions, we have
$\nrm*{X^{\star}+\QTperp(X)}_{\injt_{k+2}}\leq{}1$ as desired.
\end{proof}

\begin{proof}[\pfref{thm:sos_norm_comparison}]
We first establish equation \pref{eq:sos_cone}. We combine the assumption that
$\nrm*{T'}_{\nuct_{k+2}}\leq{}\nrm*{T}_{\nuct_{k}}$ with equation \pref{eq:sos_subgradient} to get
\[
\nrm*{T}_{\nuct_{k+2}} \geq \nrm*{T'}_{\nuct_{k+2}} \geq{}  \nrm*{T}_{\nuct_{k+2}} + \tri*{X^{\star} + \QTperp(X),\Delta}.
\]
for all $X$ with $\nrm*{X}_{\injt_{k}}\leq{}1/64$ and $X^{\star}$ as
in \pref{thm:sos_subgradient}.
Rearranging, this yields
\[
\tri*{X,\QTperp(\Delta)}=\tri*{\QTperp(X),\Delta} \leq{}
-\tri*{X^{\star},\Delta} = -\tri*{X^{\star}, \QTpara^{0}(\Delta)}.
\]
We now use that $\nrm*{X^{\star}}_{\injt_{k}}\leq{}1$ (from
\pref{thm:sos_subgradient}) and choose $X$ to be a point obtaining the supremum in
$\nrm*{\QTperp(\Delta)}_{\nuct_{k}}=\sup_{\nrm*{X}_{\injt_k}\leq{}1}\tri*{X,\QTperp(\Delta)}$,
scaled by $1/64$. Then the inequality above implies
\[
\tri*{X,\QTperp(\Delta)} = \frac{1}{64}\cdot\nrm*{\QTperp(\Delta)}_{\nuct_k} \leq{}
\nrm*{\QTpara^{0}(\Delta)}_{\nuct_{k}}.
\]
We now establish equation \pref{eq:sos_to_frobenius}. Observe that we can write
\[
\nrm*{\Delta}_{\nuct_k}
= \nrm*{\QTpara (\Delta) + \QTperp(\Delta)}_{\nuct_k}
\leq \nrm*{\QTpara (\Delta)}_{\nuct_k} + \nrm*{\QTperp(\Delta)}_{\nuct_k}.
\]
Combining this with \pref{eq:sos_cone}, we get
\[
\nrm*{\Delta}_{\nuct_k}
\leq \nrm*{\QTpara (\Delta)}_{\nuct_k} +
64\nrm*{\QTpara^{0}(\Delta)}_{\nuct_k} 
\]
Using the triangle inequality, we upper bound the first term as
\[
\nrm*{\QTpara (\Delta)}_{\nuct_k}\leq{}\sum_{i=1}^{4}\nrm*{\QTpara^{i} (\Delta)}_{\nuct_k}\leq{}\sum_{i=1}^{4}\nrm*{\QTpara^{i} (\Delta)}_{\nuc}.
\]
To proceed, we flatten each tensor in the summation above into a matrix
and use \pref{lem:flattening_rank} to show that the nuclear norm of
the flattening leads to an upper bound. Let $r_1=\dim{}\bbU$, $r_2=\dim\bbV$, and
$r_3=\dim\bbW$. Then the following inequalities hold
\begin{align*}
&\nrm*{\QTpara^{1}
  (\Delta)}_{\nuc}\leq{}\sqrt{r_{2}}\nrm*{\flatten_{3}(\QTpara^{1}
  (\Delta))}_{\nuc}
\leq{}\sqrt{r_{2}r_{3}}\nrm*{\QTpara^{1} (\Delta)}_{\frobnorm}.\\
&\nrm*{\QTpara^{2}
  (\Delta)}_{\nuc}\leq{}\sqrt{r_{3}}\nrm*{\flatten_{1}(\QTpara^{2}
  (\Delta))}_{\nuc}
\leq{}\sqrt{r_{1}r_{3}}\nrm*{\QTpara^{2} (\Delta)}_{\frobnorm}.\\
&\nrm*{\QTpara^{3}
  (\Delta)}_{\nuc}\leq{}\sqrt{r_{1}}\nrm*{\flatten_{2}(\QTpara^{3}
  (\Delta))}_{\nuc}
\leq{}\sqrt{r_{1}r_{2}}\nrm*{\QTpara^{3} (\Delta)}_{\frobnorm}.
\end{align*}
The first inequality in each line above follows from
\pref{lem:flattening_rank} and the definitions in
\pref{eq:all_subspaces}. The second follows from the fact that
$\nrm*{A}_{\nuc}\leq{}\sqrt{\rank(A)}\nrm*{A}_{\frobnorm}$ for any matrix $A$, along with
the fact that $\rank(\flatten_{i}(T))=r_i(T)$ for any tensor, and that
flattening does not change the entrywise $\ls_{2}$ norm. We have $\nrm*{\QTpara^{0}
  (\Delta)}_{\nuc}\leq{}\sqrt{r_{2}r_{3}}\nrm*{\QTpara^{0}
  (\Delta)}_{\frobnorm}$ as well by the same argument, though the choice of
$r_1/r_2/r_3$ in this case is arbitrary.

To combine all the bounds, we use that $r_1,r_2,r_3\leq{}r$ and that orthogonal projection
decreases the $\ls_{2}$ norm, which yields
\[
\nrm*{\Delta}_{\nuct_k} \leq{} 68r\nrm*{\Delta}_{\frobnorm}.
\]

\end{proof}


\section{Proofs from \pref{sec:main_results}}
\label{app:main_results}

This section of the appendix is structured as follows. 

First, in
\pref{app:generalization}, we provide we provide a generalization bound
for general classes of tensors and measurement models, from which all of our main statistical
results will follow as special cases. This bound assumes that a restricted
eigenvalue-type property holds for the tensor class and
measurement model under consideration. 

In \pref{app:re_tensor_completion} and
\pref{app:re_tensor_sensing} we establish that this restricted
eigenvalue property holds for the measurement models in \pref{sec:main_results}.

In \pref{app:main_theorems} we combine these results to prove the
main results of that section.

\subsection{Agnostic generalization bounds for tensor classes}
\label{app:generalization}

In this section we given generalization guarantees for empirical risk
minimization in a general learning setup. We receive a set of examples
$S\ldef(X_1,Y_1),\ldots,(X_n,Y_n)$ i.i.d. from a distribution $\cD$
over $\RTthree\times{}\bbR$. We assume that a convex class of tensors
e$\cT\subseteq\RTthree$ is given, and that our goal is to achieve excess
risk against an unknown benchmark $T^{\star}\in\cT$.  We analyze the performance of empirical risk minimization over $\cT$: 
\[
\term=\argmin_{T\in\cT}\emprisk(T),\] where $\emprisk$ is the
empirical square loss. The main result from this section is
\pref{thm:generic_offset}, which bounds the performance of ERM under various assumptions on the data distribution, the class $\cT$, and the benchmark $T^{\star}$. To state the result, recall from \pref{sec:main_results} that $\Sigma$ is the population correlation matrix and $\dataop$ is the empirical design operator.
\begin{theorem}
  \label{thm:generic_offset}
Let a benchmark $T^{\star}\in\cT$ be fixed, and let $\xi_t=\prn*{Y_t-\tri*{T^{\star},X_t}}$. Suppose there exist a pair of dual norms $\nrm*{\cdot}$ and $\nrm*{\cdot}_{\star}$ for which the following conditions hold:
\begin{enumerate}
\item There are constants $0\leq{}c<2$, $\gamma_{n}>0$, and $\delta_0\geq0$ such that
  with probability at least $1-\delta_{0}$, 
\[
\nrm*{\Sigma^{1/2}\Delta}^{\frobnorm}_{2}\leq{}
\frac{c}{n}\nrm*{\dataop(\Delta)}_{\frobnorm}^{2} + \gamma_{n}\quad\forall{}\Delta\in\cT-T^{\star}\quad\text{(\propone)}.
\]
\item There are constants $M\geq{}0$ and $\delta_1\geq{}0$ such that with
  probability at least
  $1-\delta_1$, 
\[
\nrm*{\sum_{t=1}^{n}\xi_tX_t-\En\brk*{\xi{}X}}\leq{}M\cdot{}\sqrt{n} \quad\text{(\proptwo)}.
\]
\item There is a constant $\kappa\geq{}0$ such that
  \[
  \nrm*{\Delta}_{\star}^{2}\leq{}\kappa^{2}\cdot\nrm*{\Sigma^{1/2}\Delta}_{\frobnorm}^{2}\quad\forall\Delta\in\cT-T^{\star}\quad\text{(\propthree)}.
\] 
\end{enumerate}
Then with probability at least
$1-(\delta_0+\delta_1)$,
\begin{align}
  \label{eq:tensor_risk}
  \poprisk(\term) - \poprisk(T^{\star}) &\leq
\frac{2\kappa^{2}M^{2}}{c'n} + 2\gamma_n.
\end{align}

\end{theorem}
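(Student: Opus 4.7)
The plan is to apply the offset-process representation of excess risk (equation~\pref{eq:offset_process}) and then control its two summands using the assumed properties, closing out by completing the square. Starting from \pref{eq:offset_process} and using $\tri*{T^{\star},X} - Y = -\xi$, the excess risk is bounded above by
\[
\sup_{\Delta \in \cT-T^\star}\crl*{2(\Eh - \En)[\xi\tri*{\Delta,X}] + \En\tri*{\Delta,X}^{2} - 2\Eh\tri*{\Delta,X}^{2}}.
\]

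For the quadratic part, I would use $\En\tri*{\Delta,X}^{2} = \nrm*{\Sigma^{1/2}\Delta}_{\frobnorm}^{2}$ and $\Eh\tri*{\Delta,X}^{2} = \tfrac{1}{n}\nrm*{\dataop(\Delta)}_{\frobnorm}^{2}$, then rearrange \propone to obtain, on an event of probability at least $1-\delta_0$,
\[
\En\tri*{\Delta,X}^{2} - 2\Eh\tri*{\Delta,X}^{2} \leq -(2/c - 1)\nrm*{\Sigma^{1/2}\Delta}_{\frobnorm}^{2} + 2\gamma_n/c
\]
uniformly over $\Delta \in \cT - T^\star$; the leading coefficient is strictly positive since $c < 2$. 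For the linear term I would rewrite $2(\Eh - \En)[\xi\tri*{\Delta,X}] = \tfrac{2}{n}\tri*{\Delta,\, \textstyle\sum_t \xi_t X_t - n\En[\xi X]}$, apply the duality of $\nrm*{\cdot}$ and $\nrm*{\cdot}_{\star}$ together with \proptwo to upper bound this by $\tfrac{2M}{\sqrt{n}}\nrm*{\Delta}_{\star}$ on an event of probability at least $1-\delta_1$, and then invoke \propthree to replace $\nrm*{\Delta}_{\star}$ by $\kappa\nrm*{\Sigma^{1/2}\Delta}_{\frobnorm}$.

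On the intersection of the two events (probability at least $1 - \delta_0 - \delta_1$ by a union bound), writing $s = \nrm*{\Sigma^{1/2}\Delta}_{\frobnorm}$ yields
\[
\poprisk(\term) - \poprisk(T^\star) \leq \sup_{s \geq 0}\crl*{\tfrac{2M\kappa}{\sqrt n}\,s - (2/c-1)s^{2}} + \tfrac{2\gamma_n}{c} = \tfrac{M^{2}\kappa^{2}}{(2/c-1)\,n} + \tfrac{2\gamma_n}{c},
\]
which matches the claimed form with an appropriate absolute constant $c'$. The main subtlety I anticipate is that the linear term is naturally controlled in the norm $\nrm*{\cdot}_{\star}$ dictated by \proptwo, whereas the negative quadratic coming out of \propone lives in the Mahalanobis norm $\nrm*{\Sigma^{1/2}\cdot}_{\frobnorm}$; \propthree is precisely the bridge between these two norms, and without it the completing-the-square step would at best yield a slow-rate $1/\sqrt{n}$ bound on the excess risk. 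Everything else is routine book-keeping.
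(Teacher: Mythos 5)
Your proposal is correct and takes essentially the same route as the paper's proof (which, for completeness, first derives \pref{eq:offset_process} from the first-order optimality condition for $\term$ and convexity of $\cT$): \Holder{} plus \proptwo{} for the linear term, \propthree{} to change norms, \propone{} for the negative quadratic offset, and a completion of the square. The one small---and in fact slightly cleaner---difference in bookkeeping is that you rearrange \propone{} so the negative quadratic lives in the population norm $\nrm*{\Sigma^{1/2}\Delta}_{\frobnorm}^{2}$ and complete the square directly there, whereas the paper keeps the offset in the empirical norm $\frac{1}{n}\nrm*{\dataop(\Delta)}_{2}^{2}$ and applies \propone{} a second time to convert \propthree's population-norm bound back to the empirical side, which forces it to absorb an extra $2\kappa\psi_n\sqrt{\gamma_n}$ cross-term via AM-GM, a step your version avoids altogether; both routes yield the stated bound up to the same constants.
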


\begin{proof}[\pfref{thm:generic_offset}]
Since $\cT$ is convex, and since $\term$ minimizes the (strongly
convex) empirical risk, we have
\[
\emprisk(T^{\star}) - \emprisk(\term) \geq{}
\tri*{\grad\emprisk(\term),T^{\star}-\term} +
\Eh\tri*{\term-T^{\star},X}^{2} \geq{} \Eh\tri*{\term-T^{\star},X}^{2}.
\]
By rearranging and expanding the definition of $\emprisk$, this implies
\[
\Eh\prn*{\tri*{T^{\star},X}-Y}^{2}-\Eh\prn*{\tri{\term,X}-Y}^{2}-
\Eh\tri*{\term-T^{\star},X}^{2}\geq{}0.\] 
Since the left-hand side is non-negative, we can add it to the
population excess risk, which implies
\begin{align*}
&\En\prn*{\tri{\term,X}-Y}^{2} - \En\prn*{\tri*{T^{\star},X}-Y}^{2} \\
&\leq
\En\prn*{\tri{\term,X}-Y}^{2} - \En\prn*{\tri*{T^{\star},X}-Y}^{2} + 
\Eh\prn*{\tri*{T^{\star},X}-Y}^{2}-\Eh\prn*{\tri{\term,X}-Y}^{2}-
\Eh\tri*{\term-T^{\star},X}^{2}.
\intertext{Rearranging, this is equal to}
&= (\En-\Eh)\brk*{2\tri*{\term-\ts,X}(\tri*{\ts,X}-Y)} +
  \En\tri*{\term-T^{\star},X}^{2} - 2\Eh\tri*{\term-T^{\star},X}^{2}.
\intertext{Since $\term-\ts$ is an element of $\cT-\ts$, we move
  to an upper bound by taking a supremum over elements of this set.}
&\leq
\sup_{\Delta\in\cT-T^{\star}}\crl*{ (\En-\Eh)\brk*{2\tri*{\Delta,X}(\tri*{\ts,X}-Y)} +
  \En\tri*{\Delta,X}^{2} - 2\Eh\tri*{\Delta,X}^{2}}.
\end{align*}
This establishes inequality \pref{eq:offset_process}. It remains to
use the assumptions in the theorem statement to bound the
process. \propone{} states that with probability
at least $1-\delta_0$,
$\En\tri*{\Delta,X}^{2}\leq{}c\cdot{}\Eh\tri*{\Delta,X}^{2} +
\gamma_n$ for all $\Delta\in\cT-T^{\star}$. Define $c'=2-c>0$, so that
conditioned on this event we have
\begin{align*}
  &\sup_{\Delta\in\cT-T^{\star}}\crl*{ (\En-\Eh)\brk*{2\tri*{\Delta,X}(\tri*{\ts,X}-Y)} +
  \En\tri*{\Delta,X}^{2} - 2\Eh\tri*{\Delta,X}^{2}} \\
&\leq \sup_{\Delta\in\cT-T^{\star}}\crl*{ (\En-\Eh)\brk*{2\tri*{\Delta,X}(\tri*{\ts,X}-Y)} 
  -c'\cdot\Eh\tri*{\Delta,X}^{2}} + \gamma_n.
\end{align*}
We expand the first term in the supremum as
\begin{align*}
  (\En-\Eh)\brk*{2\tri*{\Delta,X}(\tri*{\ts,X}-Y)}
&=
  -2\frac{1}{n}\sum_{t=1}^{n}\xi_t\tri*{\Delta,X_t}-\En\brk*{\xi\tri*{\Delta,X}}\\
&= -2\tri*{\frac{1}{n}\sum_{t=1}^{n}\xi_tX_t-\En\brk*{\xi{}X},\Delta}.
\end{align*}
It follows from \Holder's inequality that this expression is bounded as
\[
2\nrm*{\frac{1}{n}\sum_{t=1}^{n}\xi_tX_t-\En\brk*{\xi{}X}}\cdot\nrm*{\Delta}_{\star}.
\]
Defining $\psi_{n} =
\nrm*{\frac{1}{n}\sum_{t=1}^{n}\xi_tX_t-\En\brk*{\xi{}X}}$, the
development so far states that with probability at least
$1-\delta_0$,
\[
\En\prn*{\tri{\term,X}-Y}^{2} - \En\prn*{\tri*{T^{\star},X}-Y}^{2}
\leq{}
\sup_{\Delta\in\cT-T^{\star}}\crl*{ 2\psi_n\cdot\nrm*{\Delta}_{\star}
  -c'\cdot\frac{1}{n}\nrm*{\dataop(\Delta)}_{2}^{2}}
 +\gamma_n.
\]
Using \propthree we upper bound the leading term by
\begin{align*}
&\sup_{\Delta\in\cT-T^{\star}}\crl*{ 2\kappa\psi_n\cdot\nrm*{\Sigma^{1/2}\Delta}_{\frobnorm}
  -c'\cdot\frac{1}{n}\nrm*{\dataop(\Delta)}_{2}^{2}},
\intertext{and the event we already conditioned on implies that this
  is at most }
&\sup_{\Delta\in\cT-T^{\star}}\crl*{ 4\kappa\psi_n\sqrt{\frac{1}{n}\nrm*{\dataop(\Delta)}_{2}^{2}}
  -c'\cdot\frac{1}{n}\nrm*{\dataop(\Delta)}_{2}^{2}} +
  2\kappa\psi_n\sqrt{\gamma_n}\\
&\leq{}\frac{\kappa^{2}\psi_n^{2}}{c'} +  2\kappa\psi_n\sqrt{\gamma_n},
\end{align*}
where the second inequality follows from AM-GM. Finally, by
\proptwo{} we have that
with probability at least $1-\delta_1$, $\psi_n \leq{} M/\sqrt{n}$,
which leads to the final bound of
\[
\frac{\kappa^{2}M^{2}}{c'n} +  2\kappa{}M\sqrt{\frac{\gamma_n}{n}} +
\gamma_n \leq{} \frac{2\kappa^{2}M^{2}}{c'n} + 2\gamma_n.
\]

\end{proof}

\subsection{Restricted eigenvalue for tensor completion}
\label{app:re_tensor_completion}

The main result in this section is \pref{thm:tensor_completion_re}, which relates the empirical
covariance and population covariance for all tensors with bounded
entries and bounded \sos nuclear norm under sampling model for tensor
completion. This result is then used in the proof of
\pref{thm:tensor_completion} to establish a restricted eigenvalue guarantee.

\begin{theorem}
\label{thm:tensor_completion_re}
Let $d^{3/2}\leq{}n\leq{}d^{3}$, and let $\veps\in(0,1/2)$. Suppose
observed entries are drawn with replacement. Then for any $\delta>0$, with probability at least $1-\delta$,
all $T\in\RTthree$ with $\nrm*{T}_{\infty}\leq{}R$ satisfy
\[
\nrm*{\Sigma^{1/2}T}_{\frobnorm}^{2}\leq \frac{(1+2\veps)}{n}\nrm*{\dataop{}(T)}_{2}^{2} + 
O\prn*{
R\nrm*{T}_{\nuct_4}\sqrt{\frac{\log^{6}d}{nd^{3/2}}} 
+ \frac{R^{2}\log(\log^{2}(d^{3/2}\sqrt{n})/\delta)}{\veps{}n}
}.
\]
\end{theorem}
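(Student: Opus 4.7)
The goal is to compare $\En\tri*{T,X}^{2}=\nrm*{\Sigma^{1/2}T}_{\frobnorm}^{2}$ with $\Eh\tri*{T,X}^{2}=\frac{1}{n}\nrm*{\dataop(T)}_{\frobnorm}^{2}$ uniformly over all $T$ with $\nrm*{T}_\infty\leq R$, in a way that degrades \emph{linearly} in $\nrm*{T}_{\nuct_4}$ and with only a $(1+2\veps)$ multiplicative slack. The approach is symmetrization combined with Talagrand's concentration inequality (\pref{lem:talagrand}) applied at a fixed pair of scales---the \sos nuclear norm and the population second moment---followed by a dyadic peel over those scales. Fix $\tau,\rho>0$ and set
\[
\cT_{\tau,\rho}\;\ldef\;\crl*{T\in\RTthree:\nrm*{T}_\infty\leq R,\; \nrm*{T}_{\nuct_4}\leq \tau,\; \En\tri*{T,X}^{2}\leq \rho}.
\]
Since the measurements are indicator tensors $X=e_{i}\tens e_{j}\tens e_{k}$, each $f_T(X)\ldef\tri*{T,X}^{2}$ obeys $|f_T|\leq R^{2}$ and $\En f_T^{2}\leq R^{2}\rho$ on $\cT_{\tau,\rho}$. \pref{lem:talagrand} applied to the recentered class $\crl*{f_T-\En f_T:T\in\cT_{\tau,\rho}}$ then yields, with probability at least $1-\delta'$,
\[
\sup_{T\in\cT_{\tau,\rho}}\prn*{\En f_T-\Eh f_T}\;\leq\; 4\bbE\brk*{\sup_{T\in\cT_{\tau,\rho}}\tfrac{1}{n}\textstyle\sum_{t=1}^{n}\eps_t f_T(X_t)} + \sqrt{\frac{2R^{2}\rho\log(1/\delta')}{n}} + \frac{2R^{2}\log(1/\delta')}{n}.
\]

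For the Rademacher term, $x\mapsto x^{2}$ is $2R$-Lipschitz on $[-R,R]$, so by Ledoux--Talagrand contraction followed by the duality of the \sos norms (\pref{prop:sos_dual}),
\[
\bbE\sup_{T\in\cT_{\tau,\rho}}\tfrac{1}{n}\textstyle\sum_{t}\eps_t f_T(X_t)\;\leq\;4R\tau\cdot\bbE\nrm*{\tfrac{1}{n}\textstyle\sum_{t}\eps_t X_t}_{\injt_4}.
\]
The remaining ingredient is a bound on the expected \sos injective norm of the symmetrized sum of sparse indicator tensors. For this we invoke (a version without the incoherence hypothesis of) the \sos spectral bound of \cite{barak2016noisy}, itself leveraging \cite{hopkins2015tensor}, which gives $\bbE\nrm*{\frac{1}{n}\sum_{t}\eps_t X_t}_{\injt_4}=\Ot\prn*{\sqrt{\log^{6}d/(nd^{3/2})}}$ in the regime $d^{3/2}\leq n\leq d^{3}$. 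The Rademacher contribution at scale $(\tau,\rho)$ is therefore $O\prn*{R\tau\sqrt{\log^{6}d/(nd^{3/2})}}$.

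The final step handles the variance term and removes the auxiliary scales. By AM-GM, $\sqrt{2R^{2}\rho\log(1/\delta')/n}\leq \veps\rho + R^{2}\log(1/\delta')/(2\veps n)$; for a $T$ with $\nrm*{T}_{\nuct_4}\in(\tau_{k-1},\tau_k]$ and $\En f_T\in(\rho_{j-1},\rho_j]$, apply the fixed-scale bound at $\tau_k\leq 2\nrm*{T}_{\nuct_4}$ and $\rho_j\leq 2\En f_T$, so that the variance piece becomes $2\veps\En f_T+O(R^{2}\log(1/\delta')/(\veps n))$ and, after rearrangement, produces the $(1+2\veps)$ multiplicative factor on $\Eh f_T$. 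Choosing $\tau_k=2^{k}\tau_0$ and $\rho_j=2^{j}\rho_0$ with $K,K'=O(\log(d^{3/2}\sqrt{n}))$ scales each---enough to cover all $T$ with $\nrm*{T}_\infty\leq R$---and union-bounding with $\delta'=\delta/(KK')$ yields the $\log(\log^{2}(d^{3/2}\sqrt{n})/\delta)$ factor inside the stated error. The main obstacle is the unconditional \sos Rademacher bound for the sparse tensor-completion measurement model: the bound of \cite{barak2016noisy} is naturally stated under an incoherence assumption on the benchmark, and obtaining the guarantee for all $\ls_\infty$-bounded tensors requires a careful re-examination of the underlying random-tensor spectral arguments. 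The other moving parts (contraction, Talagrand, AM-GM, and the dyadic peel) are, by contrast, essentially routine once the scale-dependent class $\cT_{\tau,\rho}$ is set up correctly.
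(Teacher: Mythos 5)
Your proposal is correct and follows essentially the same route as the paper: Talagrand concentration (Lemma~\ref{lem:talagrand}) on a scale-restricted class, Lipschitz contraction of $x\mapsto x^2$, the \sos norm duality to reduce to $\En\nrm*{\sum_t\eps_tX_t}_{\injt_4}$, the \cite{barak2016noisy}-style spectral bound (tightened to remove incoherence by handling the diagonal terms, which is exactly what the paper's Lemma~\ref{lem:tensor_completion_rademacher} does), AM-GM to produce the $(1+2\veps)$ multiplicative slack, and a two-parameter dyadic peel with a union bound over $O(\log^2(d^{3/2}\sqrt n))$ scales. The one ingredient you flag as an obstacle---the unconditional Rademacher bound for the sparse with-replacement measurement model---is precisely the content of the paper's Lemma~\ref{lem:tensor_completion_rademacher}, which also supplies the with-/without-replacement reduction you would need to finish the argument.
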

The proofs in this section pass back and forth between the
with-replacement sampling model for tensor completion used in the main body of the paper and
a without-replacement model, in which each entry is only observed a
single time for tensor completion. To distinguish between the models, we use
$S\sim\cD^{n}_{\wr}$ to refer to the draw of the dataset under the
with-replacement sampling model and $S\sim\cD^{n}_{\wo}$ to refer to
the draw under the without-replacement
model. $\Omega\subseteq\brk*{d}^{3}$ will denote the support of the
set of observed entries.

Before proving \pref{thm:tensor_completion_re}, we state and prove a
number of auxiliary lemmas, from which the main result will follow.

\begin{proposition}
\label{prop:bernstein_wo}
Let $N_{j,k}=\abs*{\crl*{t\in\brk*{n}\mid{}j_t=j,k_t=k}}$. Then
\[
\Pr_{S\sim{}\dwo}\prn*{N_{j,k}\geq{}\frac{2n}{d^{2}}
  +\frac{10 \log(1/\delta)}{3}}\leq{}\delta.
\]
\end{proposition}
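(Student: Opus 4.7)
The plan is to recognize $N_{j,k}$ as a hypergeometric random variable and apply a Bernstein-type tail bound. Under the without-replacement sampling model, the observed support $\Omega \subseteq [d]^3$ is a uniformly random $n$-element subset of $[d]^3$. For each fixed pair $(j,k)$, the fiber $\mathcal{F}_{j,k} \ldef \{(i,j,k) : i \in [d]\}$ has exactly $d$ elements, so $N_{j,k} = \abs*{\Omega \cap \mathcal{F}_{j,k}}$ is hypergeometric with population size $d^3$, special-set size $d$, and sample size $n$. Its mean is $\mu \ldef n/d^2$, and its variance is at most $\mu(1 - d/d^3) \leq \mu$.

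The next step is to apply Bernstein's inequality. Writing $N_{j,k} = \sum_{i=1}^d \ind\{(i,j,k) \in \Omega\}$, the indicators are negatively associated, a standard property of uniform sampling without replacement, so the usual Bernstein inequality for sums of independent bounded random variables holds unchanged; see for example Hoeffding's original reduction or the negative-association framework of Dubhashi--Ranjan. With each indicator bounded by $1$ and total variance bounded by $\mu$, this yields
\[
\Pr(N_{j,k} \geq \mu + s) \leq \exp\!\left(-\frac{s^2}{2\mu + 2s/3}\right) \qquad \text{for all } s > 0.
\]

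Finally, I would set $s = \mu + \tfrac{10}{3}\log(1/\delta)$, so that $\mu + s$ matches the threshold in the claim. Writing $L = \log(1/\delta)$, the desired tail bound $\delta$ reduces to the algebraic inequality $s^2 \geq L\cdot(2\mu + 2s/3)$; expanding and simplifying, this collapses to $\mu^2 + 4\mu L + \tfrac{80}{9}L^2 \geq 0$, which is trivial. There is no real obstacle in the argument: the only substantive ingredient is Bernstein concentration for hypergeometric sampling, and the specific constants ($2$ in front of $n/d^2$ and $10/3$ in front of $\log(1/\delta)$) are chosen precisely to make the Bernstein exponent go through with a clean closed-form computation.
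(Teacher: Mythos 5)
Your proof is correct and follows essentially the same route as the paper, which simply cites a Bernstein inequality for without-replacement sampling (\cite{bardenet2015concentration}, Prop.\ 1.4); you instead invoke negative association to justify the same Bernstein tail and then carry out the algebra explicitly, all of which checks out.
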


\begin{proof}[\pfref{prop:bernstein_wo}]
This is an immediate consequence of Bernstein's inequality for
without-replacement sampling. See \cite{bardenet2015concentration}, Proposition 1.4.
\end{proof}

\begin{lemma}
\label{lem:tensor_completion_rademacher}
The Rademacher complexity of
$\nrm*{\cdot}_{\injt_4}$ under without-replacement sampling is bounded as
\begin{equation}
  \label{eq:tensor_completion_rademacher_wo}
  \En_{S\sim\dwo}\En_{\eps}\nrm*{\sum_{t=1}^{n}\eps_tX_t}_{\injt_4}\leq{}O\prn*{
\sqrt{\frac{n\log^{4}d}{d^{3/2}}} + \sqrt{\log{}d}
}.
\end{equation}
Additionally, if $n\leq{}d^{3}$, then the Rademacher complexity under
with-replacement sampling is bounded as
\begin{equation}
  \label{eq:tensor_completion_rademacher_wr}
  \En_{S\sim\dwr}\En_{\eps}\nrm*{\sum_{t=1}^{n}\eps_tX_t}_{\injt_4}\leq{}O\prn*{
\sqrt{\frac{n\log^{6}d}{d^{3/2}}} + \sqrt{\log{}^{3}d}
}. 
\end{equation}
\end{lemma}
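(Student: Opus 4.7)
The plan is to derive the without-replacement bound essentially as a direct corollary of \cite{barak2016noisy} and then reduce the with-replacement case to it via a layering decomposition that pays a logarithmic factor for the maximum cell multiplicity. For \pref{eq:tensor_completion_rademacher_wo}: under $\dwo$, the tensor $\sum_{t=1}^n \epsilon_t X_t$ is exactly a random $\pm 1$ Rademacher tensor on a uniformly drawn $n$-subset of $\brk*{d}^3$, so the stated bound follows (up to constants, and the boundary $\sqrt{\log d}$ term for small $n$) from the Rademacher complexity bounds of \cite{barak2016noisy} for the SoS-$4$ nuclear norm ball, which in turn invoke the spectral bounds of \cite{hopkins2015tensor} derived from the analysis of random 3-XOR.

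For \pref{eq:tensor_completion_rademacher_wr}, I would exploit the multiplicity structure of wr sampling. Let $N_{ijk} = \abs*{\crl*{t : (i_t,j_t,k_t)=(i,j,k)}}$ and $M = \max_{ijk} N_{ijk}$. Enumerating the samples hitting each cell and relabeling the Rademacher signs as $\epsilon^{(1)}_{ijk}, \ldots, \epsilon^{(N_{ijk})}_{ijk}$, then padding with zeros up to layer $M$, one obtains the decomposition
\[
\sum_{t=1}^n \epsilon_t X_t \;=\; \sum_{s=1}^M T^{(s)}, \qquad T^{(s)} \ldef \sum_{(i,j,k):\,N_{ijk}\geq s} \epsilon^{(s)}_{ijk}\cdot e_i\tens{}e_j\tens{}e_k.
\]
Conditional on $S$, each $T^{(s)}$ is a Rademacher $\pm 1$ tensor on the data-dependent support $\Omega_{\geq s} = \crl*{(i,j,k): N_{ijk}\geq s}$, which has size at most $n$. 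By the triangle inequality,
\[
\nrm*{\sum_{t=1}^n\epsilon_t X_t}_{\injt_4} \;\leq\; \sum_{s=1}^M \nrm*{T^{(s)}}_{\injt_4}.
\]
Applying a suitable version of the without-replacement bound conditionally to each layer yields $\En_{\epsilon} \nrm*{T^{(s)}}_{\injt_4} \leq O(\sqrt{n\log^4 d / d^{3/2}} + \sqrt{\log d})$, and summing over $s$ contributes a multiplicative factor of $M$. A standard balls-in-bins analysis shows $M = O(\log d)$ with probability $1 - o(1)$ whenever $n \leq d^3$, so handling the low-probability tail event $\crl*{M > C\log d}$ via a crude deterministic bound on the injective norm yields the stated $\sqrt{n\log^{6}d/d^{3/2}} + \sqrt{\log^{3}d}$ dependence.

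The main technical obstacle is justifying that the Rademacher complexity bound of \cite{barak2016noisy}, which is phrased for a uniformly random $n$-support, can be applied to the data-dependent supports $\Omega_{\geq s}$ arising from wr sampling. This should be extractable from the underlying spectral certificates of \cite{hopkins2015tensor}, which depend only on sparsity and sign randomness; alternatively, one can invoke permutation-invariance of the SoS injective norm under relabeling of the coordinate basis together with a conditioning argument to reduce the per-layer bound to a uniform-support instance. A secondary (routine) point is cleanly coupling the expectation over $\epsilon$ with the expectation over $S$ so that the high-probability event $\crl*{M \leq C\log d}$ transfers to an in-expectation statement without introducing extra logarithmic factors beyond those already accounted for.
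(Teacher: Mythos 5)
Your layering argument takes a genuinely different route from the paper in the with-replacement case, and the two are worth contrasting. The paper's reduction groups the repeated observations by cell, sign-symmetrizes with a fresh Rademacher tensor $\sigma$ indexed by $\Omega$, then applies the Ledoux--Talagrand contraction lemma to pull out the factor $\max_{i,j,k}\bigl|\sum_{t\in T_{ijk}}\eps_t\bigr|$, which is $O(\log d)$ in expectation (Hoeffding for the inner sum, Bernstein for the cell multiplicities when $n\leq d^3$). It then applies Cauchy--Schwarz over the draw of $S$ and bounds the remaining $\sigma$-process by conditioning on $|\Omega|$ to reduce to the without-replacement second moment. Your layering decomposition $\sum_t\eps_t X_t=\sum_{s=1}^M T^{(s)}$ and triangle inequality give the same $O(\log d)$ multiplicative loss, but at the cost of two issues the paper's route avoids. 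First, you need the wo bound to apply to each $\Omega_{\geq s}$: the observation you should make precise is that under wr sampling the level sets $\Omega_{\geq s}$ are \emph{exchangeable}, so conditioned on $|\Omega_{\geq s}|=m$ they are uniform over $m$-subsets, and the in-expectation wo bound carries over. This is exactly the same conditioning step the paper uses (there it is applied only once, to the $\sigma$-process on $\Omega$), and neither ``spectral certificates depend only on sparsity and sign randomness'' nor bare permutation-invariance of the norm is enough on its own---an adversarial support does break the spectral bound, so the exchangeability/uniform-conditional-law observation is the load-bearing point. Second, your sum over layers runs to the random index $M$, forcing you to handle the event $\{M>C\log d\}$ separately with a crude deterministic bound; the paper works with moments throughout and sidesteps this. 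A further, smaller point: for the without-replacement base case you cannot simply quote \cite{barak2016noisy}. The paper re-derives it because their setting assumes incoherence; without it one must separately control the diagonal of the unfolded second-moment matrix (the matrix $R$ with $R_{jk}=\sum_i Z_{ijk}^2$ in the paper's notation), and it is precisely that term that produces the additive $\sqrt{\log d}$ in \pref{eq:tensor_completion_rademacher_wo}. You flagged this boundary term as a loose end---it is a genuine gap in a blind write-up, not just a matter of constants.
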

\begin{proof}[\pfref{lem:tensor_completion_rademacher}]
We first bound the Rademacher complexity in the without-replacement sampling case, then handle the
with-replacement case by reduction. This analysis follows
\cite{barak2016noisy}, except that we handle certain
``diagonal'' terms that arise in the analysis slightly more carefully so as to get the
right scaling for our setup, which differs from theirs in that it does not assume incoherence.

Consider a fixed draw of $\eps_{1:n}$ and $X_{1:n}$, and let
$Z=\sum_{t=1}^{n}\eps_tX_t$. Leting $x,y,z$ be indeterminates, we will
give a degree-four \sosabbrev upper bound on the polynomial $\tri*{Z,x\tens{}y\tens{}z}$. This will imply that the
pseudoexpectation of $\tri*{Z,x\tens{}y\tens{}z}$ is bounded for any feasible pseudodistribution in the maximization
problem defining $\nrm*{\cdot}_{\injt_4}$.

To begin, for any fixed constant $\eta>0$, \pref{lem:pseudo_cs}
implies that 
\begin{align*}
\provable_{4}
  \tri*{Z,x\tens{}y\tens{}z}=\sum_{i,j,k}Z_{i,j,k}x_iy_jz_k
\leq{}\frac{1}{2\eta}\sum_{i=1}^{d}x_{i}^{2} + \frac{\eta}{2}\sum_{i}\prn*{\sum_{j,k}Z_{i,j,k}y_jz_k}^{2},
\end{align*}
and so 
\begin{align*}
\crl*{\nrm*{x}_{2}^{2}=1}\provable_{4}
  \tri*{Z,x\tens{}y\tens{}z}
&\leq{}\frac{1}{2\eta} + \frac{\eta}{2}\sum_{i}\prn*{\sum_{j,k}Z_{i,j,k}y_jz_k}^{2}.
\end{align*}
Define a matrix $A\in\bbR^{d^{2}\times{}d^{2}}$ via
  $A_{j,k',j',k}=\sum_{i=1}^{d}Z_{i,j,k}Z_{i,j',k'}$. Define
  additional matrices $D,B\in\bbR^{d^2\times{}d^2}$ via
\[
D_{j,k,j',k'} = 
\begin{cases} 
A_{j,k,j,k}, &\textrm{ if } (j,k) = (j',k'), \\ 
0, &\textrm{otherwise},
\end{cases}
\]
and $B=A-D$. Then we have
\begin{align*}
  \sum_{i}\prn*{\sum_{j,k}Z_{i,j,k}y_jz_k}^{2} =
  \tri*{A,(y\tens{}z)(y\tens{}z)^{\trn}}
= \tri*{B,(y\tens{}z)(y\tens{}z)^{\trn}} + \tri*{D,(y\tens{}z)(y\tens{}z)^{\trn}}.
\end{align*}
We bound the first term by the operator norm of $B$ via
\[
\crl*{\nrm*{y}_{2}^{2}=1,\nrm*{z}_{2}^{2}=1}\provable_4
\tri*{B,(y\tens{}z)(y\tens{}z)^{\trn}}
\leq{} \nrm*{B}_{\specnorm}\nrm*{y\tens{}z}_{\frobnorm}^{2}
= \nrm*{B}_{\specnorm}\nrm*{y}_{2}^{2}\nrm*{z}_{2}^{2}
\leq{} \nrm*{B}_{\specnorm}.
\]
For the second term, define another matrix $R\in\bbR^{d\times{}d}$ via
$R_{j,k}=\sum_{i=1}^{d}Z_{i,j,k}^{2}$. Then we can write
\[
\crl*{\nrm*{y}_{2}^{2}=1,\nrm*{z}_{2}^{2}=1}\provable_4\tri*{D,(y\tens{}z)(y\tens{}z)^{\trn}}=\sum_{j,k}R_{j,k}y_{j}^{2}z_{k}^{2}\leq{}\nrm*{R}_{\infty}\sum_{j,k}y_{j}^{2}z_{k}^{2}
=\nrm*{R}_{\infty}\nrm*{y}_{2}^{2}\nrm*{z}_{2}^{2}\leq{}\nrm*{R}_{\infty}.
\]
By the duality of sum-of-squares proofs and pseudodistributions, this implies
that any degree-four pseudodistribution $\mu$ with
$\mu\entails\crl*{\nrm*{x}_{2}^{2}=1,\nrm*{y}_{2}^{2}=1,\nrm*{z}_{2}^{2}=1}$ has
$\Ep\tri*{Z,x\tens{}y\tens{}z}\leq{}\frac{\eta}{2}(\nrm*{B}_{\specnorm}
+ \nrm*{R}_{\infty}) + \frac{1}{2\eta}$. Optimizing over $\eta$, we conclude that
\[
\nrm*{Z}_{\injt_4}\leq{}\sqrt{\nrm*{B}_{\specnorm} + \nrm*{R}_{\infty}}.
\]
The bound for the matrix $B$ is taken care of Theorem 4.4 of
\cite{barak2016noisy}, which implies that
$\sqrt{\En_{S\sim\dwo}\En_{\eps}\nrm*{B}_{\specnorm}^{2}}\leq{}O\prn*{\frac{n\log^{4}d}{d^{3/2}}}$. For
the matrix $R$, observe that under the without-replacement sampling model, for any $j,k$ we have
\[
R_{j,k}=\sum_{i=1}^{d}Z_{i,j,k}^{2}=\sum_{i=1}^{d}\ind\crl*{(i,j,k)\in\Omega}=\abs*{\crl*{t\in\brk*{n}:j_t=j,k_t=k}}.
\]
\pref{prop:bernstein_wo} thus implies that for any fixed $j,k$, with
probability at least $1-\delta$,
\[
R_{j,k}\leq{}O\prn*{n/d^{2} + \log(1/\delta)},
\]
and so, by taking a union bound and integrating out the tail, we have
\[
\sqrt{\En_{S\sim\dwo}\nrm*{R}_{\infty}^{2}}\leq{}O\prn*{n/d^{2} + \log{}d}.
\]
Combining the bounds on $B$ and $R$ and using Jensen's inequality yields
\begin{equation}
\label{eq:z_bound}
\En_{S\sim\dwo}\En_{\eps}\nrm*{Z}_{\injt_4}
\leq{}\sqrt{\En_{S\sim\dwo}\En_{\eps}\nrm*{Z}_{\injt_4}^{2}}\leq{} O\prn*{
\sqrt{\frac{n\log^{4}d}{d^{3/2}}} + \sqrt{\log{}d}
}.
\end{equation}
This completes the bound for the without-replacement case. For the
with-replacement case we reduce to the bound on the second moment above. Condition on the draw of $S$, and let
$T_{i,j,k}=\crl*{t\in\brk*{n}:i_t=i,j_t=j,k_t=k}$. Then we have
\[
\En_{\eps}\nrm*{\sum_{t=1}^{n}\eps_tX_t}_{\injt_4}
=
\En_{\eps}\nrm*{\sum_{(i,j,k)\in\Omega}\prn*{\sum_{t\in{}T_{i,j,k}}\eps_t}e_{i}\tens
  e_{j}\tens e_{k}}_{\injt_4}.
\]
Introduce a new sequence of Rademacher random variables
$\sigma\in\pmo^{d^{3}}$. Then the right-hand side above is equal to
\begin{align*}
\En_{\eps}\En_{\sigma}\nrm*{\sum_{(i,j,k)\in\Omega}\prn*{\sum_{t\in{}T_{i,j,k}}\eps_t}\sigma_{i,j,k}\cdot{}e_{i}\tens
  e_{j}\tens e_{k}}_{\injt_4}
&\leq   \En_{\eps}\max_{i,j,k}\abs*{\sum_{t\in{}T_{i,j,k}}\eps_t}\cdot\En_{\sigma}\nrm*{\sum_{(i,j,k)\in\Omega}\sigma_{i,j,k}\cdot{}e_{i}\tens
  e_{j}\tens e_{k}}_{\injt_4},
\end{align*}
where the inequality follows from the standard Lipschitz contraction lemma for
Rademacher complexity \citep{ledoux1991probability}. By the Hoeffding
bound, we have that for any fixed index $(i,j,k)$ with probability at least $1-\delta$,
\[
\abs*{\sum_{t\in{}T_{i,j,k}}\eps_t}\leq{}O\prn*{\max_{i,j,k}\sqrt{\abs*{T_{i,j,k}}\log{}(1/\delta)}}.
\]
Taking a union bound and integrating out the tail, we have
\[
\En_{\eps}\max_{i,j,k}\abs*{\sum_{t\in{}T_{i,j,k}}\eps_t}\leq{}O\prn*{\max_{i,j,k}\sqrt{\abs*{T_{i,j,k}}\log{}d}}.
\]
We now move to the final bound by taking the expectation over
$S$. Using Cauchy-Schwarz, the development above implies
\[
\En_{S\sim\dwr}\En_{\eps}\nrm*{\sum_{t=1}^{n}\eps_tX_t}_{\injt_4}
\leq\sqrt{\En_{S\sim\dwr}\abs*{T_{i,j,k}}\log{}d}\cdot{}\sqrt{\En_{S\sim\dwr}\En_{\sigma}\nrm*{\sum_{(i,j,k)\in\Omega}\sigma_{i,j,k}\cdot{}e_{i}\tens
  e_{j}\tens e_{k}}_{\injt_4}^{2}}.
\]
Bernstein's inequality and the union bound imply that
\[
\sqrt{\En_{S\sim\dwr}\abs*{T_{i,j,k}}\log{}d} \leq{} O\prn*{\sqrt{\log{}d(n/d^{3}+\log{}d)}}\leq{}O\prn*{\log{}d},
\]
where the second inequality uses that $n\leq{}d^{3}$. For the second
term, we have
\begin{align*}
&\En_{S\sim\dwr}\En_{\sigma}\nrm*{\sum_{(i,j,k)\in\Omega}\sigma_{i,j,k}\cdot{}e_{i}\tens
  e_{j}\tens e_{k}}_{\injt_4}^{2}\\
&=\sum_{m=1}^{n}\Pr_{S\sim{}\dwr}(\abs*{\Omega}=m) \cdot\En_{S\sim\dwr}\brk*{\En_{\sigma}\nrm*{\sum_{(i,j,k)\in\Omega}\sigma_{i,j,k}\cdot{}e_{i}\tens
  e_{j}\tens e_{k}}_{\injt_4}^{2}\biggr|\,\abs*{\Omega}=m}\\
&=\sum_{m=1}^{n}\Pr_{S\sim{}\dwr}(\abs*{\Omega}=m) \cdot\En_{S\sim\cD_{\wo}^{m}}\En_{\sigma}\nrm*{\sum_{(i,j,k)\in\Omega}\sigma_{i,j,k}\cdot{}e_{i}\tens
  e_{j}\tens e_{k}}_{\injt_4}^{2} \\
&\leq \sum_{m=1}^{n}\Pr_{S\sim{}\dwr}(\abs*{\Omega}=m) \cdot O\prn*{
\frac{m\log^{4}d}{d^{3/2}} + \log{}d
}.\\
&\leq O\prn*{
\frac{n\log^{4}d}{d^{3/2}} + \log{}d
},
\end{align*}
where the second inequality uses the bound we proved for the
without-replacement setting and the second inequality uses 
$m\leq{}n$ and that the probabilities sum to one. Combining these two
bounds completes the proof.

\end{proof}

\begin{lemma}
\label{lem:tensor_re_single_scale}
Define $\cT=\crl*{
T\in\RTthree\mid{}\nrm*{T}_{\nuct_4}\leq{}\tau,
\nrm*{\Sigma^{1/2}T}_{\frobnorm}\leq{}\beta, \nrm*{T}_{\infty}\leq{}R
}$. Under the with-replacement sampling model, when
$d^{3/2}\leq{}n\leq{}d^{3}$, we have that for any $\delta>0$, with
probability at least $1-\delta$,
\[
\abs*{\frac{1}{n}\nrm*{\dataop(T)}_{\frobnorm}^{2}-\nrm*{\Sigma^{1/2}T}_{\frobnorm}^{2}}\leq 
O\prn*{
R\tau\sqrt{\frac{\log^{6}d}{nd^{3/2}}} 
+
\sqrt{
\frac{2R^{2}\beta^{2}\log(1/\delta)}{n}
}
+ \frac{2R^{2}\log(1/\delta)}{n}
}\quad\forall{}T\in\cT.
\]
\end{lemma}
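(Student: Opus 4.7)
The plan is to apply the Talagrand-type concentration inequality (\pref{lem:talagrand}) to the centered function class $\cF = \{f_T(X) = \tri*{T,X}^2 - \En\tri*{T,X}^2 : T \in \cT\}$, since $\sup_{T\in\cT}\abs*{\frac{1}{n}\sum_t f_T(X_t)}$ is exactly the quantity we need to control. This reduces the proof to (i) computing an envelope and variance parameter for $\cF$, and (ii) bounding its Rademacher complexity, which in turn reduces (via contraction) to the Rademacher bound of \pref{lem:tensor_completion_rademacher}.

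For the envelope and variance, I would use that each measurement $X_t$ is a standard basis tensor, so $\abs*{\tri*{T,X_t}} \leq \nrm*{T}_{\infty} \leq R$. This immediately gives $\abs*{f_T(X)} \leq 2R^2$, so $c = 2R^2$ in \pref{lem:talagrand}. For the variance, writing $\En f_T(X)^2 \leq \En\tri*{T,X}^4 \leq R^2 \En\tri*{T,X}^2 = R^2\nrm*{\Sigma^{1/2}T}_{\frobnorm}^2 \leq R^2\beta^2$ by the constraints defining $\cT$, so $\sigma^2 = R^2\beta^2$. Plugging these into \pref{lem:talagrand} gives the last two terms in the advertised bound: $\sqrt{2R^2\beta^2\log(1/\delta)/n}$ and $2R^2\log(1/\delta)/n$.

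The remaining work is bounding the Rademacher complexity $\En_{X_{1:n}}\En_{\eps}\sup_{T\in\cT}\frac{1}{n}\sum_t \eps_t\tri*{T,X_t}^2$. Since $z \mapsto z^2$ is $2R$-Lipschitz on $[-R,R]$ and $\abs*{\tri*{T,X_t}} \leq R$, the Ledoux-Talagrand contraction lemma yields
\[
\En_{\eps}\sup_{T\in\cT}\frac{1}{n}\sum_t \eps_t\tri*{T,X_t}^2 \leq 2R\cdot\En_{\eps}\sup_{T\in\cT}\frac{1}{n}\tri*{T,\sum_t \eps_tX_t} \leq \frac{2R\tau}{n}\En_{\eps}\nrm*{\sum_t\eps_tX_t}_{\injt_4},
\]
where the second inequality uses the duality $\nrm*{\cdot}_{\nuct_4}$/$\nrm*{\cdot}_{\injt_4}$ (\pref{prop:sos_dual}) together with $\nrm*{T}_{\nuct_4} \leq \tau$. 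Plugging in the with-replacement bound from \pref{lem:tensor_completion_rademacher}, and noting that $n \geq d^{3/2}$ makes the $\sqrt{n\log^6 d/d^{3/2}}$ term dominate $\sqrt{\log^3 d}$, gives the Rademacher complexity bound $O\prn*{R\tau\sqrt{\log^6 d/(nd^{3/2})}}$. Feeding this into \pref{lem:talagrand} produces the first term of the target bound.

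The main obstacle is minor and essentially bookkeeping: making sure the contraction step is applied correctly (the class $\cT$ is symmetric around $0$ since $\nuct_4$ is a norm, so the usual contraction inequality for $\sup$ without absolute values, as needed by \pref{lem:talagrand}, goes through without issue). Everything else is a direct plug-in of existing results, and I don't anticipate any serious difficulties beyond ensuring the constants and the asymptotic regime $n \in [d^{3/2}, d^3]$ are handled cleanly when invoking \pref{lem:tensor_completion_rademacher}.
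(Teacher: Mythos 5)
Your proposal matches the paper's proof step for step: the same application of Lemma~\ref{lem:talagrand} to the class $\crl*{X\mapsto\tri*{T,X}^{2}-\En\tri*{T,X}^{2}}$, the same envelope and variance bounds, and the same chain of contraction, duality of $\nrm*{\cdot}_{\nuct_4}$/$\nrm*{\cdot}_{\injt_4}$, and Lemma~\ref{lem:tensor_completion_rademacher} with $n\geq d^{3/2}$. The one step you elide is converting the \emph{centered} Rademacher term produced by Lemma~\ref{lem:talagrand}, i.e.\ $\En_S\En_\eps\sup_T\frac{1}{n}\sum_t\eps_t\bigl(\tri*{T,X_t}^2-\En\tri*{T,X}^2\bigr)$, into the uncentered one you go on to bound; the paper handles this explicitly via the Jensen/ghost-sample split, costing a factor $2$, and your proposal should mention that. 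Also, your appeal to symmetry of $\cT$ to justify the sup-without-absolute-values contraction is a red herring: what the contraction lemma actually needs is that $z\mapsto z^2$ vanishes at $0$ and is $2R$-Lipschitz on $[-R,R]$, not that $\cT$ is symmetric; this doesn't affect correctness but is the wrong justification.
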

\begin{proof}[\pfref{lem:tensor_re_single_scale}]
To begin, we write
\[
\abs*{\frac{1}{n}\nrm*{\dataop{}(T)}_{2}^{2}-\nrm*{\Sigma^{1/2}T}_{\frobnorm}^{2}}
= \abs*{\Eh\tri*{T,X}^{2} - \En\tri*{T,X}^{2}}.
\]
Using this representation, since entries are drawn i.i.d. with
replacement, we can apply \pref{lem:talagrand} with the function class
$\cF=\crl*{X\mapsto{}\tri*{T,X}^{2}-\En\tri*{T,X}^{2}\mid{}T\in\cT}$. In
particular, note that for all $T\in\cT$ we have
$\abs*{\tri{T,X_t}}=\abs*{T_{i_t,j_t,k_t}}\leq{}R$, and furthermore 
\[
\sup_{T\in\cT}\En\prn*{\tri*{T,X}^{2}-\En\tri*{T,X}^{2}}^{2}\leq{} \sup_{T\in\cT}\En\tri*{T,X}^{4}
\leq{} R^{2}\sup_{T\in\cT}\En\tri*{T,X}^{2}= R^{2}\sup_{T\in\cT}\nrm*{\Sigma^{1/2}T}_{\frobnorm}^{2}\leq{}R^{2}\beta^{2}.
\]

Consequently, \pref{lem:talagrand} implies that with probability at least $1-\delta$, for all $T\in\cT$,
\[
\abs*{\frac{1}{n}\nrm*{\dataop{}(T)}_{2}^{2}-\nrm*{\Sigma^{1/2}T}_{\frobnorm}^{2}}\leq  4\En_{S}\En_{\eps}\sup_{T\in\cT}\frac{1}{n}\sum_{t=1}^{n}\eps_{t}(\tri*{T,X_t}^{2}-\En\tri*{T,X}^{2})
+ \sqrt{
\frac{2R^{2}\beta^{2}\log(1/\delta)}{n}
}
+ \frac{2R^{2}\log(1/\delta)}{n}.
\]
Using Jensen's inequality and splitting the supremum, we have
\[
\En_{S}\En_{\eps}\sup_{T\in\cT}\frac{1}{n}\sum_{t=1}^{n}\eps_{t}(\tri*{T,X_t}^{2}-\En\tri*{T,X}^{2})
\leq {} 2\En_{S}\En_{\eps}\sup_{T\in\cT}\frac{1}{n}\sum_{t=1}^{n}\eps_{t}\tri*{T,X_t}^{2}.
\]

Using the Lipschitz contraction lemma for Rademacher complexity, we
remove the square
\[
\En_{S}\En_{\eps}\sup_{T\in\cT}\brk*{\frac{1}{n}\sum_{t=1}^{n}\eps_{t}\tri*{T,X_t}^{2}}
\leq{} 2R\En_{S}\En_{\eps}\sup_{T\in\cT}\brk*{\frac{1}{n}\sum_{t=1}^{n}\eps_{t}\tri*{T,X_t}}
\leq{}\frac{2R\tau}{n}\En_{S}\En_{\eps}\nrm*{\frac{1}{n}\sum_{t=1}^{n}\eps_tX_t}_{\injt_4}.
\]
Finally, using \pref{lem:tensor_completion_rademacher}, we have
\[
\En_{S}\En_{\eps}\nrm*{\frac{1}{n}\sum_{t=1}^{n}\eps_tX_t}_{\injt_4} \leq{} O\prn*{
\sqrt{\frac{\log^{6}d}{nd^{3/2}}} + \frac{\log{}^{3/2}d}{n}
}
\leq{} O\prn*{
\sqrt{\frac{\log^{6}d}{nd^{3/2}}} + \frac{\log{}^{3/2}d}{n}
}.
\]
The final bound follows by using $n\geq{}d^{3/2}$ to simplify this
expression.

\end{proof}

\begin{proof}[\pfref{thm:tensor_completion_re}]
Let $\cT=\crl*{T\in\RTthree\mid{}\nrm*{T}_{\infty}\leq{}R}$. Recall that for tensor completion, the population correlation matrix $\Sigma$ under with-replacement sampling is equal to $\frac{1}{d^{3}}I$.

Let $\tau_{\max}=Rd^{3}$, $\beta_{\max}=Rd^{3/2}$, $\tau_{\min}=Rd^{3/2}/\sqrt{n}$, $\beta_{\min}=R/\sqrt{n}$, and let $N=\ceil{\log(\tau_{\max}/\tau_{\min})}+1$ and $M=\ceil{\log(\beta_{\max}/\beta_{\min})}+1$. For each $i\in\brk*{N}$ and $j\in\brk*{M}$ define $\tau_i=\tau_{\max}e^{1-i}$ and $\beta_j=\beta_{\max}e^{1-j}$. Define
\[
\cT_{i,j} = \crl*{
T\in\cT\mid{} \tau_{i+1}\leq{}\nrm*{T}_{\nuct_4}\leq{}\tau_{i},\;\;\beta_{j+1}\leq{}\nrm*{\Sigma^{1/2}T}_{\frobnorm}\leq{}\beta_{j}
}.
\]
Using \pref{lem:tensor_re_single_scale} and a union bound, we get that with probability at least $1-\delta$, for all $i,j$ simultaneously, for all $T\in\cT_{i,j}$,
\[
\nrm*{\Sigma^{1/2}T}_{\frobnorm}^{2}\leq \frac{1}{n}\nrm*{\dataop{}(T)}_{2}^{2} + 
O\prn*{
R\tau_i\sqrt{\frac{\log^{6}d}{nd^{3/2}}} 
+
\sqrt{
\frac{R^{2}\beta^{2}_j\log(MN/\delta)}{n}
}
+ \frac{R^{2}\log(MN/\delta)}{n}
}.
\]
Now consider a fixed tensor $T\in\cT$. There are two cases. First, if $\nrm*{T}_{\nuct_4}\geq{}\tau_{\min}$ and $\nrm*{\Sigma^{1/2}T}_{\frobnorm}\geq\beta_{\min}$, then there must be indices $i$ and $j$ for which $\tau_{i+1}\leq{}\nrm*{T}_{\nuc}\leq{}\tau_{i},\;\;\beta_{j+1}\leq{}\nrm*{\Sigma^{1/2}T}_{\frobnorm}\leq{}\beta_{j}$. Consequently, the uniform bound above implies
\[
\nrm*{\Sigma^{1/2}T}_{\frobnorm}^{2}\leq \frac{1}{n}\nrm*{\dataop{}(T)}_{2}^{2} + 
O\prn*{
R\nrm*{T}_{\nuct_4}\sqrt{\frac{\log^{6}d}{nd^{3/2}}} 
+
\nrm*{\Sigma^{1/2}T}_{\frobnorm}\sqrt{
\frac{R^{2}\log(MN/\delta)}{n}
}
+ \frac{R^{2}\log(MN/\delta)}{n}
}.
\]
On the other hand, if either $\nrm*{T}_{\nuc}\leq\tau_{\min}$ or $\nrm*{\Sigma^{1/2}T}_{\frobnorm}\leq\beta_{\min}$, we trivially have
\[
\nrm*{\Sigma^{1/2}T}_{\frobnorm}^{2}\leq \frac{R^{2}}{n} \leq{} \frac{1}{n}\nrm*{\dataop(T)}_{2}^{2} + \frac{R^{2}}{n}.
\]
Combining these cases, and using the values for $N$ and $M$, we get that with probability at least $1-\delta$, for all $T\in\cT$,
\begin{align*}
\nrm*{\Sigma^{1/2}T}_{\frobnorm}^{2} &\leq
  \frac{1}{n}\nrm*{\dataop{}(T)}_{2}^{2} + 
O\prn*{
R\nrm*{T}_{\nuct_4}\sqrt{\frac{\log^{6}d}{nd^{3/2}}} 
+
\nrm*{\Sigma^{1/2}T}_{\frobnorm}\sqrt{
\frac{R^{2}\log(\log^{2}(d^{3/2}\sqrt{n})/\delta)}{n}
}}\\
&~~~~+ O\prn*{\frac{R^{2}\log(\log^{2}(d^{3/2}\sqrt{n})/\delta)}{n}
}.
\end{align*}
Using the AM-GM inequality on the second-to-last term and rearranging, we have that for any $\veps>0$,
\[
(1-\veps)\nrm*{\Sigma^{1/2}T}_{\frobnorm}^{2}\leq \frac{1}{n}\nrm*{\dataop{}(T)}_{2}^{2} + 
O\prn*{
R\nrm*{T}_{\nuct_4}\sqrt{\frac{\log^{6}d}{nd^{3/2}}} 
+ \frac{R^{2}\log(\log^{2}(d^{3/2}\sqrt{n})/\delta)}{\veps{}n}
}.
\]
When $\veps\in(0,1/2)$, this implies
\[
\nrm*{\Sigma^{1/2}T}_{\frobnorm}^{2}\leq \frac{(1+2\veps)}{n}\nrm*{\dataop{}(T)}_{2}^{2} + 
O\prn*{
R\nrm*{T}_{\nuct_4}\sqrt{\frac{\log^{6}d}{nd^{3/2}}} 
+ \frac{R^{2}\log(\log^{2}(d^{3/2}\sqrt{n})/\delta)}{\veps{}n}
}.
\]
\end{proof}

\subsection{Restricted eigenvalue for tensor sensing}
\label{app:re_tensor_sensing}

In this section we prove the main technical result used to establish
restricted eigenvalue guarantees for tensor sensing, which is as follows.
\begin{theorem}
\label{thm:tensor_sensing_re}
Suppose $X\sim{}\cN(0\midsem{}I)$. There is some universal constant
$C>0$ such that for any $\veps<\sqrt{2/\pi}$, with probability at least $1-2e^{-\frac{n}{32}}/\prn{1-e^{-\frac{n\veps^{2}}{8}}}$,
\begin{equation}
\label{eq:tensor_sensing_re}
\frac{1}{\sqrt{n}}\nrm*{\dataop(\Delta)}_{2} \geq{}  (\sqrt{2/\pi}-\veps)\nrm*{\Delta}_{\frobnorm}-\frac{Cd^{3/4}\log^{1/4}d}{\sqrt{n}}\cdot\nrm*{\Delta}_{\nuct_{4}}\quad\quad\forall{}\Delta\in\RTthree.
\end{equation}
\end{theorem}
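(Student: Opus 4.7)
The plan is a Mendelson-style small-ball argument combined with peeling. By Cauchy--Schwarz,
\[
\frac{1}{\sqrt{n}}\nrm*{\dataop(\Delta)}_{2} = \sqrt{\frac{1}{n}\sum_{t=1}^{n}\tri*{\Delta,X_{t}}^{2}} \;\geq\; \frac{1}{n}\sum_{t=1}^{n}\abs*{\tri*{\Delta,X_{t}}},
\]
so it suffices to lower bound the right-hand side. For Gaussian $X\sim\cN(0,I)$ and fixed $\Delta$, the ratio $\tri*{\Delta,X}/\nrm*{\Delta}_{\frobnorm}$ is standard normal, so $\En\abs*{\tri*{\Delta,X}} = \sqrt{2/\pi}\cdot\nrm*{\Delta}_{\frobnorm}$, which explains the constant in the statement. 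Because both sides of the target inequality are $1$-homogeneous in $\Delta$, we may restrict attention to tensors with $\nrm*{\Delta}_{\frobnorm}=1$; the case $\Delta=0$ is trivial and the general case follows by rescaling.

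For each $R\geq 1$ define the convex, symmetric set $\cT(R) = \crl*{\Delta\in\RTthree : \nrm*{\Delta}_{\frobnorm}\leq 1,\,\nrm*{\Delta}_{\nuct_4}\leq R}$ and consider the process
\[
G_R(X_{1:n}) = \sup_{\Delta\in\cT(R)}\brk*{\sqrt{2/\pi}\cdot\nrm*{\Delta}_{\frobnorm} - \frac{1}{n}\sum_{t=1}^{n}\abs*{\tri*{\Delta,X_{t}}}}.
\]
A Cauchy--Schwarz computation together with $\nrm*{\Delta}_{\frobnorm}\leq 1$ shows that $G_R$ is $(1/\sqrt{n})$-Lipschitz in the Euclidean norm on $(X_1,\ldots,X_n)$, so the Borell--TIS inequality gives $G_R\leq \En G_R + \sqrt{2\log(1/\delta)/n}$ with probability at least $1-\delta$. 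To bound $\En G_R$, symmetrize and apply the Ledoux--Talagrand contraction lemma to the $1$-Lipschitz map $t\mapsto\abs*{t}$, then invoke duality:
\[
\En G_R \;\leq\; 4R\cdot\En\nrm*{\frac{1}{n}\sum_{t=1}^{n}\eps_{t} X_{t}}_{\injt_4} \;=\; \frac{4R}{\sqrt{n}}\cdot\En\nrm*{X}_{\injt_4} \;\leq\; O\prn*{\frac{Rd^{3/4}\log^{1/4}d}{\sqrt{n}}},
\]
where the first inequality uses the duality between $\nrm*{\cdot}_{\nuct_4}$ and $\nrm*{\cdot}_{\injt_4}$ from \pref{prop:sos_dual}, the equality uses that $\frac{1}{\sqrt{n}}\sum_{t}\eps_{t}X_{t}$ has the same distribution as a single Gaussian $X\sim\cN(0,I)$, and the final step invokes the spectral bound of \cite{hopkins2015tensor}.

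Now peel over scales: apply the previous bound at $R_k = 2^k$ for $k=0,1,2,\ldots$ with per-scale failure probability $\delta_k$ chosen to decay geometrically (concretely, $\delta_k\propto e^{-kn\veps^2/8}$ times a base factor of $e^{-n/32}$) so that their sum matches the stated probability $2e^{-n/32}/(1-e^{-n\veps^2/8})$. Any $\Delta$ with $\nrm*{\Delta}_{\frobnorm}=1$ satisfies $\nrm*{\Delta}_{\nuct_4}\geq 1$ because $\nrm*{\cdot}_{\nuct_4}\geq\nrm*{\cdot}_{\frobnorm}$, so it falls into some shell $\nrm*{\Delta}_{\nuct_4}\in(R_{k-1},R_k]$, where $R_k\leq 2\nrm*{\Delta}_{\nuct_4}$. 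Combining the shell-$k$ concentration estimate with this inequality yields
\[
\sqrt{2/\pi} - \frac{1}{n}\sum_{t}\abs*{\tri*{\Delta,X_{t}}} \;\leq\; O\prn*{\nrm*{\Delta}_{\nuct_4}\cdot\frac{d^{3/4}\log^{1/4}d}{\sqrt{n}}} + \veps,
\]
and rearranging together with the Cauchy--Schwarz reduction from the first paragraph yields the claimed bound.

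The main obstacle is calibrating the peeling schedule so that the Borell--TIS deviation contributes at most $\veps$ (or a comparable quantity) at every relevant scale while keeping the union-bounded failure probability in the geometric-series form of the theorem; this pins down how to split $\delta_k$ into scale-independent and scale-dependent pieces. Shells with $\nrm*{\Delta}_{\nuct_4}$ so large that the $Rd^{3/4}\log^{1/4}d/\sqrt{n}$ term already exceeds $\sqrt{2/\pi}\nrm*{\Delta}_{\frobnorm}$ are handled trivially, which cuts off the relevant range of $k$. The remaining ingredients — Borell--TIS concentration, symmetrization with contraction, and the Hopkins spectral bound for $\En\nrm*{X}_{\injt_4}$ — are standard given the machinery already developed earlier in the paper.
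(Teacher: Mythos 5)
Your route is genuinely different from the paper's. The paper proves this via \pref{lem:gaussian_distance_conc}, lower bounding the Gaussian process $\inf_{\Delta}\frac{1}{\sqrt{n}}\nrm{\dataop(\Delta)}_2$ directly using Gordon's minimax comparison inequality (\pref{lem:gordon}) together with Lipschitz concentration (\pref{lem:gaussian_lipschitz}), and then peels. You instead use the Mendelson-style small-ball route: pass to the $\ell_1$-average $\frac{1}{n}\sum_t\abs{\tri{\Delta,X_t}}$ by Jensen, observe its expectation is exactly $\sqrt{2/\pi}\nrm{\Delta}_{\frobnorm}$, and control downward fluctuations by symmetrization, Ledoux--Talagrand contraction, $\injt_4/\nuct_4$ duality, and the Hopkins bound (\pref{lem:sos_gaussian}). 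Both routes produce the constant $\sqrt{2/\pi}$ and both use Borell--TIS concentration for the $1/\sqrt{n}$-Lipschitz supremum; the small-ball argument is arguably cleaner since it avoids Gordon and the matching of covariance increments.

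There is, however, a real gap at the peeling step, and you correctly identify it but do not close it. With $\delta_k\propto e^{-n/32}e^{-kn\veps^2/8}$, the Borell--TIS deviation at the coarsest shell $R_0=1$ is $\sqrt{2\log(1/\delta_0)/n}\approx 1/4$, which cannot be charged to $\veps$ when $\veps<1/4$, and cannot be absorbed into the slack $O\prn{R_0 d^{3/4}\log^{1/4}d/\sqrt{n}}$ once $n\gg d^{3/2}$. So as written the argument does not yield the stated inequality for small $\veps$. It is worth noting that the paper's own proof of \pref{lem:gaussian_distance_conc} has the same difficulty: to obtain per-shell failure probability $2e^{-n/32}e^{-n\tau_u^2/2}$ requires deviation $t=\sqrt{1/16+\tau_u^2}>\tau_u$, but the per-shell conclusion $\sqrt{2/\pi}-2\tau_u$ only tolerates $t\le\tau_u$, so the $e^{-n/32}$ factor in the theorem's probability appears to be over-claimed for small $\veps$. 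The honest fix, compatible with your small-ball argument, is to set $\delta_k\propto e^{-(k+1)n\veps^2/8}$ so the shell-$k$ deviation is $O(\veps\sqrt{k+1})$, absorb the $\sqrt{k+1}$ growth into the exponentially growing $R_k$ slack (trivializing the far shells), and state the failure probability as $2e^{-n\veps^2/8}/(1-e^{-n\veps^2/8})$; this is all that either argument actually delivers and is still sufficient for the downstream use in \pref{thm:tensor_sensing}.
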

To prove \pref{thm:tensor_sensing_re} we require two key technical lemmas.
\begin{lemma}
\label{lem:gaussian_distance_conc}
Let $\nrm*{\cdot}$ be any norm, let $\nrm*{\cdot}_{\star}$ be the
dual. Suppose that the rows of $\dataop$ are formed by drawing
$X_1,\ldots,X_{n}$ i.i.d. from $\cN(0\midsem{}I_{p\times{}p})$, and
let
$\En_{X\sim{}\cN(0\midsem{}I_{p\times{}p})}\nrm*{X}\leq{}\psi$. Then
for any $\veps<\sqrt{2/\pi}$, with probability at least $1-2e^{-\frac{n}{32}}/\prn{1-e^{-\frac{n\veps^{2}}{8}}}$,
\begin{equation}
\label{eq:gaussian_distance_conc}
\frac{1}{\sqrt{n}}\nrm*{\dataop(\Delta)}_{2} \geq{} (\sqrt{2/\pi}-\veps)\nrm*{\Delta}_{\frobnorm}-\frac{4\psi}{\sqrt{n}}\nrm*{\Delta}_{\star}\quad\quad\forall{}\Delta\in\bbR^{p}.
\end{equation}
\end{lemma}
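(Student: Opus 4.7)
The plan is a Gaussian lower-isometry argument combined with peeling in the dual norm. First I would use Cauchy--Schwarz (equivalently, Jensen's inequality applied to $\sqrt{\cdot}$) to get the pointwise bound
\[
\frac{1}{\sqrt{n}}\nrm*{\dataop(\Delta)}_{2}\;\geq\;\frac{1}{n}\sum_{t=1}^{n}\abs*{\tri*{X_t,\Delta}},
\]
reducing the task to lower bounding an average of absolute Gaussian marginals. Since $\tri*{X,\Delta}\sim\cN(0,\nrm*{\Delta}_{F}^{2})$ and $\En\abs*{Z}=\sqrt{2/\pi}$ for a standard Gaussian $Z$, the right-hand side has mean $\sqrt{2/\pi}\cdot\nrm*{\Delta}_{F}$. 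By $1$-homogeneity of both sides of \pref{eq:gaussian_distance_conc} one may restrict to $\nrm*{\Delta}_F=1$.

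\textbf{Single-scale uniform bound.} Fix $r>0$ and set $B_r=\crl*{\Delta:\nrm*{\Delta}_{F}=1,\;\nrm*{\Delta}_\star\le r}$. Standard symmetrization bounds the expected sup-deviation by $2\Rad_n(B_r)$, and the Ledoux--Talagrand contraction lemma applied to the $1$-Lipschitz map $\abs*{\cdot}$, together with duality and the hypothesis $\En\nrm*{X}\le\psi$, gives
\[
\Rad_n(B_r)\;\leq\;\frac{r}{n}\En\nrm*{\sum_{t=1}^{n}\eps_tX_t}\;=\;\frac{r}{\sqrt{n}}\En\nrm*{X}\;\leq\;\frac{\psi r}{\sqrt{n}},
\]
where the equality uses that $\sum_{t=1}^n\eps_tX_t$ is equal in distribution to $\sqrt{n}\,X$ for i.i.d.\ standard Gaussian $X_t$. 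The map $(X_1,\ldots,X_n)\mapsto\inf_{\Delta\in B_r}\frac{1}{n}\sum_t\abs*{\tri*{X_t,\Delta}}$ is $1/\sqrt{n}$-Lipschitz in the Euclidean norm on the concatenated Gaussian vector, so Borell--TIS concentration gives, for every $t>0$,
\[
\Pr\prn*{\inf_{\Delta\in B_r}\frac{1}{n}\sum_{t}\abs*{\tri*{X_t,\Delta}}\;<\;\sqrt{2/\pi}-\frac{2\psi r}{\sqrt{n}}-t}\;\le\;2e^{-nt^{2}/2}.
\]

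\textbf{Peeling and conclusion.} To extend this to a bound uniform over all $\Delta$, I would peel at the geometric scales $r_k=2^{k}$ ($k\ge 0$) and use level-dependent concentration parameters $t_k=\sqrt{1/16+k\veps^{2}/4}$, calibrated so that the level-$k$ failure probability equals $2e^{-n/32-kn\veps^{2}/8}$. Union-bounding and summing the resulting geometric series yields total failure probability $\tfrac{2e^{-n/32}}{1-e^{-n\veps^{2}/8}}$, matching the lemma. On the good event, for any $\Delta$ with $\nrm*{\Delta}_F=1$ I select the smallest $k$ with $\nrm*{\Delta}_\star\le r_k$; then $r_k\le 2\nrm*{\Delta}_\star$, so the level-$k$ bound specializes to $\frac{1}{n}\sum_t\abs*{\tri*{X_t,\Delta}}\ge \sqrt{2/\pi}-\tfrac{4\psi\nrm*{\Delta}_\star}{\sqrt{n}}-t_k$. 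When $t_k\le\veps$ this immediately yields \pref{eq:gaussian_distance_conc}; when $t_k>\veps$ the choice of $k$ forces $\nrm*{\Delta}_\star$ to be so large that the target right-hand side $(\sqrt{2/\pi}-\veps)\nrm*{\Delta}_F-\tfrac{4\psi}{\sqrt{n}}\nrm*{\Delta}_\star$ is already nonpositive and the inequality is trivially satisfied.

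\textbf{Main obstacle.} The hard part is the quantitative bookkeeping in the peeling: the $t_k$ must grow slowly enough that the union-bound probabilities sum to the precise form stated in the lemma, yet fast enough that the level-$k$ conclusion covers every $\Delta$ with $\nrm*{\Delta}_\star\in(r_{k-1},r_k]$. The factor of $4$ rather than $2$ in the dual-norm penalty is exactly the slack needed to absorb one doubling step of $r$, and coordinating the geometric ratio of the $r_k$, the growth of $t_k$, and the Rademacher/concentration bounds so that all constants line up with \pref{eq:gaussian_distance_conc} is where the main technical care is required.
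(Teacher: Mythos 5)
Your single-scale argument takes a genuinely different route from the paper's: rather than applying Gordon's Gaussian min--max comparison to $\inf_\Delta\frac{1}{\sqrt{n}}\nrm*{\dataop(\Delta)}_2$ directly, you pass to $\frac{1}{n}\sum_t\abs*{\tri*{X_t,\Delta}}$ via Cauchy--Schwarz and then bound the sup-deviation by symmetrization, contraction, and the dual-norm identity. This is a sound alternative; it costs a factor of $2$ (your single-scale error term is $2\psi r/\sqrt{n}$ where Gordon gives $\psi\tau/\sqrt{n}$), but the $4\psi$ slack in the target can in principle absorb that.

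The peeling, however, has a genuine gap. The scales $r_k=2^k$ are absolute numbers that do not adapt to $\psi$, $n$, or $\veps$, and the bookkeeping does not close. At level $k=0$ (covering every $\Delta$ with $\nrm*{\Delta}_\star\le 1$) the doubling relation $r_k\le 2\nrm*{\Delta}_\star$ fails, so the level-$0$ bound only gives $\sqrt{2/\pi}-\frac{2\psi}{\sqrt{n}}-\frac{1}{4}$; this is weaker than the required $\sqrt{2/\pi}-\veps$ whenever $\veps<1/4$, which is precisely the regime the lemma is later invoked in ($\veps=1/40$ in the proof of \pref{thm:tensor_sensing}). The ``Case 2'' escape hatch is also unsound: when $t_k>\veps$ you assert that $\nrm*{\Delta}_\star>2^{k-1}$ makes $(\sqrt{2/\pi}-\veps)-\frac{4\psi}{\sqrt{n}}\nrm*{\Delta}_\star$ nonpositive, but that requires $\nrm*{\Delta}_\star\ge\frac{(\sqrt{2/\pi}-\veps)\sqrt{n}}{4\psi}$, a threshold that scales like $\sqrt{n}/\psi$ and is not implied by the absolute inequality $\nrm*{\Delta}_\star>2^{k-1}$. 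Indeed for $\veps<1/4$ one has $t_k>\veps$ for \emph{every} $k\ge 0$, so the argument never produces a nontrivial bound. The paper avoids this by peeling in the normalized quantity $\frac{\psi}{\sqrt{n}}\nrm*{\Delta}_\star$ at geometric scales $\tau_u=2^i\mu$ with $\mu=\veps/2$ and concentration parameter $t=\tau_u$: the level-$0$ shell then yields exactly $\sqrt{2/\pi}-2\mu=\sqrt{2/\pi}-\veps$ (no doubling trick needed), while at level $i\ge 1$ the shell constraint $\frac{\psi}{\sqrt{n}}\nrm*{\Delta}_\star\ge 2^{i-1}\mu$ guarantees that the penalty $\frac{4\psi}{\sqrt{n}}\nrm*{\Delta}_\star\ge 2^{i+1}\mu=2\tau_u$ absorbs the level-$i$ error. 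To repair your version you would need to set $r_k=2^k c\,\veps\sqrt{n}/\psi$ for a suitable constant $c$ and take $t_k$ proportional to $2^k\veps$, coordinating both with your single-scale error $2\psi r_k/\sqrt{n}$.
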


\begin{lemma}[Corollary of \cite{hopkins2015tensor}, Theorem 3.3]
\label{lem:sos_gaussian}
Let $X\in\RTthree$ have entries drawn i.i.d. from $\cN(0\midsem{}1)$. Then
\begin{equation}
\En\nrm*{X}_{\injt_4}\leq{}O\prn*{d^{3/4}\log^{1/4}d}.
\end{equation}
\end{lemma}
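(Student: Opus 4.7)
The plan is to deduce the bound almost directly from Theorem 3.3 of \cite{hopkins2015tensor}. Their theorem controls, with high probability, the degree-4 sum-of-squares relaxation of the injective norm of a 3-tensor with i.i.d.\ Gaussian entries, giving essentially $\nrm*{X}_{\injt_4} \leq C d^{3/4}\log^{1/4}d$ with failure probability at most inverse-polynomial (or exponentially small, depending on the precise form used). So the proof is really a one-step corollary, and my task is just to (a) align the two definitions of the norm, and (b) convert a high-probability bound into an expectation bound.

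First I would confirm that the formulation of the degree-4 SoS injective norm used in \pref{eq:sos_injective} matches the SoS quantity analyzed by Hopkins--Shi--Steurer. Their formulation is typically stated either in terms of a symmetric polynomial optimization problem $\max\{\tri*{X,x^{\tens 3}} : \nrm*{x}_2 = 1\}$ or in terms of a moment matrix; either way, it agrees with $\nrm*{X}_{\injt_4}$ up to an absolute constant once we pass between the three-vector relaxation in \pref{eq:sos_injective} and the symmetric (or asymmetric matricized) formulation. This translation is purely cosmetic---a short symmetrization argument, combined with the duality in \pref{prop:sos_dual} and the upper bound in \pref{prop:injective_ub}, delivers it at the cost of a constant factor that is absorbed into the $O(\cdot)$.

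Given the high-probability bound, the expectation bound follows by the standard tail-integration identity $\En\nrm*{X}_{\injt_4} = \int_0^{\infty} \Pr\bigl(\nrm*{X}_{\injt_4} > s\bigr)\,ds$. I would split this integral at $s^{\star} = C d^{3/4}\log^{1/4}d$; the contribution below $s^{\star}$ is bounded trivially by $s^{\star}$, while the contribution above $s^{\star}$ is controlled using the subgaussian-type tail from the cited theorem and amounts to $O(1)$, which is absorbed. The main (and only) ``obstacle'' is the definitional reconciliation in the first step; there is no substantive new probabilistic argument required, which is why the statement is advertised as a corollary.
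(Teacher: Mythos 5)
The paper does not prove this lemma at all: it is stated as a pure citation to Hopkins--Shi--Steurer and no derivation is supplied. Your reconstruction of what the "corollary" actually requires is sensible and correct in outline---the two ingredients you list (reconciling the norm definitions and passing from a tail bound to an expectation bound if needed) are exactly what has to be checked, and the tail-integration step is standard and clean since $\nrm*{\cdot}_{\injt_4}\leq\nrm*{\cdot}_{\frobnorm}$ already gives a subgaussian tail to dominate the integral above the threshold.

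One caveat: you characterize the reconciliation between the symmetric formulation (pseudo-moments of a single $x$ with $\langle T, x^{\otimes 3}\rangle$) and the paper's asymmetric $\nrm*{\cdot}_{\injt_4}$ (three indeterminates $x,y,z$ with separate norm constraints) as ``purely cosmetic,'' and you gesture at \pref{prop:sos_dual} and \pref{prop:injective_ub} as the mechanism. Neither of those propositions actually performs a symmetric-to-asymmetric translation, and the translation is not quite free: the clean way to do it is to run the Hopkins--Shi--Steurer \emph{argument} rather than the \emph{statement} in the three-variable setting, i.e., pseudo-Cauchy--Schwarz to reduce to the spectral norm of the $d^2\times d^2$ matrix $A_{(j,k),(j',k')}=\sum_i X_{ijk}X_{ij'k'}$, split off the diagonal $D$, and control $\nrm*{A-D}_{\specnorm}$ and $\nrm*{D}_{\infty}$ separately. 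That is precisely the calculation the paper carries out for the completion measurement model in the proof of \pref{lem:tensor_completion_rademacher}, and for i.i.d.\ Gaussian entries the same decomposition gives $\nrm*{A-D}_{\specnorm}=\wt{O}(d^{3/2})$ and $\max_{j,k}\sum_i X_{ijk}^2 = O(d)$, hence $\nrm*{X}_{\injt_4}\leq \sqrt{\nrm*{A-D}_{\specnorm}+\nrm*{D}_{\infty}} = O(d^{3/4}\log^{1/4}d)$. So your plan is correct and would compile into a proof, but you should replace ``purely cosmetic'' with a short explicit pseudo-Cauchy--Schwarz/flattening step rather than appealing to propositions that do not directly deliver it.
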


\begin{proof}[\pfref{thm:tensor_sensing_re}]
This is an immediate consequence of \pref{lem:gaussian_distance_conc}
and \pref{lem:sos_gaussian}, along with the duality of
$\nrm*{\cdot}_{\injt_4}$ and $\nrm*{\cdot}_{\nuct_4}$.
\end{proof}

In the remainder of the section we prove
\pref{lem:gaussian_distance_conc}. The result follows from fairly
standard techniques (e.g. \cite{wainwright2019high}), but we include
the proof for completeness. We first restate some basic results on
gaussian concentration.

\begin{lemma}[Concentration for Lipschitz functions \citep{milman1986asymptotic}]
\label{lem:gaussian_lipschitz}
Let $Z\in\bbR^{n}$ have entries drawn i.i.d. from $\cN(0\midsem{}1)$,
and let $f:\bbR^{n}\to\bbR$ be $L$-Lipschitz with respect to the
$\ls_{2}$ norm. Then for all $t\geq{}0$,
\begin{equation*}
\Pr\prn*{\abs*{f(Z)-\En{}f(Z)} \geq{} t} \leq{} 2e^{-\frac{t^{2}}{2L^{2}}}.
\end{equation*}
\end{lemma}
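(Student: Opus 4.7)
The plan is to reduce the concentration statement to a sub-Gaussian bound on the moment generating function
\[
\En \exp\prn*{\lambda\prn*{f(Z)-\En f(Z)}} \leq \exp(\lambda^{2}L^{2}/2) \qquad \forall \lambda \in \bbR,
\]
and then apply the Chernoff-Markov inequality with the optimal choice $\lambda = t/L^{2}$, combining one-sided bounds for $f$ and $-f$ via a union bound to obtain the stated two-sided estimate. A preliminary mollification (convolve $f$ with a vanishing-variance Gaussian) lets me assume $f$ is smooth with $\nrm*{\grad f(x)}_{2}\leq L$ pointwise, so the gradient manipulations below are unambiguous, and the bound for the original $f$ then follows by a routine limiting argument.

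The heart of the proof is Herbst's argument via the Gaussian log-Sobolev inequality of Gross, which I would take as a black box: for any smooth $g:\bbR^{n}\to\bbR$,
\[
\En\brk*{g^{2}\log g^{2}} - \En\brk*{g^{2}}\log \En\brk*{g^{2}} \leq 2\En\nrm*{\grad g}_{2}^{2},
\]
where the expectation is over $Z\sim\cN(0\midsem I_{n})$. Applying this with $g=e^{\lambda f/2}$, so that $\nrm*{\grad g}_{2}^{2} = (\lambda^{2}/4)\nrm*{\grad f}_{2}^{2}e^{\lambda f}\leq (\lambda^{2}L^{2}/4)e^{\lambda f}$, and writing $H(\lambda)\ldef\log\En e^{\lambda f(Z)}$, the inequality rearranges to $\lambda H'(\lambda)-H(\lambda)\leq \lambda^{2}L^{2}/2$, i.e.\ $(H(\lambda)/\lambda)'\leq L^{2}/2$. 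Integrating from $0$, with $\lim_{\lambda\to 0}H(\lambda)/\lambda=\En f(Z)$, delivers exactly the sub-Gaussian MGF bound above; the $\lambda<0$ case follows symmetrically.

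The Chernoff step is routine: $\Pr\prn*{f(Z)-\En f(Z)\geq t}\leq e^{-\lambda t+\lambda^{2}L^{2}/2}$, optimized at $\lambda = t/L^{2}$, gives the one-sided bound $e^{-t^{2}/(2L^{2})}$, and applying the same argument to $-f$ (also $L$-Lipschitz) together with a union bound produces the stated inequality. The only genuinely nontrivial ingredient is the Gaussian log-Sobolev inequality itself, whose standard proofs use either the Bakry-\'Emery $\Gamma_{2}$ criterion for the Ornstein-Uhlenbeck semigroup or tensorization of the one-dimensional case; this is the main technical obstacle if one insists on a self-contained derivation. A cheaper alternative, bypassing log-Sobolev entirely, is the Maurey-Pisier Gaussian interpolation $f(X)-f(Y)=\int_{0}^{\pi/2}\tri*{\grad f(X_{\theta}),X_{\theta}'}d\theta$ with $X_{\theta}=X\cos\theta+Y\sin\theta$, which yields a sub-Gaussian MGF with constant $\pi^{2}L^{2}/8$ in place of $L^{2}/2$---weaker universal constants, but the same qualitative concentration that would suffice for the downstream applications in this paper.
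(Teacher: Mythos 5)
The paper itself offers no proof of \pref{lem:gaussian_lipschitz}; it is stated as a cited result from \cite{milman1986asymptotic}, where the argument goes through Gaussian isoperimetry (equivalently, the spherical isoperimetric inequality together with Poincar\'e's lemma). Your derivation is correct and complete, but takes a genuinely different route: the Herbst argument via the Gross log-Sobolev inequality. All the steps check out. Mollification preserves the Lipschitz constant and justifies the pointwise gradient bound $\nrm*{\grad f}_2\leq L$. The substitution $g=e^{\lambda f/2}$ into $\mathrm{Ent}_\gamma(g^2)\leq 2\En\nrm*{\grad g}_2^2$ gives $\mathrm{Ent}_\gamma(e^{\lambda f})\leq (\lambda^2L^2/2)\,\En e^{\lambda f}$, which after dividing by $\En e^{\lambda f}$ becomes the differential inequality $\lambda H'(\lambda)-H(\lambda)\leq \lambda^2L^2/2$, i.e.\ $(H(\lambda)/\lambda)'\leq L^2/2$; integrating with the boundary value $H(\lambda)/\lambda\to \En f$ at $\lambda\to 0$ yields the sub-Gaussian MGF estimate, and the Chernoff optimization at $\lambda=t/L^2$ plus the symmetric argument for $-f$ gives the two-sided bound with the factor of $2$. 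The trade-off between the two approaches is the usual one: the isoperimetric route is geometric and arguably more elementary in its prerequisites but delivers the concentration inequality somewhat indirectly (isoperimetry $\Rightarrow$ concentration of Lipschitz functions about the median, then median-to-mean), while the log-Sobolev/Herbst route reaches the mean-centered, constant-$1/2$ sub-Gaussian bound directly and tensorizes cleanly, at the cost of taking the Gaussian LSI as a black box. Your closing remark about the Maurey--Pisier interpolation is also correct: it avoids LSI entirely but only yields the weaker constant $\pi^2/8$ in the exponent, which would still be adequate for every downstream use of \pref{lem:gaussian_lipschitz} in this paper (\pref{thm:tensor_sensing_re} and the truncation step in the proof of \pref{thm:tensor_sensing}), since those arguments are insensitive to the precise constant.
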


\begin{lemma}[Gordon's Inequality \citep{davidson2001local}]
\label{lem:gordon}
Let $\crl*{Z_{a,b}}$ and $\crl*{Y_{a,b}}$ be zero-mean gaussian processes indexed by $A\times{}B$. Suppose that
\[
\En\prn*{Z_{a,b}-Z_{a',b'}}^{2} \leq{} \En\prn*{Y_{a,b}-Y_{a',b'}}^{2}\quad\quad\text{for all $(a,b),(a',b')\in{}A\times{}B$}
\]
and 
\[
\En\prn*{Z_{a,b}-Z_{a,b'}}^{2} = \En\prn*{Y_{a,b}-Y_{a,b'}}^{2}\quad\quad\text{for all $a\in{}A$, $b,b'\in{}B$}.
\]
Then
\[
\En\sup_{a\in{}A}\inf_{b\in{}B}Z_{a,b} \leq{} \En\sup_{a\in{}A}\inf_{b\in{}B}Y_{a,b}.
\]

\end{lemma}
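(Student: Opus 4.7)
The statement is the classical Gaussian min-max comparison inequality of Gordon, for which the authors simply cite \cite{davidson2001local}. Since a full proof is standard, I will sketch the canonical interpolation-based approach one would write if one had to supply it, and flag where the hypothesis structure is used.

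The plan is to prove the inequality via a Gaussian interpolation argument on a smooth surrogate for $\sup_a\inf_b$, combined with Gaussian integration by parts (Stein's lemma). First I would reduce to the case where $A$ and $B$ are finite of sizes $m$ and $\ell$ respectively; the general case follows by approximation since both sides are continuous in the processes and the sup/inf can be taken over dense finite subsets before passing to the limit. Next I would replace the nonsmooth functional $\Phi(w)=\sup_{a}\inf_{b}w_{a,b}$ by the soft max-min
\[
\Phi_{\beta}(w) \;=\; \frac{1}{\beta}\log\sum_{a\in A}\prn[\Big]{\sum_{b\in B} e^{-\beta w_{a,b}}}^{-1},
\]
which converges pointwise to $\Phi$ as $\beta\to\infty$ and is smooth in $w\in\bbR^{m\times\ell}$.

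The main step is to interpolate between the two processes. I would assume $Y$ and $Z$ are independent (which is WLOG by re-coupling on a product space) and define
\[
W_{a,b}(t) \;=\; \sqrt{t}\,Y_{a,b} \;+\; \sqrt{1-t}\,Z_{a,b}, \qquad t\in[0,1],
\]
so that $W(0)=Z$ and $W(1)=Y$ in distribution. Setting $\varphi(t)=\En\,\Phi_{\beta}(W(t))$, I would compute $\varphi'(t)$ by chain rule and apply Gaussian integration by parts to each term $\partial_{a,b}\Phi_{\beta}(W(t))\cdot W_{a,b}'(t)$. The resulting expression is a sum over pairs $(a,b),(a',b')$ with weights of the form $\En\brk{\partial_{a,b}\Phi_{\beta}(W(t))\,\partial_{a',b'}\Phi_{\beta}(W(t))}$, multiplied by covariance differences $\En\brk{Y_{a,b}Y_{a',b'}} - \En\brk{Z_{a,b}Z_{a',b'}}$.

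The crux is then to show $\varphi'(t)\ge 0$, and this is precisely where the asymmetric hypotheses enter: one rewrites the covariance differences in terms of the squared-increment differences $\En(Y_{a,b}-Y_{a',b'})^{2}-\En(Z_{a,b}-Z_{a',b'})^{2}$, which are nonnegative by the first hypothesis, and $\En(Y_{a,b}-Y_{a,b'})^{2}-\En(Z_{a,b}-Z_{a,b'})^{2}$, which vanishes by the second hypothesis. The key sign computation---this will be the main obstacle, and is what separates Gordon from Slepian---uses that $\Phi_{\beta}$ is convex in each coordinate $w_{a,b}$ for fixed $a$ (so the diagonal-in-$a$ cross partials have a definite sign) and concave-like across different $a$'s (so the off-diagonal-in-$a$ cross partials have the opposite sign). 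Matching these signs with the two hypotheses yields $\varphi'(t)\ge 0$, hence $\En\Phi_{\beta}(Z)=\varphi(0)\le\varphi(1)=\En\Phi_{\beta}(Y)$. Sending $\beta\to\infty$ by monotone/dominated convergence gives $\En\sup_{a}\inf_{b}Z_{a,b}\le\En\sup_{a}\inf_{b}Y_{a,b}$, as desired.
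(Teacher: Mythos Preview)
Your identification is correct: the paper does not prove this lemma at all, it simply cites \cite{davidson2001local} and uses the result as a black box in the proof of \pref{lem:gaussian_distance_conc}. Your sketch of the interpolation argument is the standard proof and is accurate in its outline, so there is nothing to compare against.
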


\begin{proof}[\pfref{lem:gaussian_distance_conc}]\emph{Part 1: Bound at a single scale.}

Define $\cB(\tau) = \crl*{\Delta\in\bbR^{p}\mid{}\nrm*{\Delta}_{\frobnorm}=1,\nrm*{\Delta}_{\star}\leq{}\tau}$. We will prove that with probability at least $1-2e^{-\frac{nt^{2}}{2}}$,
\begin{equation}
\label{eq:sensing_conc_single_scale}
\frac{1}{\sqrt{n}}\nrm*{\dataop(\Delta)}_{2} \geq{} \sqrt{\frac{2}{\pi}} - \frac{\psi\tau}{\sqrt{n}}-t,\quad\quad\forall{}\Delta\in\cB(\tau).
\end{equation}
We will prove a lower bound on the random variable
$\min_{\Delta\in\cB(\tau)}\frac{1}{\sqrt{n}}\nrm*{\dataop(\Delta)}_{2}$,
which is equivalent to providing an upper bound on the random variable
\[
S_{n}(\tau) = -\inf_{\Delta\in\cB(\tau)}\frac{1}{\sqrt{n}}\nrm*{\dataop(\Delta)}_{2}
= -\inf_{\Delta\in\cB(\tau)}\sup_{\nrm*{u}_{2}=1}\frac{1}{\sqrt{n}}\tri*{u,\dataop(\Delta)}
= \sup_{\Delta\in\cB(\tau)}\inf_{\nrm*{u}_{2}=1}\frac{1}{\sqrt{n}}\tri*{u,\dataop(\Delta)}.
\]
As a starting point, we show how to upper bound the expectation
$\En\brk*{S_n(\tau)}$. Define
$Z_{\Delta,u}=\frac{1}{\sqrt{n}}\tri*{u,\dataop(\Delta)}$, so that
$S_{n}(\tau)=\sup_{\Delta\in\cB(\tau)}\inf_{\nrm*{u}_{2}=1}Z_{\Delta,u}$. Note
that $Z_{\Delta,u}$ is a gaussian process with variance $n^{-1}$,
since  $\nrm*{\Delta}_{\frobnorm}=1$. We now define a new gaussian process that will serve as an upper bound through Gordon's inequality. Let $g\in\bbR^{p}$ and $h\in\bbR^{n}$ be standard gaussian random variables, and define
\[
Y_{\Delta,u} = \frac{1}{\sqrt{n}}\tri*{g,\Delta} + \frac{1}{\sqrt{n}}\tri*{h,u}.
\]
Note that for any $(\Delta,u)$ and $(\Delta',u')$ we have 
\[
\En(Y_{\Delta,u}-Y_{\Delta',u'})^{2} = \frac{1}{n}\nrm*{\Delta-\Delta'}_{\frobnorm}^{2} + \frac{1}{n}\nrm*{u-u'}_{2}^{2}.
\]
Interpreting $\Delta$ and $\Delta'$ as vectors in $\bbR^{d^{3}}$, we also have
\begin{align*}
\En(Z_{\Delta,u}-Z_{\Delta',u'})^{2} &= \frac{1}{n}\nrm*{u\Delta^{\trn}-u'\Delta'^{\trn}}_{\frobnorm}^{2} \\
&= \frac{1}{n}\nrm*{\Delta-\Delta'}_{\frobnorm}^{2} + \frac{1}{n}\nrm*{u-u'}_{2}^{2}
+ \frac{1}{n}\tri*{(u-u')\Delta^{\trn},u'(\Delta-\Delta')^{\trn}} \\
&= \frac{1}{n}\nrm*{\Delta-\Delta'}_{\frobnorm}^{2} + \frac{1}{n}\nrm*{u-u'}_{2}^{2}
+ \frac{1}{n}(\tri{u,u'}-1)(1-\tri{\Delta,\Delta'}) \\
&\leq{} \frac{1}{n}\nrm*{\Delta-\Delta'}_{\frobnorm}^{2} + \frac{1}{n}\nrm*{u-u'}_{2}^{2},
\end{align*}
where we have used that
$\nrm*{u}_{2}=\nrm*{u'}_{2}=\nrm*{\Delta}_{\frobnorm}=\nrm*{\Delta'}_{\frobnorm}=1$. It
is also easily seen from the representation above that for any triple
$(\Delta,u,u')$ we have equality: $\En(Z_{\Delta,u}-Z_{\Delta,u'})^{2}=\En(Y_{\Delta,u}-Y_{\Delta,u'})^{2}$.
This means that the preconditions of \pref{lem:gordon} are satisfied, and so
\begin{align*}
\En\brk*{S_n(\tau)} &\leq{} \En\sup_{\Delta\in\cB(\tau)}\inf_{\nrm*{u}_{2}=1}Y_{\Delta,u} \\
&= \frac{1}{\sqrt{n}}\En_{g}\sup_{\Delta\in\cB(\tau)}\tri*{g,\Delta} + \En_{h}\frac{1}{\sqrt{n}}\min_{\nrm*{u}_{2}=1}\tri*{h,u} \\
&= \frac{\tau}{\sqrt{n}}\cdot{}\En_{g}\nrm*{g} - \frac{1}{\sqrt{n}}\En_{h}\nrm*{h}_{2}\\
&\leq{} \frac{\psi\tau}{\sqrt{n}} - \sqrt{\frac{2}{\pi}}.
\end{align*}
The provides the desired upper bound in expectation. To establish the
high probability result we appeal to gaussian concentration for
Lipschitz functions. Let $\bX\in\bbR^{n\times{}d^{3}}$ denote the sequence of
measurements $X_1,\ldots,X_n$, interpreted as a matrix with vectorized
measurements as rows. Define $f:\bbR^{n\times{}d^{3}}\to\bbR$ via $f(\bX) =
\min_{\Delta\in\cB(\tau)}\frac{1}{\sqrt{n}}\nrm*{\bX\Delta}_{2}$. Observe
that we have
\[
\abs*{f(\bX) - f(\bX')} \leq{} \sup_{\Delta\in\cB(\tau)}\frac{1}{\sqrt{n}}\nrm*{(\bX-\bX')\Delta}_{2}
\leq{} \frac{1}{\sqrt{n}}\nrm*{\bX-\bX'}_{\op}
\leq{} \frac{1}{\sqrt{n}}\nrm*{\bX-\bX'}_{F},
\]
and so $f$ is $n^{-\frac{1}{2}}$-Lipschitz with respect to
$\ls_{2}$. \pref{lem:gaussian_lipschitz} therefore implies that 
\[
\Pr\prn*{\abs*{S_{n}(\tau)-\En{}S_n(\tau)} \geq{} t} \leq{} 2e^{-\frac{nt^{2}}{2}}.
\]
In other words, \pref{eq:sensing_conc_single_scale} holds.

\emph{Part 2: Bound at all scales.}
We will show that for any $\veps<\sqrt{2/\pi}$, with probability at least $1-2e^{-\frac{n}{32}}/\prn{1-e^{-\frac{n\veps^{2}}{8}}}$,
\begin{equation}
\label{eq:sensing_conc_multi_scale}
\frac{1}{\sqrt{n}}\nrm*{\dataop(\Delta)}_{2} \geq{} \sqrt{\frac{2}{\pi}}-\veps-\frac{4\psi}{\sqrt{n}}\nrm*{\Delta}_{\star},\quad\quad\forall{}\Delta:\nrm*{\Delta}_{\frobnorm}=1.
\end{equation}
Define $\cB(\tau_{\ls},\tau_{u})  =
\crl*{\Delta\in\bbR^{p}\mid{}\nrm*{\Delta}_{\frobnorm}=1,
  \tau_{\ls}\leq\frac{\psi}{\sqrt{n}}\nrm*{\Delta}\leq{}\tau_{u}}$.
Set $\mu=\veps/2$ and consider the classes $\cB(0,\mu)$ and
$\cB(2^{i-1}\mu,2^{i}\mu)$ for $i\in\bbN$. Note that if equation
\pref{eq:sensing_conc_multi_scale} fails to hold and $\Delta\in\cB(0,\mu)$ then
\[
\frac{1}{\sqrt{n}}\nrm*{\dataop(\Delta)}_{2} \leq \sqrt{\frac{2}{\pi}}-\veps-\frac{4\psi}{\sqrt{n}}\nrm*{\Delta}_{\star}\leq{}\sqrt{\frac{2}{\pi}}-\veps=\sqrt{\frac{2}{\pi}}-2\mu.
\]
Furthermore, if equation \pref{eq:sensing_conc_multi_scale} fails to hold and $\Delta\in\cB(2^{i-1}\mu,2^{i}\mu)$, then 
\[
\frac{1}{\sqrt{n}}\nrm*{\dataop(\Delta)}_{2} \leq \sqrt{\frac{2}{\pi}}-\veps-\frac{4\psi}{\sqrt{n}}\nrm*{\Delta}_{\star}
\leq \sqrt{\frac{2}{\pi}}-2\cdot2^{i}\mu.
\]
Our development in part 1 of the proof implies that for any fixed $\tau_{\ls}\leq{}\tau_{u}$, with probability at least $1-2e^{-\frac{n}{32}}e^{-\frac{\tau_{u}^{2}}{2}}$,
\begin{equation}
\label{eq:sensing_conc_multi_scale_i}
\frac{1}{\sqrt{n}}\nrm*{\dataop(\Delta)}_{2} \geq{} \sqrt{\frac{2}{\pi}}-2\tau_{u},
\quad\quad\forall{}\Delta\in\cB(\tau_{\ls},\tau_{u}).
\end{equation}
Thus, by a union bound, we get that with probability at least
$1-2e^{-\frac{n}{32}}\sum_{i=0}^{\infty}e^{-\frac{2^{2i}n\veps^{2}}{8}}$,
or conservatively at least $1-2e^{-\frac{n}{32}}/\prn{1-e^{-\frac{n\veps^{2}}{8}}}$,
\[
\frac{1}{\sqrt{n}}\nrm*{\dataop(\Delta)}_{2} \geq{} \sqrt{\frac{2}{\pi}}-2\cdot2^{i}\mu,
\quad\quad\forall{}\Delta\in\cB(2^{i-1}\mu,2^{i}\mu),\quad\forall{}i\in\bbN,
\]
and
\[
\frac{1}{\sqrt{n}}\nrm*{\dataop(\Delta)}_{2} \geq{} \sqrt{\frac{2}{\pi}}-2\mu,
\quad\quad\forall{}\Delta\in\cB(0,\mu),
\]
or in other words, equation \pref{eq:sensing_conc_multi_scale} holds.

We extend the guarantee in equation \pref{eq:sensing_conc_multi_scale} to arbitrary $\Delta\in\bbR^{p}$ by rescaling so that $\nrm*{\Delta}_{\frobnorm}=1$ and then exploiting homogeneity to get
\[
\frac{1}{\sqrt{n}}\nrm*{\dataop(\Delta)}_{2} \geq{} \prn*{\sqrt{\frac{2}{\pi}}-\veps}\nrm*{\Delta}_{\frobnorm}-\frac{4\psi}{\sqrt{n}}\nrm*{\Delta}_{\star},\quad\forall\Delta\in\bbR^{p}.
\]
\end{proof}

\subsection{Proofs of main results}
\label{app:main_theorems}

\begin{proof}[\pfref{thm:tensor_completion}]
We prove the theorem by appealing to the generic result of \pref{thm:generic_offset}
using norms $\nrm*{\cdot}=\nrm*{\cdot}_{\injt_4}$ and
$\nrm*{\cdot}_{\star}=\nrm*{\cdot}_{\nuct_4}$. Let $T^{\star}$ be an
arbitrary rank-$r$ orthogonal tensor with $\nrm*{T^{\star}}_{\nuct_6}=\tau$
and $\nrm*{T^{\star}}_{\infty}\leq{}R$.

First, observe that all elements $T\in\cT$ have
$\nrm*{T}_{\nuct_6}\leq{}\nrm*{\ts}_{\nuct_6}$. Consequently,
\pref{thm:sos_norm_comparison} implies that all $\Delta\in\cT-\ts$
satisfy
\begin{equation}
\label{eq:tensor_comp_proof_comparison}
\nrm*{\Delta}_{\nuct_4}\leq{}68r\cdot\nrm*{\Delta}_{\frobnorm},
\end{equation}

which establishes \propthree with $\kappa=O(r\cdot{}d^{3/2})$.

To establish \proptwo we appeal to \pref{thm:tensor_completion_re},
which implies that for any $\veps<1/2$, with probability at least $1-\delta$,
all $\Delta\in\cT-\ts$ satisfy
\[
\nrm*{\Sigma^{1/2}\Delta}_{\frobnorm}^{2}\leq \frac{(1+2\veps)}{n}\nrm*{\dataop{}(\Delta)}_{2}^{2} + 
O\prn*{
R\nrm*{\Delta}_{\nuct_4}\sqrt{\frac{\log^{6}d}{nd^{3/2}}} 
+ \frac{R^{2}\log(\log^{2}(d^{3/2}\sqrt{n})/\delta)}{\veps{}n}
}.
\]
Using equation \pref{eq:tensor_comp_proof_comparison} this is upper bounded by 
\begin{align*}
\nrm*{\Sigma^{1/2}\Delta}_{\frobnorm}^{2}
&\leq \frac{(1+2\veps)}{n}\nrm*{\dataop{}(\Delta)}_{2}^{2} + O\prn*{
R\nrm*{\Delta}_{\frobnorm}\sqrt{\frac{r^{2}\log^{6}d}{nd^{3/2}}} 
+ \frac{R^{2}\log(\log^{2}(d^{3/2}\sqrt{n})/\delta)}{\veps{}n}
} \\
&= \frac{(1+2\veps)}{n}\nrm*{\dataop{}(\Delta)}_{2}^{2} + O\prn*{
R\nrm*{\Sigma^{1/2}\Delta}_{\frobnorm}\sqrt{\frac{r^{2}d^{3/2}\log^{6}d}{n}} 
+ \frac{R^{2}\log(\log^{2}(d^{3/2}\sqrt{n})/\delta)}{\veps{}n}
}.
\end{align*}
Using the AM-GM inequality, this is further upper bounded by
\[
\nrm*{\Sigma^{1/2}\Delta}_{\frobnorm}^{2} \leq 
\frac{(1+2\veps)}{n}\nrm*{\dataop{}(\Delta)}_{2}^{2} +
\veps\nrm*{\Sigma^{1/2}\Delta}_{\frobnorm}^{2} + O\prn*{
R^{2}\frac{r^{2} d^{3/2}\log^{6}d}{n} 
+ \frac{R^{2}\log(\log^{2}(d^{3/2}\sqrt{n})/\delta)}{\veps{}n}
}.
\]
Rearranging, this is equivalent to
\[
(1-\veps)\nrm*{\Sigma^{1/2}\Delta}_{\frobnorm}^{2} \leq 
\frac{(1+2\veps)}{n}\nrm*{\dataop{}(\Delta)}_{2}^{2} + O\prn*{
R^{2}\frac{r^{2} d^{3/2}\log^{6}d}{n} 
+ \frac{R^{2}\log(\log^{2}(d^{3/2}\sqrt{n})/\delta)}{\veps{}n}
},
\]
and since $\veps<1/2$ this implies
\[
\nrm*{\Sigma^{1/2}\Delta}_{\frobnorm}^{2} \leq 
\frac{(1+2\veps)^{2}}{n}\nrm*{\dataop{}(\Delta)}_{2}^{2} +
O\prn*{
R^{2}\frac{r^{2} d^{3/2}\log^{6}d}{n} 
+ \frac{R^{2}\log(\log^{2}(d^{3/2}\sqrt{n})/\delta)}{\veps{}n}
}.
\]
Making the somewhat arbitrary choice of $\veps=1/100$,  and
simplifying the right-hand-side, we get
\[
\nrm*{\Sigma^{1/2}\Delta}_{\frobnorm}^{2} \leq 
\frac{1.1}{n}\nrm*{\dataop{}(\Delta)}_{2}^{2} +
O\prn*{
R^{2}\frac{r^{2} d^{3/2}\log^{6}d+\log(1/\delta)}{n} 
}.
\]
So we can take $c=1.1$ and $\gamma_n = O\prn*{
R^{2}\frac{r^{2} d^{3/2}\log^{6}d+\log(1/\delta)}{n} 
}$.

Finally, we establish \propone. \pref{lem:talagrand} establishes that
with probability at least $1-\delta$,
\[
\nrm*{\sum_{t=1}^{n}\xi_tX_t-\En\brk*{\xi{}X}}_{\injt_4}
\leq{} 4\En_{S}\En_{\eps}\nrm*{\sum_{t=1}^{n}\eps_t(\tri*{\ts,X_t}-Y_t)X_t}_{\injt_4}
+ \sqrt{
  2\sigma^{2}n\log(1/\delta)}
+ 4R\log(1/\delta),
\]
where
$\sigma^{2}=\sup_{\nrm*{T}_{\nuct_4}\leq{}1}\En\brk*{\tri*{T,X}^{2}(\tri*{\ts,X_t}-Y_t)^{2}}\leq{}\sup_{\nrm*{T}_{\nuct_4}\leq{}1}\frac{4R^{2}}{d^{3}}\nrm*{T}_{\frobnorm}^{2}\leq{}\frac{4R^{2}}{d^{3}}$. Using
\pref{lem:tensor_completion_rademacher} with the 
standard in-expectation Lipschitz contraction lemma (e.g.,
\cite{ledoux1991probability}), we have
\begin{align*}
4\En_{S}\En_{\eps}\nrm*{\sum_{t=1}^{n}\eps_t(\tri*{\ts,X_t}-Y_t)X_t}_{\injt_4}
\leq{}8R \En_{S}\En_{\eps}\nrm*{\sum_{t=1}^{n}\eps_tX_t}_{\injt_4}
\leq{}O\prn*{
R\sqrt{\frac{n\log^{6}d}{d^{3/2}}} + R\sqrt{\log{}^{3}d}
}. 
\end{align*}
Since $n=\Omega(d^{3/2})$,  these bounds together imply that we can take
$M=O\prn*{R\sqrt{\frac{\log^{6}d+\log(1/\delta)}{d^{3/2}}}}$.

\pref{thm:generic_offset} now implies the claimed result.

\end{proof}

\begin{proof}[\pfref{thm:tensor_sensing}]
As in the tensor completion case, we prove the theorem by appealing to \pref{thm:generic_offset}
using norms $\nrm*{\cdot}=\nrm*{\cdot}_{\injt_4}$ and $\nrm*{\cdot}_{\star}=\nrm*{\cdot}_{\nuct_4}$.

Let $T^{\star}$ be an arbitrary orthogonal tensor with
$\nrm*{T^{\star}}_{\nuct_6}=\tau$, $R(\ts)=R$, and $r(\ts)=r$. \pref{thm:sos_norm_comparison}
implies that all $\Delta\in\cT-\ts$ satisfy
\begin{equation}
\nrm*{\Delta}_{\nuct_4}\leq{}68r\cdot\nrm*{\Delta}_{\frobnorm}\label{eq:comparison_tensor_sensing}
\end{equation}

 and thus, since $\Sigma=I$, we may take $\kappa=68r$ to establish
\propthree.

To establish \proptwo we appeal to \pref{thm:tensor_sensing_re}. This
implies that for any $\veps<\sqrt{2/\pi}$, with probability at least
$1-2e^{-\frac{n}{32}}/\prn{1-e^{-\frac{n\veps^{2}}{8}}}$,
\[
\frac{1}{\sqrt{n}}\nrm*{\dataop(\Delta)}_{2} \geq{}  (\sqrt{2/\pi}-\veps)\nrm*{\Delta}_{\frobnorm}-\frac{Cd^{3/4}\log^{1/4}d}{\sqrt{n}}\cdot\nrm*{\Delta}_{\nuct_{k}},
\]
where $C>0$ is some absolute constant. Using equation \pref{eq:comparison_tensor_sensing}, we have that
for all $\Delta\in\cT-\ts$, conditioned on the event above,
\[
\frac{1}{\sqrt{n}}\nrm*{\dataop(\Delta)}_{2} \geq{}  (\sqrt{2/\pi}-\veps)\nrm*{\Delta}_{\frobnorm}-\frac{C'r d^{3/4}\log^{1/4}d}{\sqrt{n}}\cdot\nrm*{\Delta}_{\frobnorm}.
\]
This means that when
$n=\Omega(r^{2}(T^{\star})d^{3/2}\log^{1/2}d/\veps)$, we have
\[
\nrm*{\Delta}_{\frobnorm}^{2}\leq{}\frac{1}{ (\sqrt{2/\pi}-2\veps)^{2}}\frac{1}{n}\nrm*{\dataop(\Delta)}_{2}^{2}.
\]
It suffices to set $\veps=1/40$ to get
\[
\nrm*{\Delta}_{\frobnorm}^{2}\leq{}
  \frac{1.8}{n}\nrm*{\dataop(\Delta)}_{2}^{2}.
\]
So, simplifying,  \proptwo is satisfied with $c=1.8$ and $\gamma_n=0$ with
probability at least $1-e^{-\frac{n}{64}}$ when $n=\Omega(r^{2}(T^{\star})d^{3/2}\log^{1/2}d)$.

To establish \propone we use \pref{lem:sos_gaussian}, but some care needs to be taken to establish
that this applies with high probability. Pick a constant
$\tau\geq{}0$, and observe via \pref{lem:gaussian_lipschitz} that
\[
\Pr\prn*{\nrm*{X}_{F}\geq{}d^{3/2}+\tau} \leq{} 2e^{-\frac{\tau^{2}}{2}}.
\]
Define a truncated sequence
$X'_t=X_t\ind\crl*{\nrm*{X_t}_{\frobnorm}\leq{}d^{3/2}+\tau}$, and set
  $\delta_0=2ne^{-\frac{\tau^{2}}{2}}$. Observe that with probability at least
  $1-\delta_0$, $X_t=X'_t$ for all $t$, and so
\[
\nrm*{\sum_{t=1}^{n}\xi_tX_t-\En\brk*{\xi{}X}}_{\injt_4}
\leq{} \nrm*{\sum_{t=1}^{n}\xi_tX_t'-\En\brk*{\xi{}X'}}_{\injt_4}.
\]
We apply \pref{lem:talagrand} to the truncated complexity, which
implies that with probability at least $1-(\delta+\delta_0)$,
\begin{align*}
&\nrm*{\sum_{t=1}^{n}\xi_tX'_t-\En\brk*{\xi{}X'}}_{\injt_4}\\
&\leq{} 4\En_{X_{1:n}}\En_{\eps}\nrm*{\sum_{t=1}^{n}\eps_t(\tri*{\ts,X_t}-Y_t)X'_t}_{\injt_4}
+ O\prn*{\sqrt{
  \sigma^{2}n\log(1/\delta)}
+ R(d^{3/2}+\tau)\cdot\log(1/\delta)},
\end{align*}
where
$\sigma^{2}=\sup_{\nrm*{T}_{\nuct_4}\leq{}1}\En\brk*{\tri*{T,X'}^{2}(\tri*{\ts,X_t}-Y_t)^{2}}\leq{}\sup_{\nrm*{T}_{\nuct_4}\leq{}1}\nrm*{T}_{\frobnorm}^{2}\cdot{}4R^{2}\leq{}4R^{2}$. We
now apply Lipschitz contraction to bound the in-expectation Rademacher
complexity as
\begin{align*}
 4\En_{X_{1:n}}\En_{\eps}\nrm*{\sum_{t=1}^{n}\eps_t(\tri*{\ts,X_t}-Y_t)X'_t}_{\injt_4}
\leq{}  8R\En_{X'_{1:n}}\En_{\eps}\nrm*{\sum_{t=1}^{n}\eps_tX'_t}_{\injt_4}.
\end{align*}
Integrating out the tail, we can bound the error due to truncation as
\[
\En_{X_{1:n}}\En_{\eps}\nrm*{\sum_{t=1}^{n}\eps_tX'_t}_{\injt_4}
\leq{} \En_{X_{1:n}}\En_{\eps}\nrm*{\sum_{t=1}^{n}\eps_tX_t}_{\injt_4}
+ n\cdot{}\int_{d^{3/2}+\tau}^{\infty}\Pr(\nrm*{X}_{F}>t)dt
\]
We have
\[
n\cdot{}\int_{d^{3/2}+\tau}^{\infty}\Pr(\nrm*{X}_{F}>t)dt
=n\cdot{}\int_{d^{3/2}+\tau}^{\infty}e^{-\frac{t^{2}}{2}}dt
\leq{}n\cdot\exp\prn*{-C\prn*{d^{3}+\tau^{2}}},
\]
for some absolute constant $C$. We pick
$\tau=(\sqrt{(d^{3/2}n+\log(1/\delta))/C})$, so that the quantity
above is $o(1)$ and $\delta_0\leq{}\delta$.
Lastly, since $X_{1:n}$ are
gaussian, \pref{lem:sos_gaussian} implies
\[
\En_{X_{1:n}}\En_{\eps}\nrm*{\sum_{t=1}^{n}\eps_tX_t}_{\injt_4}
=\sqrt{n}\cdot{}\En_{X}\nrm*{X}_{\injt_4}
\leq{}O(d^{3/4}\log^{1/4}d\sqrt{n}).
\]
When $n\geq{}d^{3/2}$, we have $d^{3/2}\leq{}d^{3/4}\sqrt{n}$, and so we
conclude that with probability at least $1-\delta$,
\[
\nrm*{\sum_{t=1}^{n}\xi_tX_t-\En\brk*{\xi{}X}}_{\injt_4}\leq{}O(Rd^{3/4}\log^{3/2}(d/\delta)\sqrt{n}),
\]
so $M=O(Rd^{3/4}\log^{3/2}(d/\delta))$.

\end{proof}


\section{Proofs from \pref{sec:lower_bounds}}
\label{app:lower_bounds}

\begin{proof}[\pfref{thm:lower_bound}]
Let $\veps>0$ be fixed and let $m=d^{3/2-\veps/2}$. 

We will generate an instance of tensor completion from the 3-XOR instance by treating the indices $(i,j,k)$ in each clause as an observed entry. Precisely, for each clause, we set the corresponding $X = e_i\tens{}e_j\tens{}e_k$, and the corresponding $Y = z_{ijk}$. 
The induced risk for each tensor $T$ in this setting is simply $L_{\cD_Z}(T)\ldef{}\frac{1}{d^{3}}\sum_{i,j,k}(T_{i,j,k}-Z_{i,j,k})^{2}$, 
where the tensor $Z\in(\pmo^{d})^{\tens{}3}$ is defined as follows: in the random case, the entries of $Z$ are selected uniformly at random from $\pmo$; in the planted case with planted assignment $a\in\pmo^d$, we start off with the tensor $a^{\tens{}3}$, then get $Z$ by flipping each coordinate independently with probability $\eta$. 

If we set $n=\Omega(m)$, then we are guaranteed to receive $\wt{\Omega}(m)$ unique clauses under with-replacement sampling, since there are at most logarithmically many repeats with inverse polynomial probability for $m=o(d^{3/2})$. With this observation, it is clear that any algorithm that takes as input the examples $S$ and outputs ``Planted'' with probability at least $1-o(1)$ when $Z$ comes from the planted distribution and ``Random'' with probability at least $1-o(1)$ when $Z$ is from the random distribution is a successful distinguisher for the planted \xor problem with $\wt{\Theta}(m)$ clauses. Going forward we assume $n=\Omega(d)$, since this is the interesting regime for distinguishing and refutation.

Now, suppose we have an algorithm that enjoys the excess risk bound \pref{eq:lower_bound_risk} for any $n$, and let $\wh{T}_{S}$ denote its output given input dataset $S$. Since $Z\in(\pmo^{d})^{\tens{}3}$ we can take $\nrm*{\wh{T}_{S}}_{\infty}\leq{}1$ without loss of generality. We will turn $\wh{T}_S$ into a successful distinguishing algorithm for the planted \xor problem as follows: Define $\gamma=1-4\eta$, and note the assumption that $\eta<1/4$ implies that $\gamma>0$. Split the sample set $S$ into halves $S_1$ and $S_2$, and let $\wh{L}_{S_1}$ and $\wh{L}_{S_2}$ denote the empirical risk on the respective halves. Let $\wh{T}_{S_1}$ be the output of the assumed algorithm on $S_1$. If $\wh{L}_{S_2}(\wh{T}_{S_1})\geq{}1-\gamma^{4}/2$ return ``Random'', else return ``Planted''.

We will show that the algorithm succeeds in both the planted and random case by analyzing the value of $\wh{L}_{S_2}(\wh{T}_{S_1})$ in each case.

\emph{Random case.} Let $S'_{2}$ be the result of removing all entries that appear in $S_1$ from $S_2$. The number of repeats is at most $O(n^2/d^{3} + \log(1/\delta))$ with probability at least $1-\delta$. It follows that with probability at least, say, $1-O(d^{-1})$,
\[
\frac{\abs*{S'_2}}{\abs*{S_2}}\geq{}1-o(1),\quad\text{and}\quad\wh{L}_{S_{2}}(\wh{T}_{S_1})\geq{}\wh{L}_{S'_{2}}(\wh{T}_{S_1}) - o(1).
\]
Observe that for every remaining example $X=e_i\tens{}e_j\tens{}e_k$ in $S'_2$, the value of $\wh{T}_{S_1}$ is statistically independent of the value of $Z_{i,j,k}$. Abbreviating $\wh{T}_{S_1}$ to $\wh{T}$, this means that we have
\[
\En_Z\brk*{\wh{L}_{S'_{2}}(\wh{T}_{S_1})}=\frac{1}{\abs*{S'_2}}\sum_{X=e_i\tens{}e_j\tens{}e_k\in{}S'_2}\En_{Z_{i,j,k}\in\pmo}\prn*{\wh{T}_{i,j,k}-Z_{i,j,k}}^{2}
= \frac{1}{\abs*{S'_2}}\sum_{X=e_i\tens{}e_j\tens{}e_k\in{}S'_2}(\wh{T}_{i,j,k})^{2} + 1 \geq{} 1.
\]
Condition on $S'_2$. Since $\wh{T}_{S_1}$ and $Z$ have bounded entries, it follows from Hoeffding's inequality that with probability at least $1-\delta$ over the choice of $Z$,
\[
\wh{L}_{S'_{2}}(\wh{T}_{S_1}) \geq{}\En_{Z}\brk*{\wh{L}_{S'_{2}}(\wh{T}_{S_1})} - c\sqrt{\frac{\log(1/\delta)}{\abs*{S'_{2}}}}, 
\]
for absolute constant $c>0$. Since $\abs*{S_2'}=\wt{\Omega}(m)$, we can use the AM-GM inequality to conclude with probability at least $1-O(d^{-1})$,
\[
\wh{L}_{S_{2}}(\wh{T}_{S_1})\geq{}\wh{L}_{S'_{2}}(\wh{T}_{S_1}) - o(1).  
\]
It follows by union bound that
\[
\wh{L}_{S_{2}}(\wh{T}_{S_1})\geq{}1 - o(1) \geq{}1-\gamma^4/100 - o(1).
\]
This proves that our strategy will indeed return ``Random'' for random instances.

\emph{Planted case.}
For any $Z$, the assumed excess risk bound implies that for any rank-$1$ tensor $T^{\star}$ with bounded entries, we have
\[
L_{\cD_{Z}}(\wh{T}_{S_1}) - L_{\cD_{Z}}(T^{\star})  \leq{} o(1),
\]
when $n=\omega(d^{3/2-\veps})$. We choose $T^{\star}=a^{\tens{}3}$, where $a\in\pmo^{d}$ is the planted assignment. Taking expectation over the flips in $Z$, we have
\[
\En_{Z}\brk*{L_{\cD_{Z}}(a^{\tens3})} = \En_{Z}\brk*{\frac{4}{d^{3}}\sum_{i,j,k}\ind\crl*{\text{$Z_{ijk}$ flipped}}}=4\eta.
\]
Applying Bernstein's inequality, we have that with probability at least $1-O(d^{-1})$ over the choice of $Z$,
\[
L_{\cD_{Z}}(a^{\tens3})\leq{} 4(1+\gamma)\eta + o(1)= 1-\gamma^{2} + o(1).
\]
Finally, we use Bernstein once more to show that the empirical loss for $S_2$ converges to the population loss, leveraging that $S_{2}$ is an independent hold-out set for $\wh{T}_{S_1}$. We have that for any choice of $S_1$, with probability at least $1-\delta$ over the draw of $S_2$,
\[
\wh{L}_{S_{2}}(\wh{T}_{S_1})
\leq{} L_{\cD_Z}(\wh{T}_{S_1}) + c_1\sqrt{\frac{L_{\cD_Z}(\wh{T}_{S_1})\log(1/\delta)}{n}} + c_2\frac{\log(1/\delta)}{n},
\]
where $c_1$ and $c_2$ are absolute constants. Applying AM-GM, this implies that $\wh{L}_{S_{2}}(\wh{T}_{S_1})\leq{}(1+\gamma^{2})L_{\cD_Z}(\wh{T}_{S_1}) + o(1)$ with probability at least $1-O(d^{-1})$, and so by combining this with the excess risk bound and the bound on $L_{\cD_Z}(a^{\tens{}3})$, we have
\[
\wh{L}_{S_{2}}(\wh{T}_{S_1}) \leq{} 1-\gamma^{4} + o(1).
\]

\end{proof}


\end{document}